\documentclass{article}
\pdfoutput=1
% if you need to pass options to natbib, use, e.g.:
%     \PassOptionsToPackage{numbers, compress}{natbib}
% before loading neurips_2019

% ready for submission
% \usepackage{neurips_2019}

% to compile a preprint version, e.g., for submission to arXiv, add add the
% [preprint] option:
%     \usepackage[preprint]{neurips_2019}

% to compile a camera-ready version, add the [final] option, e.g.:
     %\usepackage[final]{neurips_2019}
\usepackage[numbers]{natbib}
     \usepackage[final]{neurips_2019} 

% to avoid loading the natbib package, add option nonatbib:
%     \usepackage[nonatbib]{neurips_2019}

\usepackage[utf8]{inputenc} % allow utf-8 input
\usepackage[T1]{fontenc}    % use 8-bit T1 fonts
\usepackage{hyperref}       % hyperlinks
\usepackage{url}            % simple URL typesetting
\usepackage{booktabs}       % professional-quality tables
\usepackage{amsfonts}       % blackboard math symbols
\usepackage{nicefrac}       % compact symbols for 1/2, etc.
\usepackage{microtype}      % microtypography

\usepackage{mathptmx}       % Times for maths

\usepackage{color}
\usepackage{todonotes}
\usepackage{amssymb}
\usepackage{amsthm}
\usepackage{graphicx}
\usepackage{float}
\usepackage{subcaption}

\newcommand{\vb}{\mathbf{b}}
\newcommand{\vc}{\mathbf{c}}

\newcommand{\vq}{\mathbf{q}}

\newcommand{\vu}{\mathbf{u}}

\newcommand{\vx}{\mathbf{x}}
\newcommand{\vz}{\mathbf{z}}

\newcommand{\MI}{\mathbf{I}}
\newcommand{\vzero}{\mathbf{0}}
\newcommand{\valpha}{\boldsymbol{\alpha}}
\newcommand{\vmu}{\boldsymbol{\mu}}
\newcommand{\vmuh}{\hat{\boldsymbol{\mu}}}
\newcommand{\vpi}{\boldsymbol{\pi}}
\newcommand{\softmax}{\boldsymbol{\sigma}}

\newcommand{\vln}{\mathbf{ln}}

\newcommand{\Dir}{\mathsf{Dir}}
\newcommand{\sR}{\mathbb{R}}
\newcommand{\ODIR}{ODIR} %{ODir}

\usepackage{amsmath}

\newcommand{\ph}{\hat{p}}
\newcommand{\vph}{\mathbf{\hat{p}}}

\newcommand{\muh}{\hat{\mu}}

\newcommand{\MW}{\mathbf{W}}
\newcommand{\MA}{\mathbf{A}}

\newcommand{\Sk}{\Delta_{k}} %{\Delta_{k-1}}
\newcommand{\sX}{\mathcal{X}}

% MPN: For loops can be removed once we decide the final version and figures
\usepackage{forarray}
% MPN: Remove lipsum for final version
\usepackage{lipsum}

\newtheorem{theorem}{Theorem}
\newtheorem{definition}{Definition}

%\title{Parametric calibration of probabilistic multiclass classifiers}
\title{Beyond temperature scaling: \\
Obtaining well-calibrated multiclass probabilities with Dirichlet calibration}
%: For neural networks and other models}

% The \author macro works with any number of authors. There are two commands
% used to separate the names and addresses of multiple authors: \And and \AND.
%
% Using \And between authors leaves it to LaTeX to determine where to break the
% lines. Using \AND forces a line break at that point. So, if LaTeX puts 3 of 4
% authors names on the first line, and the last on the second line, try using
% \AND instead of \And before the third author name.

\author{%
  Meelis~Kull \\
  Department of Computer Science \\
  University of Tartu \\
  \texttt{meelis.kull@ut.ee} \\
  \And
  Miquel~Perello-Nieto \\
  Department of Computer Science \\
  University of Bristol \\
  \texttt{miquel.perellonieto@bris.ac.uk} \\
  %\texttt{miquel.perellonieto@bristol.ac.uk} \\
  \And
  Markus~Kängsepp \\
  Department of Computer Science \\
  University of Tartu \\
  \texttt{markus.kangsepp@ut.ee} \\
  \And
  Telmo~Silva~Filho \\
  Department of Statistics \\
  Universidade Federal da Paraíba \\
  \texttt{telmo@de.ufpb.br} \\
  \And
  Hao~Song \\
  Department of Computer Science \\
  University of Bristol\\
  \texttt{hao.song@bristol.ac.uk} \\
  \And
  Peter~Flach \\
  Department of Computer Science \\
  University of Bristol and \\
  The Alan Turing Institute\\
  \texttt{peter.flach@bristol.ac.uk} \\
  % Original Example 
  %David S. Hippocampus\thanks{Use footnote for providing further information
  %  about author (webpage, alternative address)---\emph{not} for acknowledging
  %  funding agencies.} \\
  %Department of Computer Science\\
  %Cranberry-Lemon University\\
  %Pittsburgh, PA 15213 \\
  %\texttt{hippo@cs.cranberry-lemon.edu} \\
  % examples of more authors
  % \And
  % Coauthor \\
  % Affiliation \\
  % Address \\
  % \texttt{email} \\
  % \AND
  % Coauthor \\
  % Affiliation \\
  % Address \\
  % \texttt{email} \\
  % \And
  % Coauthor \\
  % Affiliation \\
  % Address \\
  % \texttt{email} \\
  % \And
  % Coauthor \\
  % Affiliation \\
  % Address \\
  % \texttt{email} \\
}

\begin{document}

\maketitle

\begin{abstract}
Class probabilities predicted by most multiclass classifiers are uncalibrated, often tending towards over-confidence. With neural networks, calibration can be improved by temperature scaling, a method to learn a single corrective multiplicative factor for inputs to the last softmax layer. On non-neural models the existing methods apply binary calibration in a pairwise or one-vs-rest fashion. 
We propose a natively multiclass calibration method applicable to classifiers from any model class,
derived from Dirichlet distributions and generalising the beta calibration method from binary classification.
It is easily implemented with neural nets since it is equivalent to log-transforming the uncalibrated probabilities, followed by one linear layer and softmax.
Experiments demonstrate improved probabilistic predictions according to multiple measures (confidence-ECE, classwise-ECE, log-loss, Brier score) across a wide range of datasets and classifiers. 
Parameters of the learned Dirichlet calibration map 
provide insights to the biases in the uncalibrated model.

%  Class probabilities predicted by most multiclass classifiers are uncalibrated, often tending towards over-confidence. With neural networks, calibration can be improved by temperature scaling, a method to learn a single corrective multiplicative factor for inputs to the last softmax layer. On non-neural models the existing methods apply binary calibration in a pairwise or one-vs-rest fashion. 
%  We propose a natively multiclass calibration method applicable to classifiers from any model class,
%  derived from Dirichlet distributions and generalising the beta calibration method from binary classification.
%  %similarly as logistic regression can be derived from Gaussian distributions. 
%  It is easily implemented with neural nets since it is equivalent to log-transforming the uncalibrated probabilities, followed by one linear layer and softmax.
%  % It is easily implemented as a generalised linear model or as a final layer in the neural net, since it is equivalent to taking the log-transformed uncalibrated probabilities as input.
%  Experiments demonstrate improved probabilistic predictions according to multiple measures (confidence-ECE, classwise-ECE, log-loss, Brier score) across a wide range of datasets and classifiers. 
%  Parameters of the learned Dirichlet calibration map 
%  %can be interpreted and 
%  provide insights to the biases in the uncalibrated model. 
%  %
\end{abstract}
  
% Page distribution
% 0.5p abstract
% 1p intro
% 1.5p Multi-class calibration and its evaluation
% 1.5p Dirichlet calibration
% 0.5p Relation to existing methods
% 2p Experiments
%  - 1.25p Wide variety of ML models
%  - 0.75p Deep Neural Networks
% 0.5p Discussion

\section{Introduction}
\label{sec:intro}
A probabilistic classifier is \emph{well-calibrated} if among test instances receiving a predicted probability vector $p$, the class distribution is (approximately) distributed as $p$. % \citep{isotonic}. 
This property is of fundamental importance when using a classifier for cost-sensitive classification, for human decision making, or within an autonomous system.
Due to overfitting, most machine learning algorithms produce over-confident models, unless dedicated procedures are applied, such as Laplace smoothing in decision trees \cite{ferri2003improving}.
The goal of \emph{(post-hoc) calibration methods} is to use hold-out validation data to learn a \emph{calibration map} that transforms the model's predictions to be better calibrated.
%
%%% A probabilistic classifier is \emph{well-calibrated} if among test instances receiving a predicted probability vector $p$, the class distribution is (approximately) distributed as $p$ \citep{isotonic}. 
%%% %In this sense, perfect calibration can be crudely approximated by a constant classifier predicting the overall class distribution for every instance.
%%% %
%%% % Calibration is closely related to optimal decision making and cost-sensitive classification, where we wish to determine the predicted class that minimises expected misclassification cost. 
%%% In a binary setting, scores given by a sufficiently calibrated classifier can be simply thresholded to minimise expected misclassification cost \citep{isotonic}. Thresholds can also be derived to optimally adapt to a change in class prior, or to a combination of both. In contrast, for a poorly calibrated classifier the optimal thresholds cannot be obtained without optimisation; for example, by means of ROC analysis \citep{provost2001robust}.
%%% %
Meteorologists were among the first to think about calibration, with \cite{brier1950verification} introducing an evaluation measure for probabilistic forecasts, which we now call Brier score; \cite{murphy1977reliability} proposing reliability diagrams, which allow us to visualise calibration (reliability) errors; and \cite{degroot1983comparison} discussing proper scoring rules for forecaster evaluation and the decomposition of these loss measures into calibration and refinement losses. 
Calibration methods for binary classifiers have been well studied and include: 
logistic calibration, also known as `Platt scaling' \citep{platt1999probabilistic}; 
binning calibration \citep{zadrozny2001obtaining} with either equal-width or equal-frequency bins;
isotonic calibration \citep{isotonic}; 
%,PAV-ROC}%, also known as the ROC convex hull method and pool-adjacent-violators \citep{isotonic,PAV-ROC}; 
and beta calibration \citep{kull2017ejs}. %pmlr-v54-kull17a}. 
Extensions of the above approaches include: \citep{naeini2015} which performs Bayesian averaging of multiple calibration maps obtained with equal-frequency binning; \citep{naeini2016binary} which uses near-isotonic regression to allow for some non-monotonic segments in the calibration maps; and \cite{allikivi2019nonparametric} which introduces a non-parametric Bayesian isotonic calibration method.

Calibration in multiclass scenarios has been approached by decomposing the problem into $k$ one-vs-rest binary calibration tasks \citep{isotonic}, one for each class. The predictions of these $k$ calibration models form unnormalised probability vectors, which, after normalisation, are not guaranteed to be calibrated. 
Native multiclass calibration methods were introduced recently with a focus on neural networks, including: matrix scaling, vector scaling and temperature scaling \citep{Guo2017}, which can all be seen as multiclass extensions of Platt scaling and have been proposed as calibration layers which should be applied to the logits of a neural network, replacing the softmax layer. An alternative to \emph{post-hoc} calibration is to modify the classifier learning algorithm itself: MMCE \citep{kumar2018trainable} trains neural networks by optimising the combination of log-loss with a kernel-based measure of calibration loss; SWAG \citep{maddox2019simple-arxiv} models the posterior distribution over the weights of the neural network and then samples from this distribution to perform Bayesian model averaging; %\cite{kumar2018trainable} argued that while trying to avoid neural network overconfidence, temperature scaling could penalise legitimately confident predictions. Therefore, they proposed a kernel based measure of calibration that can be trained alongside the negative likelihood loss and preserves high confidence predictions. 
\cite{milios2018dirichlet} proposed a method to transform the classification task into regression and to learn a Gaussian Process model.
%\cite{milios2018dirichlet} showed Gaussian Processes can yield more calibrated results using a Dirichlet-based transform on the labels to form a regression task;  
Calibration methods have been proposed for the regression task as well, including a method by \cite{kuleshov2018accurate} which adopts isotonic regression to calibrate the predicted quantiles.
The theory of calibration functions and empirical calibration evaluation in classification was studied by \cite{vaicenavicius2019evaluating}, also proposing a statistical test of calibration.
%and finally, \cite{vaicenavicius2019evaluating} studied the theory of calibration functions and empirical calibration evaluation, proposing a statistical test of calibration.
%a statistical test for calibration was provided in \cite{vaicenavicius2019evaluating}, which is done by approximating the distribution of probabilistic outputs under the hypothesis that the probabilities are calibrated.

While there are several calibration methods tailored for deep neural networks, we propose a general-purpose, natively multiclass calibration method called \emph{Dirichlet calibration}, applicable for calibrating any probabilistic classifier.
%Similar to how Platt scaling was originally proposed for the continuous scores produced by SVMs and later used to calibrate probabilistic models, matrix, vector and temperature scaling could be applied to the probability estimates from non-neural network models, but some of their underlying assumptions would not be appropriate, as shown later in this paper. \cite{pmlr-v54-kull17a} proposed beta calibration as a parametric model with the correct support for probabilistic outputs. The multivariate extension of the beta distribution is the Dirichlet distribution, therefore we propose Dirichlet calibration as a natural extension of beta calibration to the multiclass scenario.
%
%While we are not the first to address multiclass calibration, 
We also demonstrate that the multiclass setting introduces numerous subtleties that have not always been recognised or correctly dealt with by other authors. 
For example, some authors use the weaker notion of \emph{confidence calibration} (our term), which requires only that the classifier's predicted probability for what it considers the most likely class is calibrated. 
There are also variations in the evaluation metric used and in the way calibrated probabilities are visualised. Consequently, Section~\ref{sec:temp:eval} is concerned with clarifying such fundamental issues. 
%
% \cite{song2019distribution} further generalises the concept by performing calibration on regression outputs at the level of conditional distribution. 
%\cite{kumar2018trainable} proposes a training measure based on kernel mean embedding that improves calibration on the obtained neural networks.
%A statically test for calibration is provided in \cite{vaicenavicius2019evaluating}, which is done by approximating the distribution of probabilistic outputs under the hypothesis of being calibrated.
%
We then propose the approach of Dirichlet calibration in Section~\ref{sec:dirichlet}, present and discuss experimental results in Section~\ref{sec:experiments}, and conclude in Section~\ref{sec:discussion}.

\section{Evaluation of calibration and temperature scaling}
\label{sec:temp:eval}

%Probabilistic classifiers, naive bayes toy example.
%(Perfectly) calibrated (probabilistic) classifiers. Classwise calibrated classifiers. Confidence calibrated classifiers. Reliability diagram. ECE. pECE. (Canonical) calibration function (/map). Proper losses, BS, LL, calibration loss, refinement loss.

Consider a probabilistic classifier $\vph:\sX\to\Sk$ that outputs class probabilities for $k$ classes $1,\dots,k$. 
For any given instance $\vx$ in the feature space $\sX$ it would output some probability vector $\vph(\vx)=\left(\ph_1(\vx),\dots,\ph_k(\vx)\right)$ belonging to $\Sk=\{(q_1,\dots,q_k)\in[0,1]^k\mid \sum_{i=1}^k q_i=1\}$ which is the $(k-1)$-dimensional probability simplex over $k$ classes.
% \todo{PF: Rest of this paragraph commented out as the notation follows normal conventions.}
% Throughout the paper, we use bold symbols (such as $\vph$ and $\vx$) to denote vectors or functions which output vectors, and non-bold symbols (such as $\ph_i$) to denote scalar values and functions with scalar outputs. Random variables $X$ and $Y$ denote a randomly drawn instance with feature values $X$ and class label $Y$.
%We will refer to $\vph(\vx)$ as the (uncalibrated) probabilities output by the classifier on instance $\vx$.
%
%For predicted class probabilities to be useful, we expect instances with higher estimated class $i$ probability $\ph_i(\vx)$ also to be more likely to belong to class $i$, that is $y=i$. Ideally, we would like the estimated probabilities to be calibrated, meaning that predicted probability vectors agree with the actual empirical class frequencies, when conditioned on the prediction. 
\begin{definition}
A probabilistic classifier $\vph:\sX\to\Sk$ is {\bf multiclass-calibrated}, or simply {\bf calibrated}, if for any prediction vector $\vq=(q_1,\dots,q_k)\in\Sk$, the proportions of classes among all possible instances $\vx$ getting the same prediction $\vph(\vx)=\vq$ are equal to the prediction vector $\vq$:  
%$$P(Y=i\mid \vph(X)=\vq)=q_i,\quad \text{for all}\  i=1,\dots,k\ \text{and all}\ \vq=(q_1,\dots,q_k),$$
\begin{align} \label{eq:calib}
P(Y=i\mid \vph(X)=\vq)=q_i\qquad\text{for }\ i=1,\dots,k.
\end{align}
%for all $i=1,\dots,k$ and all $\vq=(q_1,\dots,q_k)$.% where $X,Y$ is a randomly drawn instance with feature values $X$ and class label $Y$.
\end{definition}
One can define several weaker notions of calibration \citep{vaicenavicius2019evaluating} which provide necessary conditions for the model to be fully calibrated. One of these weaker notions was originally proposed by 
%Zadrozny \& Elkan 
\cite{isotonic}, requiring that all one-vs-rest probability estimators obtained from the original multiclass model are calibrated. 
%Here the 2-class one-vs-rest models predict the same probability as the original model for the 'one' class, while adding the probabilities of remaining classes to obtain the 'rest' class probability, hence the requirement is:
\begin{definition}
A probabilistic classifier $\vph:\sX\to\Sk$ is {\bf classwise-calibrated}, if for any class $i$ and any predicted probability $q_i$ for this class: 
%the actual proportion of class $i$ among all possible instances $\vx$ getting the same prediction $\ph_i(\vx)=q_i$, is equal to $q_i$:  
\begin{align} \label{eq:classcalib}
P(Y=i\mid \ph_i(X)=q_i)=q_i. %\qquad\text{for }\ i=1,\dots,k
\end{align}
%for all $i=1,\dots,k$ and all $q_i\in[0,1]$. We propose to call models that satisfy this requirement as \emph{classwise-calibrated}.
\end{definition}
%The Naive Bayes model in our running example is not classwise-calibrated, e.g. among instances where it predicts probability $\ph_1{\mathsf{NB}}(\vx)=0.8$ for class $1$, the actual proportion of class 1 is approximately $??$.

Another weaker notion of calibration was used by \cite{Guo2017}, requiring that among all instances where the probability of the most likely class is predicted to be $c$ (the \textit{confidence}), the expected accuracy is $c$. 
% We propose to call models that satisfy this requirement \emph{confidence-calibrated}.
\begin{definition}
A probabilistic classifier $\vph:\sX\to\Sk$ is {\bf confidence-calibrated}, if for any %confidence level
$c\in[0,1]$:
%the actual proportion of class $i$ among all possible instances $\vx$ getting the same prediction $\ph_i(\vx)=q_i$, is equal to $q_i$:  
\begin{align} \label{eq:confcalib}
P\Big(Y=argmax\big(\vph(X)\big) \: \Big| \: max\big(\vph(X)\big)=c\Big)=c.
\end{align}
\end{definition}
%for any confidence level $c\in[0,1]$. That is, among all instances where the probability of the most likely class is predicted to be $c$, the expected accuracy should be $c$.  The Naive Bayes model in our running example is not confidence-calibrated either, e.g. among instances where it has probability $0.9$ for any of the classes, the expected accuracy is $??$.

% All these notions of calibration are idealistic, in the sense that they require an exact equality to hold.
%%in expectation, and involve another equality in the conditioning of the expectation. 
%A common approach for practical evaluation of calibration is to use binning of similar predictions and after that check whether the calibration equality (\ref{eq:calib}) or (\ref{eq:classcalib}) or (\ref{eq:confcalib}) is approximately satisfied within each bin. 
%
For practical evaluation purposes these idealistic definitions need to be relaxed. 
A common approach for checking confidence-calibration is to do equal-width binning of predictions according to confidence level and check if Eq.(\ref{eq:confcalib}) is approximately satisfied within each bin.
This can be visualised using the \emph{reliability diagram} (which we will call the {\bf confidence-reliability diagram}), see Fig.~\ref{fig:rel}a, where the wide blue bars show observed accuracy within each bin (empirical version of the conditional probability in Eq.(\ref{eq:confcalib})), and narrow red bars show the gap between the two sides of Eq.(\ref{eq:confcalib}).
With accuracy below the average confidence in most bins, this figure about a wide ResNet trained on CIFAR-10 shows over-confidence, typical for neural networks which predict probabilities through the last softmax layer and are trained by minimising cross-entropy. 

\begin{figure}
  \centering
  \includegraphics[width=\linewidth]{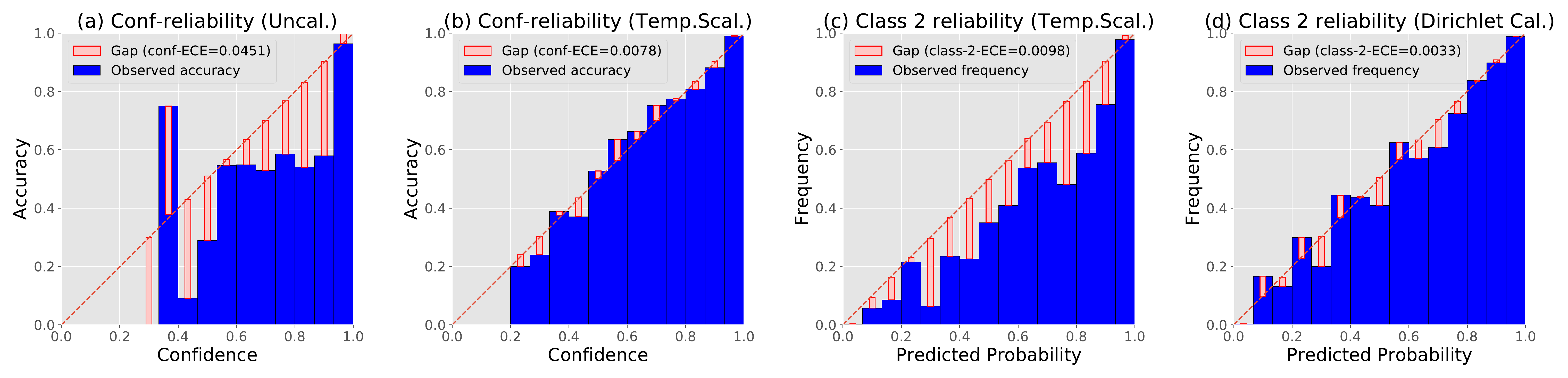}
  \caption{Reliability diagrams of {\tt c10\_resnet\_wide32} on CIFAR-10: (a) confidence-reliability before calibration; (b) confidence-reliability after temperature scaling; (c) classwise-reliability for class 2 after temperature scaling; (d) classwise-reliability for class 2 after Dirichlet calibration.}
  \label{fig:rel}
\end{figure}

%\cite{Guo2017} proposed a calibration method called {\bf temperature scaling} which uses a hold-out validation set to learn a single temperature-parameter $t>0$ which decreases (if $t>1$) or increases confidence (if $t<1$).
The calibration method called {\bf temperature scaling} was proposed by \cite{Guo2017} and it uses a hold-out validation set to learn a single temperature-parameter $t>0$ which decreases confidence (if $t>1$) or increases confidence (if $t<1$).
%and pushes the class predictions either closer to ($t>1$) or away from ($t<1$) the vector $(1/k,\dots,1/k)$, thus reducing over-confidence or under-confidence. 
This is achieved by rescaling the logit vector $\vz$ (input to softmax $\softmax$), so that instead of $\softmax(\vz)$ the predicted class probabilities will be obtained by $\softmax(\vz/t)$.
The confidence-reliability diagram in Fig.~\ref{fig:rel}b shows that the same {\tt c10\_resnet\_wide32} model has come closer to being confidence-calibrated after temperature scaling, having smaller gaps to the accuracy-equals-confidence diagonal. 
This is reflected in a lower \emph{Expected Calibration Error} ({\bf confidence-ECE}), defined as the average gap across bins, weighted by the number of instances in the bin. 
%results in an approximately confidence-calibrated model diagrams (without and with temperature scaling) of  (gaps to the average confidence within the bin are not shown, see details in the Supplemental Material).
%Fig.~\ref{fig:rel}b overlays two confidence-reliability diagrams (without and with temperature scaling) of the same {\tt resnet\_wide32\_c10} model (gaps to the average confidence within the bin are not shown, see details in the Supplemental Material).
%With results close to the diagonal of accuracy-equals-confidence, the temperature-scaled model is confidence-calibrated. 
In fact, confidence-ECE is low enough that the statistical test proposed by \cite{vaicenavicius2019evaluating} with significance level $\alpha=0.01$ does not reject the hypothesis that the model is confidence-calibrated (p-value $0.017$). 
The main idea behind this test is that for a perfectly calibrated model, ECE against actual labels is in expectation equal to the ECE against pseudo-labels which have been drawn from the categorical distributions corresponding to the predicted class probability vectors. 
The above p-value was obtained by randomly drawing 10,000 sets of pseudo-labels and finding 170 of these to have higher ECE than the actual one.

%However, it turns out that the same temperature scaled model is far from being classwise-calibrated. This becomes evident in Fig~\ref{fig:rel}c (the blue dashdotted line), demonstrating that it systematically over-estimates the probability of instances to belong to class 2, with predicted probability (x-axis) smaller than the observed frequency (y-axis). In contrast, the model under-estimates class 4 probability, as seen in Fig.~\ref{fig:rel}d. We propose such plots across all classes to be used for evaluating classwise-calibration, and we will call these the {\bf classwise-reliability diagrams}.
%These diagrams are obtained by equal-width binning and report average predicted probability against the observed frequency for each bin.
%As we have overlayed multiple diagrams we have chosen not to show the gaps to being calibrated 
%(see all diagrams individually in the Supplemental Material). 
%We will refer to the average gap across all classwise-reliability diagrams as the It is easy adapt of class probability separately, and could be drawn by highlighting the similarly to  

While the above temperature-scaled model is (nearly) confidence-calibrated, it is far from being classwise-calibrated. This becomes evident in Fig~\ref{fig:rel}c, demonstrating that it systematically over-estimates the probability of instances to belong to class 2, with predicted probability (x-axis) smaller than the observed frequency of class 2 (y-axis) in all the equal-width bins. 
In contrast, the model systematically under-estimates class 4 probability (Supplementary Fig.~12a). Having only a single tuneable parameter, temperature scaling cannot learn to act differently on different classes. 
We propose plots such as Fig.~\ref{fig:rel}c,d across all classes to be used for evaluating classwise-calibration, and we will call these the {\bf classwise-reliability diagrams}.
%These diagrams are obtained by equal-width binning and report average predicted probability against the observed frequency for each bin.
We propose {\bf classwise-ECE} as a measure of classwise-calibration, defined as the average gap across all classwise-reliability diagrams, weighted by the number of instances in each bin:
\begin{align} \label{eq:eval:ece}
\mathsf{classwise{-}ECE}  &= \frac{1}{k}\sum_{j=1}^k \sum_{i=1}^m \frac{|B_{i,j}|}{n} |y_j(B_{i,j}) - \ph_j(B_{i,j})| 
\end{align}
where $k,m,n$ are the numbers of classes, bins and instances, respectively, $|B_{i,j}|$ denotes the size of the bin, and 
%by slight misuse of notation 
$\ph_j(B_{i,j})$ and $y_j(B_{i,j})$ denote the average prediction of class $j$ probability and the actual proportion of class $j$ in the bin $B_{i,j}$. 
The contribution of a single class $j$ to the classwise-ECE will be called {\bf class-$j$-ECE}.
As seen in Fig.~\ref{fig:rel}(d), the same model gets closer to being \emph{class-2-calibrated} after applying our proposed Dirichlet calibration. 
By averaging class-$j$-ECE across all classes we get the overall classwise-ECE
%for the original model is $cwECE=0.4203$, 
which for temperature scaling is $cwECE=0.1857$ and for Dirichlet calibration $cwECE=0.1795$.
This small difference in classwise-ECE appears more substantial when running the statistical test of \cite{vaicenavicius2019evaluating}, rejecting the null hypothesis that temperature scaling is classwise-calibrated ($p<0.0001$), while for Dirichlet calibration the decision depends on the significance level ($p=0.016$). 
A similar measure of classwise-calibration called \emph{$L^2$ marginal calibration error} was proposed in a concurrent work by \cite{kumar2019neurips}.
%%% KUMAR CITES US: "... and concurrent work by [26] define similar per-class calibration metrics where temperature scaling [9] scores worse than vector scaling, ..."
%A similar metric, called L1-marginal calibration error, was independently developed by \cite{kumar}.

Before explaining the Dirichlet calibration method, let us highlight the fundamental limitation of evaluation using any of the above reliability diagrams and ECE measures. Namely, it is easy to obtain almost perfectly calibrated probabilities by predicting the overall class distribution, regardless of the given instance.
Therefore, it is always important to consider other evaluation measures as well. 
In addition to the error rate, the obvious candidates are proper losses (such as Brier score or log-loss), as they evaluate probabilistic predictions and decompose into calibration loss and refinement loss \citep{kull2015novel}. 
%We will use the two most common proper losses, Brier score and log-loss (cross-entropy). 
%Since better calibration can sometimes achieved at the expense of lower accuracy, we will keep eyesconsider the 
%Temperature scaling does not change the decision boundaries and thus leaves error rate the same, while Dirichlet calibration can change the error rate (often reducing it). 
Proper losses are often used as objective functions in post-hoc calibration methods, which take an uncalibrated probabilistic classifier $\vph$ and use a hold-out validation dataset to learn a calibration map $\vmuh:\Sk\to\Sk$ that can be applied as $\vmuh(\vph(\vx))$ on top of the uncalibrated outputs of the classifier to make them better calibrated. 
Every proper loss is minimised by the same calibration map, known as the \emph{canonical calibration function} \citep{vaicenavicius2019evaluating} of $\vph$, defined as 
\begin{align*}
    \vmu(\vq)=\left(P(Y=1\mid \vph(X)=\vq),\dots,P(Y=k\mid \vph(X)=\vq)\right)
\end{align*}
The goal of Dirichlet calibration, as of any other post-hoc calibration method, is to estimate this canonical calibration map $\vmu$ for a given probabilistic classifier $\vph$.

\section{Dirichlet calibration}
\label{sec:dirichlet}

A key decision in designing a calibration method is the choice of parametric family. 
%i.e.\ the architecture. 
Our choice was based on the following desiderata: (1) the family needs enough capacity to express biases of particular classes or pairs of classes; (2) the family must contain the identity map for the case where the model is already calibrated; (3) for every map in the family we must be able to provide a semi-reasonable synthetic example where it is the canonical calibration function; (4) the parameters should be interpretable to some extent at least.

\paragraph{Dirichlet calibration map family.} Inspired by beta calibration for binary classifiers \citep{kull2017ejs}, we consider the distribution of prediction vectors $\vph(\vx)$ separately on instances of each class, and assume these $k$ distributions are Dirichlet distributions with different parameters: 
\begin{align}
\vph(X)\mid Y=j\sim \Dir(\valpha^{(j)})
\end{align}
where $\valpha^{(j)}=(\alpha^{(j)}_1,\dots,\alpha^{(j)}_k)\in(0,\infty)^k$ are the Dirichlet parameters for class $j$.
Combining likelihoods $P(\vph(X)\mid Y)$ with priors $P(Y)$ expressing the overall class distribution $\vpi\in\Sk$, we can use Bayes' rule to express the canonical calibration function $P(Y\mid \vph(X))$ as follows:
\begin{align} \label{eq:dirgen}
\text{generative parametrisation:} & & \vmuh_{DirGen}(\vq; \valpha,\vpi)=\left(\pi_1 f_1(\vq),\dots,\pi_k f_k(\vq)\right)/z
\end{align}
where $z=\sum_{j=1}^k \pi_j f_j(\vq)$ is the normaliser, and $f_j$ is the probability density function of the Dirichlet distribution with parameters $\valpha^{(j)}$, gathered into a matrix $\valpha$.
% Eq.(\ref{eq:dirgen}) defines the generative parametrisation of the Dirichlet calibration map family; 
It will also be convenient to have two alternative parametrisations of the same family: a linear parametrisation for fitting purposes and a canonical parametrisation for interpretation purposes. These parametrisations are defined as follows:
\begin{align}
\text{linear parametrisation:} & & \vmuh_{DirLin}(\vq; \MW,\vb)&=\softmax(\MW\,\vln\,\vq+\vb)
\end{align}
where $\MW\in\sR^{k\times k}$ is a $k\times k$ parameter matrix, $\vln$ is a vector function that calculates the natural logarithm component-wise and $\vb\in\sR^k$ is a parameter vector of length $k$;
\begin{align}
\text{canonical parametrisation:} & & \vmuh_{Dir}(\vq; \MA,\vc)&=\softmax(\MA\,\vln\,\frac{\vq}{1/k}+\vln\,\vc)
\end{align}
where each column in the k-by-k matrix $\MA\in[0,\infty)^{k\times k}$ with non-negative entries contains at least one value $0$, division of $\vq$ by $1/k$ is component-wise, 
and $\vc\in\Sk$ is a probability vector of length $k$.
\begin{theorem}[Equivalence of generative, linear and canonical parametrisations]\label{thm:equiv}
The parametric families $\vmuh_{DirGen}(\vq; \valpha,\vpi)$, $\vmuh_{DirLin}(\vq; \MW,\vb)$ and $\vmuh_{Dir}(\vq; \MA,\vc)$ are equal, i.e. they contain exactly the same calibration maps.
\end{theorem}
\begin{proof}
All proofs are given in the Supplemental Material.
\end{proof}

The benefit of the linear parametrisation is that it can be easily implemented as (additional) layers in a neural network: a logarithmic transformation followed by a fully connected layer with softmax activation. 
Out of the three parametrisations only the canonical parametrisation is unique, in the sense that any function in the Dirichlet calibration map family can be represented by a single pair of matrix $\MA$ and vector $\vc$ satisfying the requirements set by the canonical parametrisation $\vmuh_{Dir}(\vq; \MA,\vc)$.

\begin{figure}
  \centering
    \begin{subfigure}{.28\linewidth}
      \centering
    \includegraphics[width=1.02\linewidth]{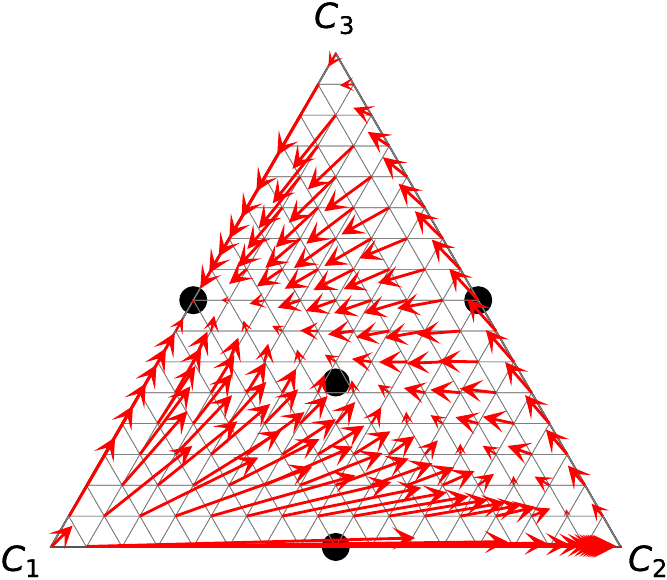}
    \includegraphics[width=1.02\linewidth]{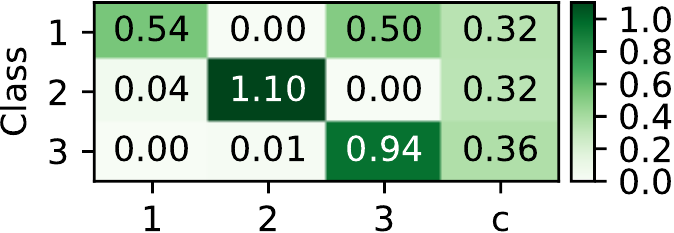}
      \caption{} %Abalone data, random forest model, Dirichlet calibration}
      \label{fig:simplex}
    \end{subfigure}      
    \hfill
    \begin{subfigure}{.35\linewidth}
    \centering
    \includegraphics[width=\linewidth]{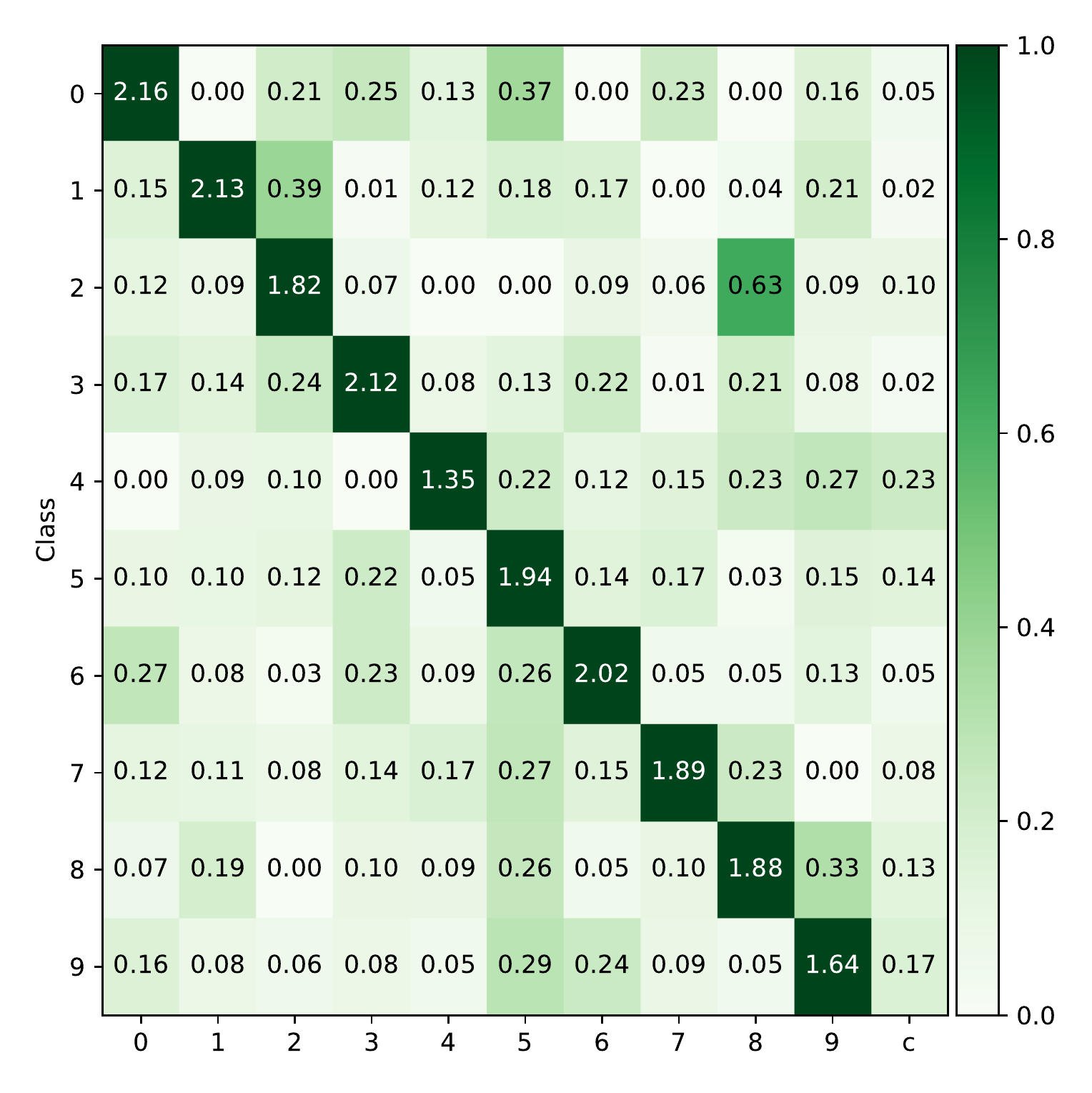}
    \caption{} %Coefficients $\MA$ and center $\vc$}
    \label{fig:toy:samples}
    \end{subfigure}
    \hfill
    \begin{subfigure}{.35\linewidth}
      \centering
    \includegraphics[width=\linewidth]{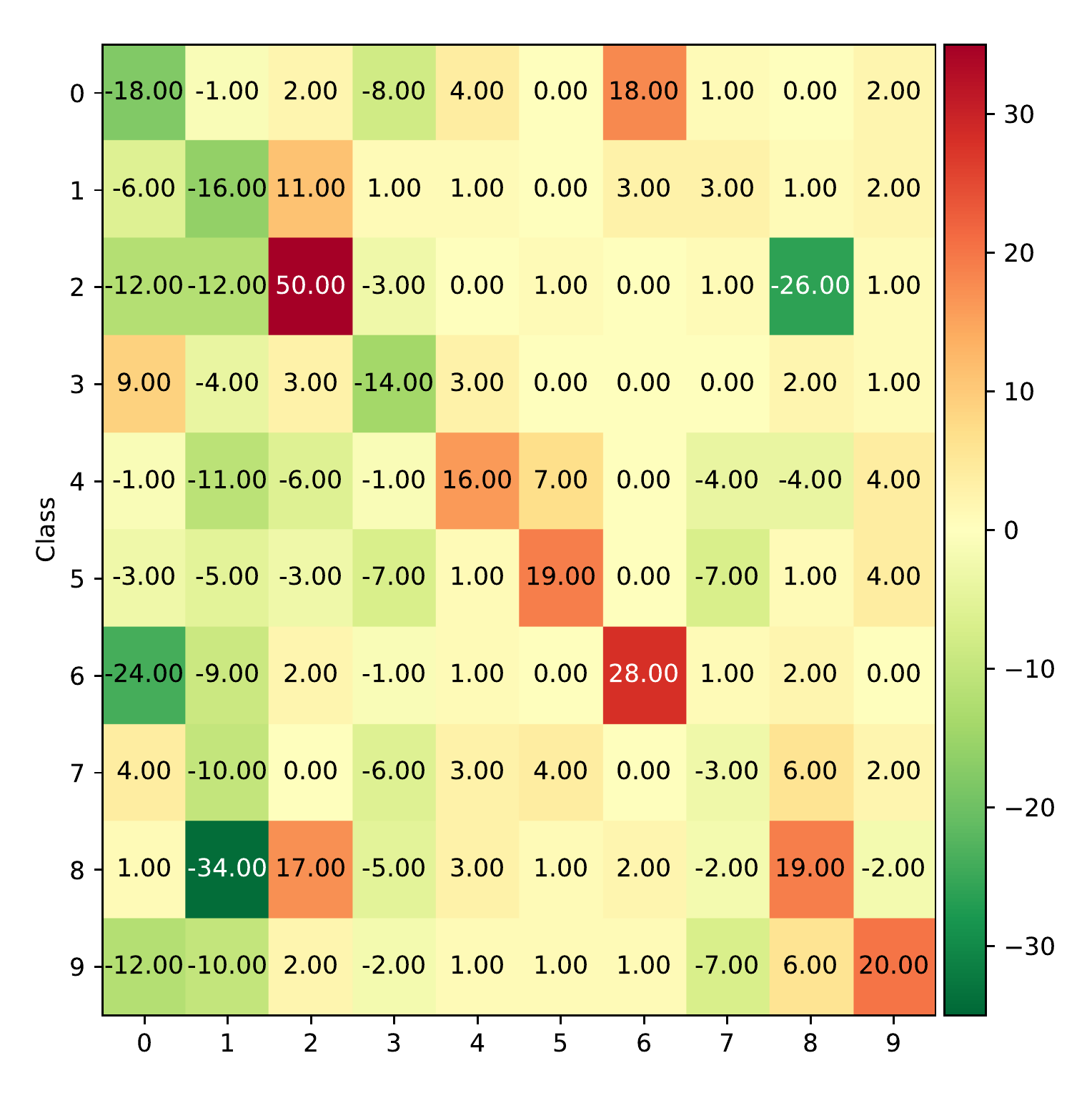}
      \caption{} %Changes in confusion matrix}
      \label{fig:toy:dist}
    \end{subfigure}  
    \caption{Interpretation of Dirichlet calibration maps: (a) calibration map for MLP on the abalone dataset, 4 interpretation points shown by black dots, and canonical parametrisation as a matrix with $\MA,\vc$; (b) canonical parametrisation of a map on {\tt SVHN\_convnet}; (c) changes to the confusion matrix after applying this calibration map.}  
    \label{fig:interpretation}
\end{figure}

\paragraph{Interpretability.} In addition to providing uniqueness, the canonical parametrisation is to some extent interpretable.
As demonstrated in the proof of Thm.~\ref{thm:equiv} provided in the Supplemental Material, the linear parametrisation $\MW,\vb$ obtained after fitting can be easily transformed into the canonical parametrisation by $a_{ij}=w_{ij}-\min_{i}w_{ij}$ and $\vc=\softmax(\MW\,\vln\,\vu+\vb)$, where $\vu=(1/k,\dots,1/k)$.
% NEW SENTENCES:
In the canonical parametrisation, increasing the value of element $a_{ij}$ in matrix $\MA$ increases the calibrated probability of class $i$ (and decreases the probabilities of all other classes), with effect size depending on the uncalibrated probability of class $j$. E.g., element $a_{3,9}=0.63$ of Fig.2b increases class $2$ probability whenever class $8$ has high predicted probability, modifying decision boundaries and resulting in $26$ less confusions of class $2$ for $8$ as seen in Fig.2c.
Looking at the matrix $\MA$ and vector $\vc$, it is hard to know the effect of the calibration map without performing the computations. However, at $k+1$ {\it `interpretation points'} this is (approximately) possible. One of these is the centre of the probability simplex, which maps to $\vc$. The other $k$ points are vectors where one value is (almost) zero and the other values are equal, summing up to $1$. Figure 2a shows the 3+1 interpretation points in an example for $k=3$, where each arrow visualises the result of calibration (end of arrow) at a particular point (beginning of arrow). The result of calibration map at the interpretation points in the centres of sides (facets) is each determined by a single column of $\MA$ only.
The $k$ columns of matrix $\MA$ and the vector $\vc$ determine, respectively, the behaviour of the calibration map near the $k+1$ points
%The vector $\vc$ specifies the vector into which the map transforms the probability vector $(1/k,\dots,1/k)$. Columns $1,\dots,k$ of the matrix $\MA$ determine, respectively, the behaviour of the calibration map near the points 
\begin{align*}
\left(\varepsilon,\frac{1-\varepsilon}{k-1},\dots,\frac{1-\varepsilon}{k-1}\right)\ ,\
\dots\ ,\
\left(\frac{1-\varepsilon}{k-1},\dots,\frac{1-\varepsilon}{k-1},\varepsilon\right)\ ,\ 
\text{and}\ 
\left(\frac{1}{k},\dots,\frac{1}{k}\right)    
\end{align*}
The first $k$ points are infinitesimally close to the centres of facets of the probability simplex, and the last point is the centre of the whole simplex. For 3 classes these 4 points have been visualised on the simplex in Fig.~\ref{fig:interpretation}a.
The Dirichlet calibration map $\muh_{Dir}(\vq; \MA,\vc)$ transforms these $k+1$ points into: 
%Note that Dirichlet calibration maps are not defined if any of the probabilities is exactly zero (due to taking the logarithm of the probability), so the columns of $\MA$ determine the behaviour of the calibration map near the $k$ hyper-facet points, not at these points.
\begin{align*}
\left(\varepsilon^{a_{11}},\dots,\varepsilon^{a_{k1}}\right)/z_1\ ,\ 
\dots\ ,\
\left(\varepsilon^{a_{1k}},\dots,\varepsilon^{a_{kk}}\right)/z_k\ ,\ 
\text{and}\ 
\left(c_1,\dots,c_k\right)
\end{align*}
where $z_i$ are normalising constants, and $a_{ij},c_j$ are elements of the matrix $\MA$ and vector $\vc$, respectively.
However, the effect of each parameter goes beyond the interpretation points and also changes classification decision boundaries.
This can be seen for the calibration map for a model {\tt SVHN\_convnet} in Fig.~\ref{fig:interpretation}b where larger off-diagonal coefficients $a_{ij}$ often result in a bigger change in the confusion matrix as seen in Fig.~\ref{fig:interpretation}c (particularly in the 3rd row and 9th column).

\paragraph{Relationship to other families.} 
For 2 classes, the Dirichlet calibration map family coincides with the beta calibration map family \citep{kull2017ejs}. 
% as beta distribution is a special case of the Dirichlet distribution. 
Although temperature scaling has been defined on logits $\vz$, it can be expressed in terms of the model outputs $\vph=\softmax(\vz)$ as well. It turns out that temperature scaling maps all belong to the Dirichlet family, with $\vmuh_{TempS}(\vq; t)=\vmuh_{DirLin}(\vq; \frac{1}{t}\MI, \vzero)$, where $\MI$ is the identity matrix and $\vzero$ is the zero vector 
(see Prop.1 in the Supplemental Material). 
The Dirichlet calibration family is also related to the matrix scaling family $\vmuh_{MatS}(\vz; \MW, \vb)=\softmax(\MW\vz+\vb)$ proposed by \cite{Guo2017} alongside with temperature scaling. 
Both families use a fully connected layer with softmax activation, but the crucial difference is in the inputs to this layer. Matrix scaling uses logits $\vz$, while the linear parametrisation of Dirichlet calibration uses log-transformed probabilities $\vln(\vph)=\vln(\softmax(\vz))$. As softmax followed by log-transform is losing information, matrix scaling has an informational advantage over Dirichlet calibration on deep neural networks, which we will turn back to in the experiments section. 

\paragraph{Fitting and \ODIR{} regularisation.} 

The results of \cite{Guo2017} showed poor performance for matrix scaling (with ECE, log-loss, error rate), leading the authors to the conclusion that ``[a]ny calibration model with tens of thousands (or more) parameters will overfit to a small validation set, even when applying regularization''. 
We agree that some overfitting happens, but in our experiments a simple L2 regularisation suffices on non-neural models, whereas for deep neural nets we propose a novel \ODIR{} (Off-Diagonal and Intercept Regularisation) scheme, which is efficient enough in fighting overfitting to make both Dirichlet calibration and matrix scaling outperform temperature scaling on many occasions, including cases with 100 classes and hence 10100 parameters. 
Fitting of Dirichlet calibration maps is performed by minimising log-loss, and by adding \ODIR{} regularisation terms to the loss function as follows:
\begin{align*}
    L=\frac{1}{n}\sum_{i=1}^n logloss\Big(\vmuh_{DirLin}(\vph(\vx_i); \MW, \vb), y_i\Big)
    +\lambda\cdot\left(\frac{1}{k(k-1)}\sum_{i\neq j} w_{ij}^2\right) 
    +\mu\cdot\left(\frac{1}{k}\sum_{j} b_j^2\right) 
\end{align*}
where $(\vx_i,y_i)$ are validation instances and $w_{ij},b_j$ are elements of $\MW$ and $\vb$, respectively, and $\lambda,\mu$ are hyper-parameters tunable with internal cross-validation on the validation data. The intuition is that the diagonal is allowed to freely follow the biases of classes, whereas the intercept is regularised separately from the off-diagonal elements due to having different scales (additive vs.\ multiplicative).

\paragraph{Implementation details.}

Implementation of Dirichlet calibration is straightforward in standard deep neural network frameworks (we used Keras \cite{chollet2015keras} in the neural experiments). Alternatively, it is also possible to use the Newton–Raphson method on the L2 regularised objective function, which is constructed by applying multinomial logistic regression with $k$ features (log-transformed predicted class probabilities). Both the gradient and Hessian matrix can be calculated either analytically or using automatic differentiation libraries (e.g. JAX \cite{jax2018github}). Such implementations normally yield faster convergence given the convexity of the multinomial logistic loss, which is a better choice with a small number of target classes (tractable Hessian). One can also simply adopt existing implementations of logistic regression (e.g. scikit-learn) with the log transformed predicted probabilities. If the uncalibrated model outputs zero probability for some class, then this needs to be clipped to a small positive number (we used $2.2e^{-308}$, the smallest positive usable number for the type \texttt{float64} in Python).
\section{Experiments}
\label{sec:experiments}

The main goals of our experiments are to:
(1) compare performance of Dirichlet calibration with other general-purpose calibration methods on a wide range of datasets and classifiers;
(2) compare Dirichlet calibration with temperature scaling on several deep neural networks and study the effectiveness of \ODIR{} regularisation;
%(3) study the effectiveness of \ODIR{} regularisation on Dirichlet calibration and matrix scaling;
and
(3) study whether the neural-specific calibration methods outperform general-purpose calibration methods due to the information loss going from logits to softmax outputs. 

%\subsection{Comparison of general-purpose calibration methods}
\subsection{Calibration of non-neural models}

\paragraph{Experimental setup.} 
Calibration methods were compared on 
21 UCI datasets (\emph{abalone}, \emph{balance-scale}, \emph{car}, \emph{cleveland}, \emph{dermatology}, \emph{glass}, \emph{iris}, \emph{landsat-satellite}, \emph{libras-movement}, \emph{mfeat-karhunen}, \emph{mfeat-morphological}, \emph{mfeat-zernike}, \emph{optdigits}, \emph{page-blocks}, \emph{pendigits}, \emph{segment}, \emph{shuttle}, \emph{vehicle}, \emph{vowel}, \emph{waveform-5000}, \emph{yeast}) 
with 11 classifiers: multiclass logistic regression (\emph{logistic}), naive Bayes (\emph{nbayes}), random forest (\emph{forest}), adaboost on trees (\emph{adas}), linear discriminant analysis (\emph{lda}), quadratic discriminant analysis (\emph{qda}), decision tree (\emph{tree}), K-nearest neighbours (\emph{knn}), multilayer perceptron (\emph{mlp}), support vector machine with linear (\emph{svc-linear}) and RBF kernel (\emph{svc-rbf}).

In each of the $21\times 11 = 231$ settings we performed nested cross-validation to evaluate 6 calibration methods: 
one-vs-rest isotonic calibration (\textbf{OvR\_Isotonic}) which learns an isotonic calibration map on each class vs rest separately and renormalises the individual calibration map outputs to add up to one at test time; 
one-vs-rest equal-width binning (\textbf{OvR\_Width\_Bin}) where one-vs-rest calibration maps predict the empirical proportion of labels in each of the equal-width bins of the range $[0,1]$; 
one-vs-rest equal-frequency binning (\textbf{OvR\_Freq\_Bin}) constructing bins with equal numbers of instances; 
one-vs-rest beta calibration (\textbf{OvR\_Beta}); 
temperature scaling (\textbf{Temp\_Scaling}); 
and Dirichlet Calibration with L2 regularisation (\textbf{Dirichlet\_L2}).
We used $3$-fold internal cross-validation to train the calibration maps within the $5$ times $5$-fold external cross-validation.
Following \citep{platt1999probabilistic}, the $3$ calibration maps learned in the internal cross-validation were all used as an ensemble by averaging their predictions.
For calibration methods with hyperparameters we used the training fold of the classifier to choose the hyperparameter values with the lowest log-loss.

We used $8$ evaluation measures: accuracy, log-loss, Brier score, maximum calibration error (MCE), confidence-ECE (conf-ECE), classwise-ECE (cw-ECE), as well as significance measures p-conf-ECE and p-cw-ECE evaluating how often the respective ECE measures are not significantly higher than when assuming calibration. For p-conf-ECE and p-cw-ECE we used significance level $\alpha=0.05$ in the test of \citep{vaicenavicius2019evaluating} as explained in Section~\ref{sec:temp:eval}, and counted the proportion of significance tests accepting the model being calibrated out of $5 \times 5$ cases of external cross-validation. With each of the $8$ evaluation measures we ranked the methods on each of the $21\times 11$ tasks and performed Friedman tests to find statistical differences \citep{demvsar2006statistical}.
When the p-value of the Friedman test was under $0.005$ we performed a post-hoc one-tailed Bonferroni-Dunn test to obtain Critical Differences (CDs) which indicated the minimum ranking difference to consider the methods significantly different. 
Further details of the experimental setup are provided in the Supplemental Material.
%

%5500 = 425 * 21

\begin{table}
\begin{minipage}[t]{.57\textwidth}
\normalsize
\captionof{table}{Ranking of calibration methods for \textbf{p-cw-ECE}
% Friedman's test significant with 
(Friedman's test significant with p-value $7.54e^{-85}$).}
\label{table:res:p-cw-ece}
\tiny
\centering
\setlength{\tabcolsep}{.6em}
\begin{tabular}{l|cccccc|c}
\toprule
           & DirL2 & Beta & FreqB & Isot & WidB & TempS & \textbf{Uncal}\\
\midrule
adas       & $\mathbf{2.4}$ & $3.2$ & $4.1$ & $4.2$ & $3.9$ & $5.0$ & $5.2$\\
forest     & $3.5$ & $\mathbf{2.3}$ & $5.7$ & $3.0$ & $3.6$ & $5.0$ & $5.0$\\
knn        & $2.5$ & $4.0$ & $4.5$ & $\mathbf{2.1}$ & $3.2$ & $5.8$ & $6.0$\\
lda        & $\mathbf{1.9}$ & $3.1$ & $5.8$ & $3.0$ & $3.5$ & $5.0$ & $5.8$\\
logistic   & $\mathbf{2.2}$ & $2.8$ & $6.4$ & $3.0$ & $4.2$ & $3.9$ & $5.5$\\
mlp        & $\mathbf{2.2}$ & $2.9$ & $6.7$ & $4.0$ & $5.2$ & $3.0$ & $4.1$\\
nbayes     & $\mathbf{1.4}$ & $3.6$ & $4.8$ & $2.6$ & $4.2$ & $5.3$ & $6.1$\\
qda        & $\mathbf{2.2}$ & $2.8$ & $6.3$ & $2.5$ & $3.8$ & $4.8$ & $5.6$\\
svc-linear & $\mathbf{2.3}$ & $2.7$ & $6.7$ & $3.8$ & $4.0$ & $3.7$ & $4.8$\\
svc-rbf    & $\mathbf{2.9}$ & $3.0$ & $6.3$ & $3.5$ & $4.1$ & $3.9$ & $4.3$\\
tree       & $\mathbf{2.4}$ & $4.3$ & $5.9$ & $4.2$ & $5.2$ & $3.0$ & $3.0$\\
\midrule
avg rank   & \bf{2.34} & 3.15 & 5.73 & 3.27 & 4.11 & 4.37 & 5.02\\
\bottomrule
\end{tabular}
\end{minipage}
\hfill
\begin{minipage}[t]{.4\textwidth}
\normalsize
\captionof{table}{Ranking of calibration methods for  \textbf{log-loss} 
(p-value $4.39e^{-77}$).}
\label{table:res:loss}
\tiny
\centering
\setlength{\tabcolsep}{.6em}
\begin{tabular}{cccccc|c}
\toprule
DirL2 & Beta & FreqB & Isot & WidB & TempS & \textbf{Uncal}\\
\midrule
$\mathbf{1.4}$ & $3.1$ & $3.2$ & $4.3$ & $3.5$ & $5.9$ & $6.6$\\
$4.2$ & $\mathbf{1.9}$ & $4.7$ & $4.1$ & $2.9$ & $5.2$ & $5.2$\\
$3.8$ & $4.8$ & $3.0$ & $\mathbf{1.6}$ & $2.0$ & $6.5$ & $6.5$\\
$\mathbf{1.6}$ & $2.2$ & $5.2$ & $5.2$ & $3.5$ & $4.6$ & $5.7$\\
$\mathbf{1.3}$ & $2.1$ & $5.8$ & $6.1$ & $3.5$ & $3.6$ & $5.6$\\
$\mathbf{2.2}$ & $2.3$ & $6.5$ & $6.2$ & $4.7$ & $2.9$ & $3.4$\\
$\mathbf{1.1}$ & $3.4$ & $3.4$ & $4.0$ & $4.4$ & $5.5$ & $6.3$\\
$\mathbf{1.7}$ & $2.7$ & $5.6$ & $4.6$ & $3.4$ & $4.2$ & $5.8$\\
$\mathbf{1.3}$ & $2.3$ & $6.1$ & $6.1$ & $4.3$ & $3.0$ & $4.8$\\
$2.6$ & $\mathbf{2.2}$ & $4.3$ & $4.8$ & $4.5$ & $4.0$ & $5.6$\\
$3.9$ & $5.1$ & $3.4$ & $\mathbf{2.1}$ & $2.4$ & $5.6$ & $5.6$\\
\midrule
\bf{2.25} & 2.92 & 4.66 & 4.48 & 3.54 & 4.61 & 5.54\\
\bottomrule
\end{tabular}
\end{minipage}
\end{table}

\begin{figure}
  \centering
    \begin{subfigure}{.4\linewidth}
      \centering
      \includegraphics[width=\linewidth]{figures/results/crit_diff_p-cw-ece_v2}
      \caption{p-cw-ECE critical difference}
      \label{fig:multi:cd:p-cw-ece}
    \end{subfigure}
    \hfill
    \begin{subfigure}{.27\linewidth}
    \centering
    \includegraphics[width=\linewidth]{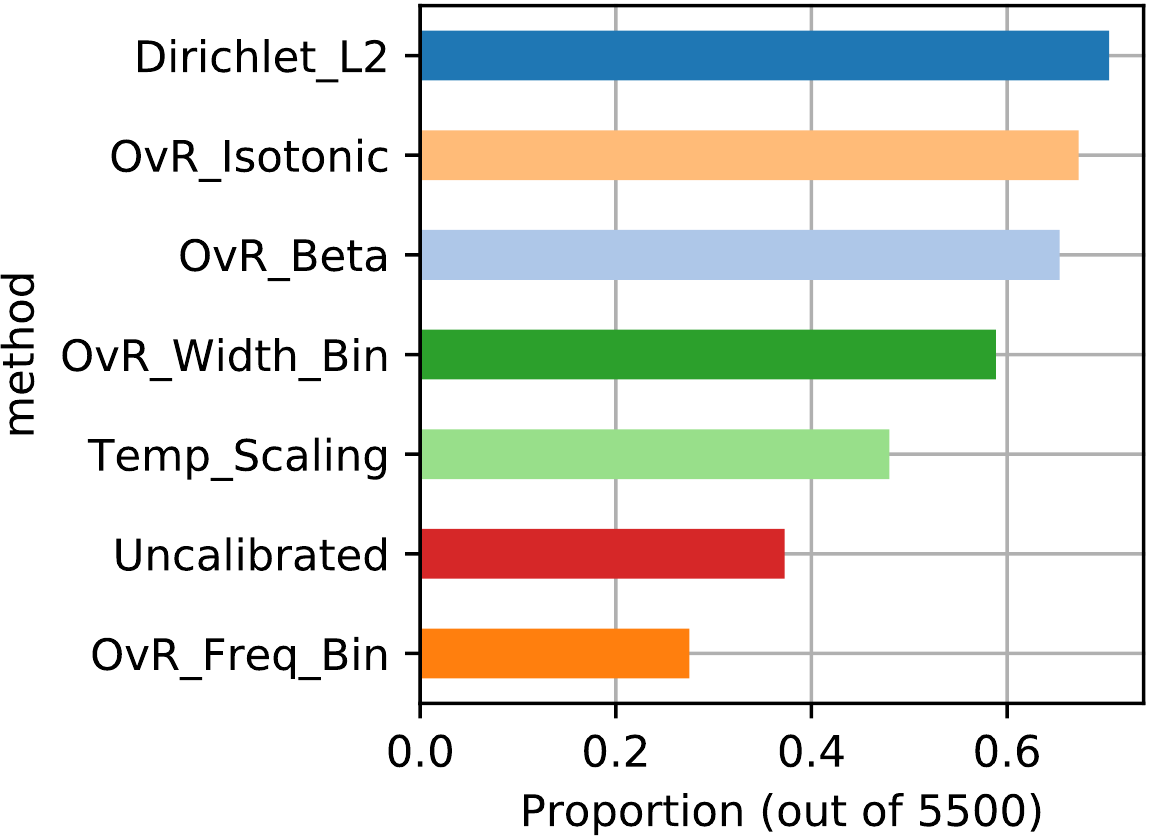}
    \caption{p-cw-ECE for calibrators}
    \label{fig:cal:p:cla:ece}
    \end{subfigure}
    \hfill
    \begin{subfigure}{.29\linewidth}
    \centering
    \includegraphics[width=\linewidth]{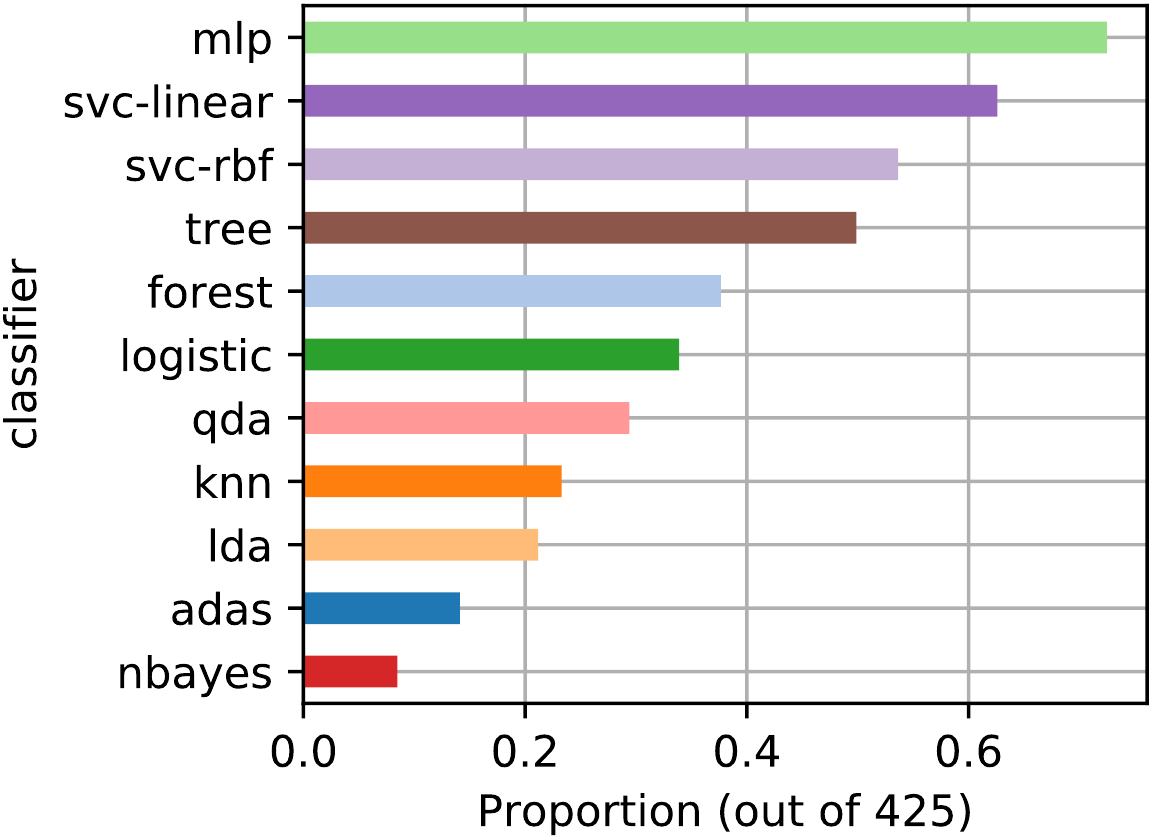}
    \caption{p-cw-ECE for classifiers}
    \label{fig:uncal:p:cla:ece}
    \end{subfigure}
    
%   \caption{This selection of figures summarises the overall results.
\caption{Summarised results for \textbf{p-cw-ECE}:
  (a) CD diagram; %(Friedman test significant with p-value $7.54e^{-85}$);
%   (a) Friedman test indicating the methods are statistically different (p-value $7.54e^{-85}$) and the CD diagram for p-cw-ECE.
  %(a) Friedman test with p-value $7.54e^{-85}$, indicating statistical difference between the calibration methods, and the CD diagram for the p-cw-ECE.
  (b) proportion of times each calibrator was calibrated ($\alpha = 0.05$);
  %(b) shows the proportion of times each of the calibration methods was calibrated under the p-cw-ECE statistical test with $p\text{--value} > 0.05$.
  (c) proportion of times each classifier was already calibrated ($\alpha = 0.05$).
  %And (c) shows the proportion of times each classifier was already calibrated under the p-cw-ECE statistical test with a $p\text{--value} > 0.05$.
  }
  \label{fig:summary:results}
\end{figure}

\paragraph{Results.}
The results showed that Dirichlet\_L2 was among the best calibrators for every measure.
In particular, it was the best calibration method based on log-loss, p-cw-ECE and accuracy, and in the group of best calibrators for the other measures.
The rankings have been averaged into grouping by classifier learning algorithm and shown for log-loss in Table \ref{table:res:loss}, and for p-cw-ECE in Table \ref{table:res:p-cw-ece}. 
The critical difference diagram for p-cw-ECE is presented in Fig.~\ref{fig:multi:cd:p-cw-ece}. 
Fig.~\ref{fig:cal:p:cla:ece} shows the average p-cw-ECE for each calibration method across all datasets and shows how frequently the statistical test accepted the null hypothesis of classifier being calibrated (higher p-cw-ECE is better). 
The results show that Dirichlet\_L2 was considered calibrated on more than 60\% of the p-cw-ECE tests.
%
%we also checked that for every p-ECE test Dirichlet L2 passed the statistical tests always more than 60\% (See p-cw-ECE in Figure \ref{fig:cal:p:cla:ece}).
% As we are not aware of any works comparing multiclass classifier learning algorithms with respect to how calibrated they are without post-hoc calibration, we investigated this matter as well, and 
An evaluation of classwise-calibration without post-hoc calibration is given  in Fig.~\ref{fig:uncal:p:cla:ece}. 
Note that svc-linear and svc-rbf have an unfair advantage because their \emph{sklearn} implementation uses Platt scaling with 3-fold internal cross-validation to provide probabilities.

Supplemental material contains the final ranking tables and CD diagrams for every metric, an analysis of the best calibrator hyperparameters, and a more detailed comparison of the classwise calibration for the $11$ classifiers.

\subsection{Calibration of deep neural networks}
\label{sec:exp:dnn}

\paragraph{Experimental setup.} 
We used 3 datasets (CIFAR-10, CIFAR-100 and SVHN), training 11 deep convolutional neural nets with various architectures: ResNet 110 \citep{resnet}, ResNet 110 SD \citep{sd}, ResNet 152 SD \citep{sd}, DenseNet 40 \citep{dense}, WideNet 32 \citep{wide}, LeNet 5 \citep{lenet}, and acquiring 3 pretrained models from \cite{pretrained}.
For the latter we set aside 5,000 test instances for fitting the calibration map. On other models we followed \cite{Guo2017}, setting aside 5,000 training instances (6,000 in SVHN) for calibration purposes and training the models as in the original papers.
For calibration methods with hyperparameters we used 5-fold cross-validation on the validation set to find optimal regularisation parameters. We used all 5 calibration models with the optimal hyperparameter values by averaging their predictions as in \cite{platt1999probabilistic}.
  
Among general-purpose calibration methods we compared 2 variants of Dirichlet calibration (with L2 regularisation and with \ODIR{}) against temperature scaling (as discussed in Section~\ref{sec:dirichlet}, it can equivalently act on probabilities instead of logits and is therefore general-purpose). Other methods from our non-neural experiment were not included, as these were outperformed by temperature scaling in the experiments of \cite{Guo2017}. Among methods that use logits (neural-specific calibration methods) we included matrix scaling with \ODIR{} regularisation, and vector scaling, which restricts the matrix scaling family, fixing off-diagonal elements to $0$. As reported by \cite{Guo2017}, the non-regularised matrix scaling performed very poorly and was not included in our comparisons.
Full details and source code for training the models are in the Supplemental Material.

\paragraph{Results.} 
Tables~\ref{table:res:cw-ece} and \ref{table:res:nll} show that the best among three general-purpose calibration methods depends heavily on the model and dataset. Both variants of Dirichlet calibration (with L2 and with ODIR) outperformed temperature scaling in most cases on CIFAR-10. On CIFAR-100, Dir-L2 is poor, but Dir-ODIR outperforms TempS in cw-ECE, showing the effectiveness of ODIR regularisation. However, this comes at the expense of minor increase in log-loss. According to the average rank across all deep net experiments, Dir-ODIR is best, but without statistical significance. 

The full comparison including calibration methods that use logits confirms that information loss going from logits to softmax outputs has an effect and MS-ODIR (matrix scaling with \ODIR) outperforms Dir-ODIR in 8 out of 14 cases on cw-ECE and 11 out of 14 on log-loss. However, the effect is numerically usually very small, as average relative reduction of cw-ECE and log-loss is less than 1\% (compared to the average relative reduction of over 30\% from the uncalibrated model).
According to the average rank on cw-ECE the best method is vector scaling, but this comes at the expense of increased log-loss.
According to the average rank on log-loss the best method is MS-ODIR, while its cw-ECE is on average bigger than for vector scaling by 2\%.

As the difference between MS-ODIR and vector scaling was on some models quite small, we further investigated the importance of off-diagonal coefficients in MS-ODIR. For this we introduced a new model MS-ODIR-zero which was obtained from the respective MS-ODIR model by replacing the off-diagonal entries with zeroes. In 6 out of 14 cases (c10\_convnet, c10\_densenet40, c10\_resnet110\_SD, c100\_convnet, c100\_resnet110\_SD, SVHN\_resnet152\_SD) MS-ODIR-zero and MS-ODIR had almost identical performance (difference in log-loss of less than 0.0001), indicating that ODIR regularisation had forced the off-diagonal entries to practically zero. However, MS-ODIR-zero was significantly worse in the remaining 8 out of 14 cases, indicating that the learned off-diagonal coefficients in MS-ODIR were meaningful. In all of those cases MS-ODIR outperformed VecS in log-loss. To eliminate the potential explanation that this could be due to random chance, we retrained each of these networks on 2 more train-test splits (except for SVHN\_convnet which we had used as pretrained). 
In all the reruns MS-ODIR remained better than VecS, confirming that it is important to model the pairwise effects between classes in these cases. Detailed results have been presented in the Supplemental Material. 

%In the comparison of general-purpose calibration methods, Dirichlet calibration with \ODIR{} was according to classwise-ECE better calibrated than temperature scaling on 10 out of 15 datasets (not statistically significant in CD diagrams) the critical difference diagram presented in the Supplemental Material) while performing comparably on log-loss, as see in Tables~\ref{table:XXX} and \ref{table:YYY}. 
%\TO{ADD discussion}
%Regularisation of Dirichlet calibration with \ODIR{} was better classwise-calibrated than with L2 regularisation in 10 out of 15 cases (with significance in the supplemental CD diagram).

\begin{table}
\begin{minipage}[t]{.44\textwidth}
\normalsize
\captionof{table}{Scores and ranking of calibration methods for \textbf{cw-ECE}.}
\label{table:res:cw-ece}
\vspace{6pt}
\tiny
\centering
\setlength{\tabcolsep}{.4em}
\begin{tabular}{l|c|ccc|cc}
\toprule
 &        & \multicolumn{3}{c}{general-purpose calibrators} & \multicolumn{2}{c}{calibrators using logits}\\ 
 &  Uncal &  TempS &  Dir-L2 &  Dir-ODIR &  VecS & MS-ODIR \\
\midrule
c10\_convnet         & $0.104_{6}$ & $0.044_{4}$ & $0.043_{2}$ & $0.045_{5}$ & $\mathbf{0.043_{1}}$ & $0.044_{3}$ \\
c10\_densenet40      & $0.114_{6}$ & $0.040_{5}$ & $\mathbf{0.034_{1}}$ & $0.037_{4}$ & $0.036_{2}$ & $0.037_{3}$ \\
c10\_lenet5          & $0.198_{6}$ & $0.171_{5}$ & $\mathbf{0.052_{1}}$ & $0.059_{4}$ & $0.057_{2}$ & $0.059_{3}$ \\
c10\_resnet110       & $0.098_{6}$ & $0.043_{5}$ & $\mathbf{0.032_{1}}$ & $0.039_{4}$ & $0.037_{3}$ & $0.036_{2}$ \\
c10\_resnet110\_SD    & $0.086_{6}$ & $0.031_{4}$ & $0.031_{5}$ & $0.029_{3}$ & $0.027_{2}$ & $\mathbf{0.027_{1}}$ \\
c10\_resnet\_wide32   & $0.095_{6}$ & $0.048_{5}$ & $0.032_{3}$ & $0.029_{2}$ & $0.032_{4}$ & $\mathbf{0.029_{1}}$ \\
\hline
c100\_convnet        & $0.424_{6}$ & $\mathbf{0.227_{1}}$ & $0.402_{5}$ & $0.240_{3}$ & $0.241_{4}$ & $0.240_{2}$ \\
c100\_densenet40     & $0.470_{6}$ & $0.187_{2}$ & $0.330_{5}$ & $\mathbf{0.186_{1}}$ & $0.189_{3}$ & $0.191_{4}$ \\
c100\_lenet5         & $0.473_{6}$ & $0.385_{5}$ & $0.219_{4}$ & $0.213_{2}$ & $\mathbf{0.203_{1}}$ & $0.214_{3}$ \\
c100\_resnet110      & $0.416_{6}$ & $0.201_{3}$ & $0.359_{5}$ & $\mathbf{0.186_{1}}$ & $0.194_{2}$ & $0.203_{4}$ \\
c100\_resnet110\_SD   & $0.375_{6}$ & $0.203_{4}$ & $0.373_{5}$ & $0.189_{3}$ & $\mathbf{0.170_{1}}$ & $0.186_{2}$ \\
c100\_resnet\_wide32  & $0.420_{6}$ & $0.186_{4}$ & $0.333_{5}$ & $0.180_{2}$ & $\mathbf{0.171_{1}}$ & $0.180_{3}$ \\
\hline
SVHN\_convnet        & $0.159_{6}$ & $0.038_{4}$ & $0.043_{5}$ & $0.026_{2}$ & $\mathbf{0.025_{1}}$ & $0.027_{3}$ \\
SVHN\_resnet152\_SD   & $0.019_{2}$ & $\mathbf{0.018_{1}}$ & $0.022_{6}$ & $0.020_{3}$ & $0.021_{5}$ & $0.021_{4}$ \\
\midrule 
 Average rank & 5.71 & 3.71 & 3.79 & 2.79 & 2.29 & 2.71\\
\bottomrule
\end{tabular}
\end{minipage}
\hfill
\begin{minipage}[t]{.44\textwidth}
\normalsize
\captionof{table}{Scores and ranking of calibration methods for \textbf{log-loss}.}
\label{table:res:nll}
\vspace{6pt}
\tiny
\centering
\setlength{\tabcolsep}{.4em}
\begin{tabular}{c|ccc|cc}
\toprule
       & \multicolumn{3}{c}{general-purpose calibrators} & \multicolumn{2}{c}{calibrators using logits}\\ 
 Uncal &  TempS &  Dir-L2 &  Dir-ODIR &  VecS & MS-ODIR \\
\midrule
$0.391_{6}$ & $\mathbf{0.195_{1}}$ & $0.197_{4}$ & $0.195_{2}$ & $0.197_{5}$ & $0.196_{3}$ \\
$0.428_{6}$ & $0.225_{5}$ & $\mathbf{0.220_{1}}$ & $0.224_{4}$ & $0.223_{3}$ & $0.222_{2}$ \\
$0.823_{6}$ & $0.800_{5}$ & $0.744_{2}$ & $0.744_{3}$ & $0.747_{4}$ & $\mathbf{0.743_{1}}$ \\
$0.358_{6}$ & $0.209_{5}$ & $\mathbf{0.203_{1}}$ & $0.205_{3}$ & $0.206_{4}$ & $0.204_{2}$ \\
$0.303_{6}$ & $0.178_{5}$ & $0.177_{4}$ & $0.176_{3}$ & $0.175_{2}$ & $\mathbf{0.175_{1}}$ \\
$0.382_{6}$ & $0.191_{5}$ & $0.185_{4}$ & $0.182_{2}$ & $0.183_{3}$ & $\mathbf{0.182_{1}}$ \\
\hline
$1.641_{6}$ & $\mathbf{0.942_{1}}$ & $1.189_{5}$ & $0.961_{2}$ & $0.964_{4}$ & $0.961_{3}$ \\
$2.017_{6}$ & $1.057_{2}$ & $1.253_{5}$ & $1.059_{4}$ & $1.058_{3}$ & $\mathbf{1.051_{1}}$ \\
$2.784_{6}$ & $2.650_{5}$ & $2.595_{4}$ & $2.490_{2}$ & $2.516_{3}$ & $\mathbf{2.487_{1}}$ \\
$1.694_{6}$ & $1.092_{3}$ & $1.212_{5}$ & $1.096_{4}$ & $1.089_{2}$ & $\mathbf{1.074_{1}}$ \\
$1.353_{6}$ & $0.942_{3}$ & $1.198_{5}$ & $0.945_{4}$ & $\mathbf{0.923_{1}}$ & $0.927_{2}$ \\
$1.802_{6}$ & $0.945_{3}$ & $1.087_{5}$ & $0.953_{4}$ & $0.937_{2}$ & $\mathbf{0.933_{1}}$ \\
\hline
$0.205_{6}$ & $0.151_{5}$ & $0.142_{3}$ & $0.138_{2}$ & $0.144_{4}$ & $\mathbf{0.138_{1}}$ \\
$0.085_{6}$ & $\mathbf{0.079_{1}}$ & $0.085_{5}$ & $0.080_{2}$ & $0.081_{4}$ & $0.081_{3}$ \\
\midrule 
6.0 & 3.5 & 3.79 & 2.93 & 3.14 & 1.64\\
\bottomrule
\end{tabular}
\end{minipage}
\end{table}

%According to the proper scoring rules (NLL, BS), Dirichlet calibration gives really close results to scaling methods (TODO refer to the table). Here is important to note, that methods with same regularizations and degrees of freedom are compared. The major difference is the input: logits or probabilities. The error rates of the two methods differ by a larger margin, however, there is no clear winner (or is there, refer to the table). In the case of calibration measures (ECE, classwise-ECE), it seems that the logits give an upper hand, meaning scaling methods tend to give better scores (refer to the table). Overall, Dirichlet calibration methods improve predictive performance. They are a little less calibrated than scaling methods, but much better than uncalibrated.

% \TO{CHECK when vector scaling introduced}

% \TO{mention why ODIR not attempted on non-neural experiments}

%\input{5_experiments_wide}
%\input{5_experiments_dnn}
\section{Conclusion} %ing remarks}
\label{sec:discussion}

In this paper we proposed a new parametric general-purpose multiclass calibration method called Dirichlet calibration, which is a natural extension of the two-class beta calibration method. 
Dirichlet calibration is easy to implement as a layer in a neural net, or as multinomial logistic regression on log-transformed class probabilities, and its parameters provide insights into the biases of the model.
While derived from Dirichlet-distributed likelihoods, it \emph{does not assume} that the probability vectors are actually Dirichlet-distributed within each class, similarly as logistic calibration (Platt scaling) does not assume that the scores are Gaussian-distributed, while it can be derived from Gaussian likelihoods. 

Comparisons with other general-purpose calibration methods across \emph{21 datasets} $\times$ \emph{11 models} showed best or tied best performance for Dirichlet calibration on all 8 evaluation measures. 
Evaluation with our proposed classwise-ECE measures how calibrated are the predicted probabilities on all classes, not  only on the most likely predicted class as with the commonly used (confidence-)ECE. 
On neural networks we advance the state-of-the-art by introducing the \ODIR{} regularisation scheme for matrix scaling and Dirichlet calibration, leading these to outperform temperature scaling on many deep neural networks.

Interestingly, on many deep nets Dirichlet calibration learns a map which is very close to being in a temperature scaling family. 
This raises a fundamental theoretical question of which neural architectures and training methods result in a classifier with its canonical calibration function contained in the temperature scaling family.
But even in those cases Dirichlet calibration can become useful after any kind of dataset shift, learning an interpretable calibration map to reveal the shift and recalibrate the predictions for the new context.

Deriving calibration maps from Dirichlet distributions opens up the possibility of using other distributions of the exponential family to obtain new calibration maps designed for various score types, as well as investigating scores coming from mixtures of distributions inside each class.

%However, on some heavily tuned networks temperature scaling remains the state of the art, unless there is some dataset shift after training, in which case richer families such as Dirichlet calibration and matrix scaling become beneficial again.

\newpage

\section*{Acknowledgements}

The work of MKu and MKä was supported by the Estonian Research Council under grant PUT1458.
The work of MPN and HS was supported by the SPHERE Next Steps Project funded by the UK Engineering and Physical Sciences Research Council (EPSRC), Grant EP/R005273/1.
The work of PF and HS was supported by The Alan Turing Institute under EPSRC, Grant EP/N510129/1.

\bibliographystyle{plainnat}
% \bibliography{ms}   % name your BibTeX data base

% \newpage
% \appendix
% \input{A_results}
% \input{a_reviewers.tex}

\end{document}

% --- supplement: supplement.tex ---

\maketitle
\tableofcontents
\appendix 

\section{Source code}

The instructions and code for the experiments can be found on \hyperlink{https://dirichletcal.github.io/}{https://dirichletcal.github.io/}.

% The code for the experiments on non-neural models can be found under ''non\_neural\_code.zip''. File ''README.md'' contains instructions on how to run our experiments and how to collect and summarise the results.
% For the experiments with deep neural networks, the code used for training and calibrating models is in "dirichlet\_dnn\_scripts.zip", with file "readme.txt" in the main folder containing some extra information. We also provide commands that can be used in order to re-train and re-calibrate the models in file "dirichlet\_dnn\_scripts/scripts/commands.txt".

\section{Proofs}

\begin{theorem}[Equivalence of generative, linear and canonical parametrisations]\label{thm:equiv}
The parametric families $\vmuh_{DirGen}(\vq; \valpha,\vpi)$, $\vmuh_{DirLin}(\vq; \MW,\vb)$ and $\vmuh_{Dir}(\vq; \MA,\vc)$ are equal, i.e. they contain exactly the same calibration maps.
\end{theorem}
\begin{proof}
We will prove that:
\begin{enumerate}
    \item every function in $\vmuh_{DirGen}(\vq; \valpha,\vpi)$ belongs to $\vmuh_{DirLin}(\vq; \MW,\vb)$;  
    \item every function in $\vmuh_{DirLin}(\vq; \MW,\vb)$ belongs to $\vmuh_{Dir}(\vq; \MA,\vc)$;  
%    \item every function in $\vmuh_{Dir}(\vq; \MA,\vc)$ belongs to $\vmuh_{DirLin}(\vq; \MW,\vb)$;  
%    \item every function in $\vmuh_{DirLin}(\vq; \MW,\vb)$ belongs to $\vmuh_{DirGen}(\vq; \valpha,\vpi)$.  
    \item every function in $\vmuh_{Dir}(\vq; \MA,\vc)$ belongs to $\vmuh_{DirGen}(\vq; \valpha,\vpi)$.  
\end{enumerate}
\paragraph{1.}
Consider a function $\muh(\vq)=\vmuh_{DirGen}(\vq; \valpha,\vpi)$.
Let us start with an observation that any vector $\vx=(x_1,\dots,x_k)\in(0,\infty)^k$ with only positive elements can be renormalised to add up to $1$ using the expression $\softmax(\vln(\vx))$, since: 
\begin{align*}
\softmax(\vln(\vx))=\vexp(\vln(\vx))/(\sum_i \exp(\ln(x_i)))=\vx/(\sum_i x_i)
\end{align*}
where $\vexp$ is an operator applying exponentiation element-wise.
Therefore, 
\begin{align*}
    \muh(\vq)=\softmax(\vln(\pi_1 f_1(\vq),\dots,\pi_k f_k(\vq)))
%    =\softmax(\vln\pi+\vln(f_1(\vq),\dots,f_k(\vq)))
\end{align*}
where $f_i(\vq)$ is the probability density function of the distribution $Dir(\valpha^{(i)})$ where $\valpha^{(i)}$ is the $i$-th row of matrix $\valpha$. 
Hence, $f_i(\vq)=\frac{1}{B(\valpha^{(i)})}\prod_{j=1}^k q_j^{\alpha_{ij}-1}$, where $B(\cdot)$ denotes the multivariate beta function.
%Therefore we can continue as:
%\begin{align*}
%\muh(\vq)=\softmax(\vln\pi+\vln(f_1(\vq),\dots,f_k(\vq)))
%\end{align*}
Let us define a matrix $\MW$ and vector $\vb$ as follows:
\begin{align*}
w_{ij}=\alpha_{ij}-1,\qquad b_i=\ln(\pi_i)-\ln(B(\valpha^{(i)}))
\end{align*}
with $w_{ij}$ and $\alpha_{ij}$ denoting elements of matrices $\MW$ and $\valpha$, respectively, and $b_i,\pi_i$ denoting elements of vectors $\vb$ and $\vpi$.
Now we can write
\begin{align*}
   \ln(\pi_i f_i(\vq))
   &=\ln(\pi_i)-\ln(B(\valpha^{(i)}))+\ln\prod_{j=1}^k q_j^{\alpha_{ij}-1} \\
   &=\ln(\pi_i)-\ln(B(\valpha^{(i)}))+\sum_{j=1}^k (\alpha_{ij}-1)\ln(q_j) \\
   &=b_i+\sum_{j=1}^k w_{ij}\ln(q_j)
\end{align*}
and substituting this back into $\muh(\vq)$ we get:
\begin{align*}
\muh(\vq)&=\softmax(\vln(\pi_1 f_1(\vq),\dots,\pi_k f_k(\vq))) \\
&=\softmax(\vb+\MW\vln(\vq))=\muh_{DirLin}(\vq; \MW,\vb)
\end{align*}

\paragraph{2.}
Consider a function $\muh(\vq)=\vmuh_{DirLin}(\vq; \MW,\vb)$.
Let us define a matrix $\MA$ and vector $\vc$ as follows:
\begin{align*}
a_{ij}=w_{ij}-\min_{i}w_{ij},\qquad \vc=\softmax(\MW\,\vln\,\vu+\vb)
\end{align*}
with $a_{ij}$ and $w_{ij}$ denoting elements of matrices $\MA$ and $\MW$, respectively, and $\vu=(1/k,\dots,1/k)$ is a column vector of length $k$.
Note that $\MA\,\vx=\MW\,\vx+const_1$ and $\vln\,\softmax(\vx)=\vx+const_2$ for any $x$ where $const_1$ and $const_2$ are constant vectors (all elements are equal), but the constant depends on $\vx$. 
Taking into account that $\softmax(\vv+const)=\softmax(\vv)$ for any vector $\vv$ and constant vector $const$, we obtain:
\begin{align*}
\muh_{Dir}(\vq; \MA,\vc)
&=\softmax(\MA\,\vln\,\frac{\vq}{1/k}+\vln\,\vc)
=\softmax(\MW\,\vln\,\frac{\vq}{1/k}+const_1+\vln\,\vc) \\
&=\softmax(\MW\,\vln\,\vq-\MW\,\vln\,\vu+const_1+\vln\,\softmax(\MW\,\vln\,\vu+\vb)) \\
&=\softmax(\MW\,\vln\,\vq-\MW\,\vln\,\vu+const_1+\MW\,\vln\,\vu+\vb+const_2) \\
&=\softmax(\MW\,\vln\,\vq+\vb+const_1+const_2)
=\softmax(\MW\,\vln\,\vq+\vb)=\muh_{DirLin}(\vq; \MW,\vb)\\
&=\muh(\vq)
\end{align*}
%\qed

% \paragraph{3.}
% Consider a function $\muh(\vq)=\vmuh_{Dir}(\vq; \MA,\vc)$.
% Let us define a matrix $\MW$ and vector $\vb$ as follows:
% $$\MW=\MA,\qquad \vb=\vln\,\vc-\MA\,\vln\,\vu$$
% where $\vu=(1/k,\dots,1/k)$ is a column vector of length $k$.
% Then:
% \begin{align*}
% \muh_{DirLin}(\vq; \MW,\vb)
% &=\softmax(\MW\,\vln\,\vq+\vb)
%  =\softmax(\MA\,\vln\,\vq+\vln\,\vc-\MA\,\vln\,\vu)
% =\softmax(\MA\,\vln\,\frac{\vq}{1/k}+\vln\,\vc) \\
% &=\muh_{Dir}(\vq; \MA,\vc)
% =\muh(\vq)
% \end{align*}

\paragraph{3.}
Consider a function $\muh(\vq)=\vmuh_{Dir}(\vq; \MA,\vc)$.
Let us define a matrix $\valpha$, vector $\vb$ and vector $\pi$ as follows:
\begin{align*}
\alpha_{ij}=a_{ij}+1,\qquad \vb=\vln\,\vc-\MA\,\vln\,\vu,\qquad \pi_i=\exp(b_i)\cdot B(\valpha^{(i)})
\end{align*}
with $\alpha_{ij}$ and $a_{ij}$ denoting elements of matrices $\valpha$ and $\MA$, respectively, and $\vu=(1/k,\dots,1/k)$ is a column vector of length $k$.
We can now write:
\begin{align*}
\muh(\vq)&=\muh_{Dir}(\vq; \MA,\vc)
=\softmax(\MA\,\vln\,\frac{\vq}{1/k}+\vln\,\vc)
=\softmax(\MA\,\vln\,\vq-\MA\,\vln\,\vu+\vln\,\vc) \\
&=\softmax((\valpha-1)\vln\,\vq+\vb)
\end{align*}
Element $i$ in the vector within the softmax is equal to:
\begin{align*}
    \sum_{j=1}^k (\valpha_{ij}-1)\ln(q_j)+b_j 
    &= \sum_{j=1}^k (\valpha_{ij}-1)\ln(q_j) +\ln(\pi_i\cdot\frac{1}{B(\valpha^{(i)})}) \\
    &= \ln(\pi_i\cdot \frac{1}{B(\valpha^{(i)})} \prod_{j=1}^k q_j^{\valpha_{ij}-1}) \\
    &= \ln(\pi_i\cdot f_i(\valpha^{(i)}))
\end{align*}    
and therefore: 
\begin{align*}
\muh(\vq)=\softmax((\valpha-1)\vln(\vq)+\vb)=\softmax(\ln(\pi_i\cdot f_i(\valpha^{(i)})))=\vmuh_{DirGen}(\vq; \valpha,\vpi)
\end{align*}
    
\end{proof}

The following proposition proves that temperature scaling can be viewed as a general-purpose calibration method, being a special case within the Dirichlet calibration map family.
\begin{proposition}
Let us denote the temperature scaling family by $\muh'_{TempS}(\vz; t)=\softmax(\vz/t)$ where $\vz$ are the logits.
Then for any $t$, temperature scaling can be expressed as 
\begin{align*}
\muh'_{TempS}(\vz; t)=\muh_{DirLin}(\softmax(\vz); \frac{1}{t}\MI, \vzero)
\end{align*}
where $\MI$ is the identity matrix and $\vzero$ is the vector of zeros.
\end{proposition}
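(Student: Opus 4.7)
The plan is to unfold the right-hand side by applying the definition of $\muh_{DirLin}$ and then use the standard identity that $\vln\circ\softmax$ acts as the identity modulo a constant vector, which is already exploited in step 2 of the previous theorem. The proof should be a short direct calculation.

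First I would substitute $\MW=\tfrac{1}{t}\MI$ and $\vb=\vzero$ into the definition of $\muh_{DirLin}$, obtaining
\begin{align*}
\muh_{DirLin}(\softmax(\vz); \tfrac{1}{t}\MI, \vzero)=\softmax\Bigl(\tfrac{1}{t}\,\vln\,\softmax(\vz)\Bigr).
\end{align*}
Then I would invoke the fact that $\vln\,\softmax(\vz)=\vz+const$ for some constant vector depending on $\vz$ (namely $-\ln(\sum_j e^{z_j})\,\mathbf{1}$), and note that multiplying a constant vector by the scalar $1/t$ still yields a constant vector. Combined with the well-known invariance $\softmax(\vv+const)=\softmax(\vv)$, this gives
\begin{align*}
\softmax\Bigl(\tfrac{1}{t}\,\vln\,\softmax(\vz)\Bigr)=\softmax\Bigl(\tfrac{1}{t}\vz+const\Bigr)=\softmax(\vz/t)=\muh'_{TempS}(\vz; t),
\end{align*}
which is the desired equality.

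There is no real obstacle here; the only subtlety is making sure to cite (or briefly re-derive) the identity $\vln\,\softmax(\vz)=\vz+const$ and the softmax shift invariance, both of which are already stated in the proof of Theorem~\ref{thm:equiv}. The calculation is a two-line specialisation of the machinery developed there.
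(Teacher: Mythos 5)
Your proposal is correct and follows essentially the same route as the paper's proof: unfold the definition of $\muh_{DirLin}$, apply $\vln\,\softmax(\vz)=\vz+const$, observe that $\tfrac{1}{t}\,const$ is still a constant vector, and conclude via the shift invariance of $\softmax$. No gaps.
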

\begin{proof}
Let us first observe that for any $\vx\in\sR^k$ there exists a constant vector $const$ (all elements are equal) such that $\vln\,\softmax(\vx)=\vx+const$.
Furthermore, $\softmax(\vv+const)=\softmax(\vv)$ for any vector $\vv$ and any constant vector $const$.
Therefore,
\begin{align*}
\muh_{DirLin}(\softmax(\vz); \frac{1}{t}\MI, \vzero)
&=\softmax(\frac{1}{t}\,\MI\,\vln\,\softmax(\vz))) \\
&=\softmax(\frac{1}{t}\,\MI\,(\vz+const)) \\
&=\softmax(\frac{1}{t}\,\MI\,\vz+\frac{1}{t}\,\MI\, const) \\
&=\softmax(\vz/t+const') \\
&=\softmax(\vz/t) \\
&=\muh'_{TempS}(\vz; t)
\end{align*}
where $const'=\frac{1}{t}\,\MI\, const$ is a constant vector as a product of a diagonal matrix with a constant vector.
\end{proof}

\section{Dirichlet calibration}

In this section we show some examples of reliability diagrams and other plots that can help to understand the representational power of Dirichlet calibration compared with other calibration methods.

\subsection{Reliability diagram examples}

We will look at two examples of reliability diagrams on the original classifier and after applying $6$ calibration methods.
%We will look at two examples of reliability diagrams and discuss how to interpret the posterior probabilities from the original classifier, and $6$ calibration methods.
%The first example shows the confidence-reliability diagram and classwise reliability diagrams on the right. 
%The first example shows the difference between the commonly used reliability diagram that aggregates the results of all the classes, and how it is hiding some information that is clearly visible when we segregate the diagram into the underlying classes.
%The second example with the \emph{car} dataset and Naive Bayes classifier shows with more details how and why Dirichlet calibration achieves better performance than the other methods because of a richer function family.
Figure \ref{fig:mlp:bs:reldiag} shows the first example for the 3 class classification dataset \emph{balance-scale} and the classifier MLP.
This figure shows the confidence-reliability diagram in the first column and the classwise-reliability diagrams in the other columns. %combined reliability diagrams in the first column, and then disaggregates the reliability diagram for each class.
Figure \ref{fig:nb:reldiag:mlp:bal:uncal} shows how posterior probabilities from the MLP have small gaps between the true class proportions and the predicted means.
This visualisation may indicate that the original classifier is already well calibrated.
However, when we separate the reliability diagram per  class, we notice that the predictions for the first class are underconfident; as indicated by low mean predictions containing high proportions of the true class.
On the other hand, classes 2 and 3 are overconfident in the regions of posterior probabilities compressed between $[0.2, 0.5]$ while being underconfident in higher regions.
The discrepancy shown by analysing the individual reliability diagrams seems to compensate in the general picture of the aggregated one.

\begin{table}[tph]
\normalsize
\caption{Averaged results for the confidence-ECE and classwise-ECE metrics of $6$ calibrators applied on an MLP trained in the \emph{balance-scale} dataset.}
\label{table:mlp:balance:ece}
\small
\centering
\setlength{\tabcolsep}{.6em}
\begin{tabular}{l|cccccc|c}
\toprule
 & DirL2 & Beta & FreqB & Isot & WidB & TempS & Uncal\\
\midrule
conf-ECE & $\mathbf{0.04_{1}}$ & $0.05_{3}$ & $0.13_{7}$ & $0.05_{2}$ & $0.08_{6}$ & $0.05_{4}$ & $0.08_{5}$\\
cw-ECE & $0.12_{2}$ & $0.13_{3}$ & $0.29_{7}$ & $\mathbf{0.12_{1}}$ & $0.17_{5}$ & $0.15_{4}$ & $0.20_{6}$\\
%p-conf-ece & $0.32_{4}$ & $\mathbf{0.44_{1}}$ & $0.00_{7}$ & $0.39_{3}$ & $0.11_{6}$ & $0.43_{2}$ & $0.17_{5}$\\
%p-cw-ece & $0.32_{3}$ & $0.39_{2}$ & $0.00_{7}$ & $\mathbf{0.43_{1}}$ & $0.26_{4}$ & $0.09_{6}$ & $0.09_{5}$\\
\bottomrule
\end{tabular}
\end{table}

The following subfigures show how the different calibration methods try to reduce ECE, occasionally increasing the error.
As can be seen in Table \ref{table:mlp:balance:ece}, Dirichlet L2 and One-vs.Rest isotonic regression obtain the lowest ECE while One-vs.Rest frequency binning makes the original calibration worse.
Looking at Figure \ref{fig:nb:reldiag:mlp:bal:temp} it is possible to see how temperature scaling manages to reduce the overall overconfidence in the higher range of probabilities for classes 2 and 3, but makes the situation worse in the interval $[0.2, 0.6]$. However, it manages to reduce the overall ECE.

\begin{figure}
  \centering
  \begin{subfigure}[b]{.26\linewidth}
  \centering
    \includegraphics[width=\linewidth]{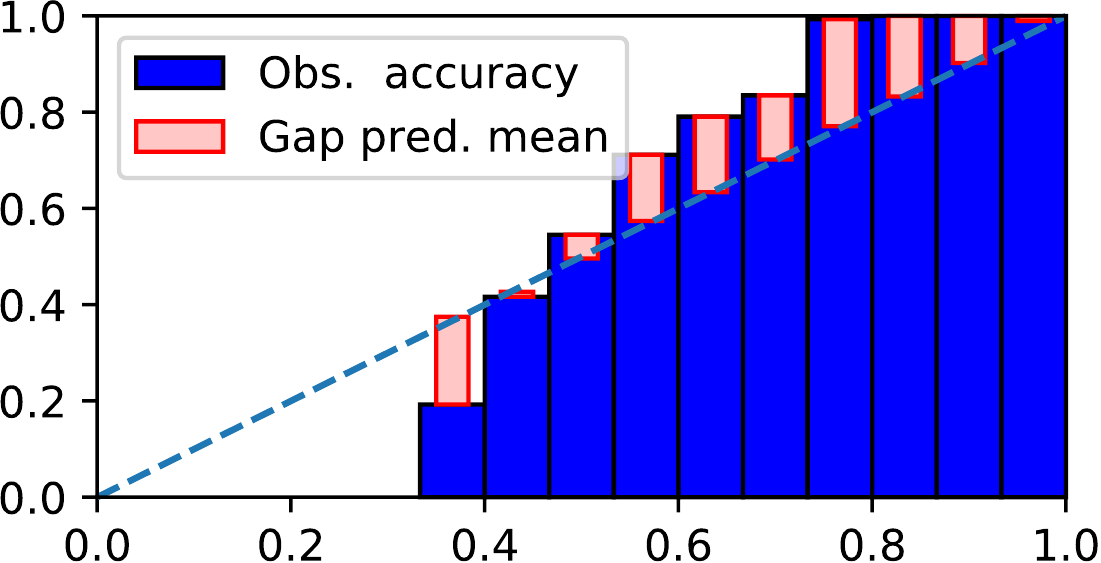}
    \caption{Uncalibrated}
    \label{fig:nb:reldiag:mlp:bal:uncal}
  \end{subfigure}
  \hfill
  \begin{subfigure}[b]{.71\linewidth}
  \centering
    \includegraphics[width=\linewidth]{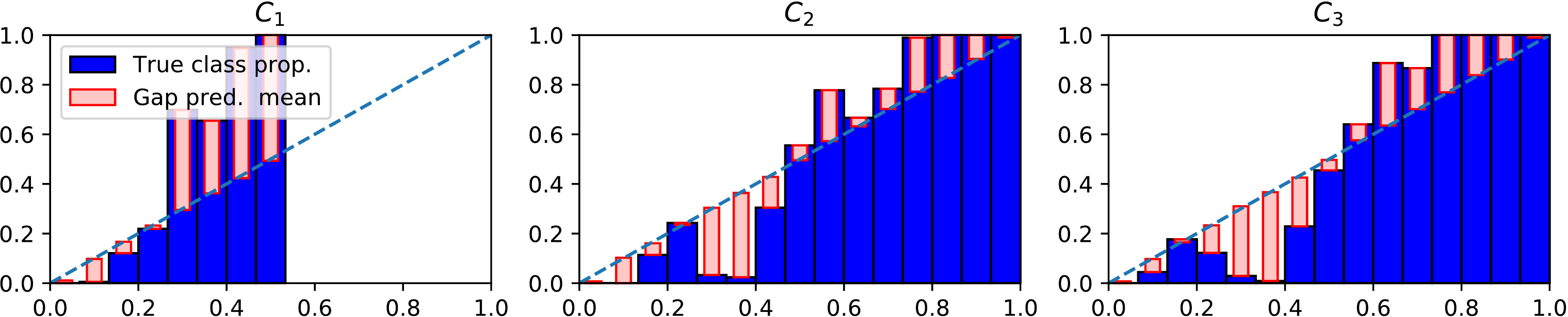}
    \caption{Uncalibrated per class}
    \label{fig:nb:reldiag:class:mlp:bal:uncal}
  \end{subfigure}
  
  \begin{subfigure}[b]{.26\linewidth}
  \centering
    \includegraphics[width=\linewidth]{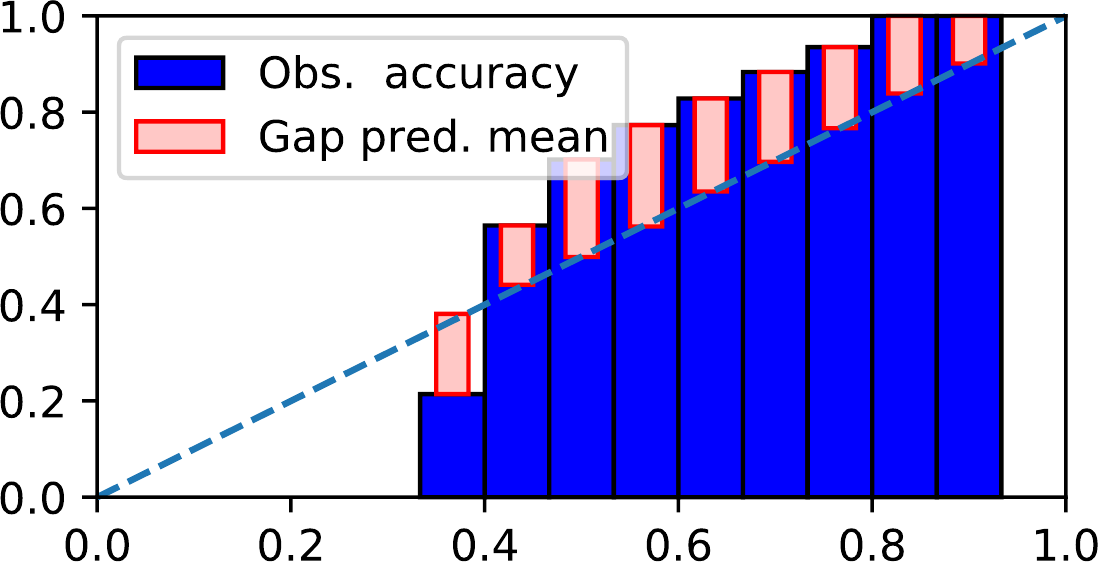}
    \caption{OvR Frequency Binning}
    \label{fig:nb:reldiag:mlp:bal:freqbin}
  \end{subfigure}
  \hfill
  \begin{subfigure}[b]{.71\linewidth}
  \centering
    \includegraphics[width=\linewidth]{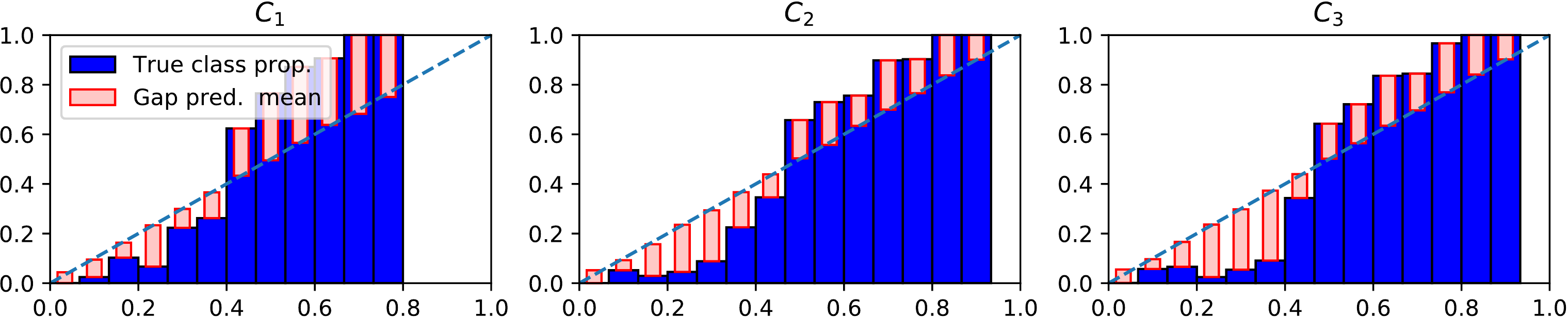}
    \caption{One-vs.-Rest Frequency Binning per class}
    \label{fig:nb:reldiag:class:mlp:bal:freqbin}
  \end{subfigure}
  
  \begin{subfigure}[b]{.26\linewidth}
  \centering
    \includegraphics[width=\linewidth]{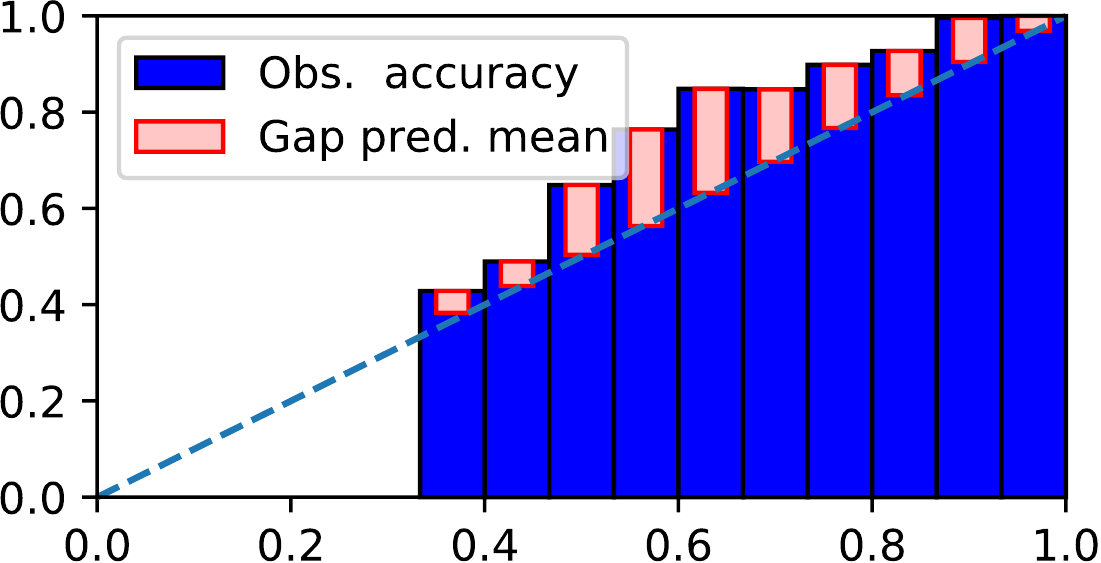}
    \caption{OvR Width Binning}
    \label{fig:nb:reldiag:mlp:bal:widthbin}
  \end{subfigure}
  \hfill
  \begin{subfigure}[b]{.71\linewidth}
  \centering
    \includegraphics[width=\linewidth]{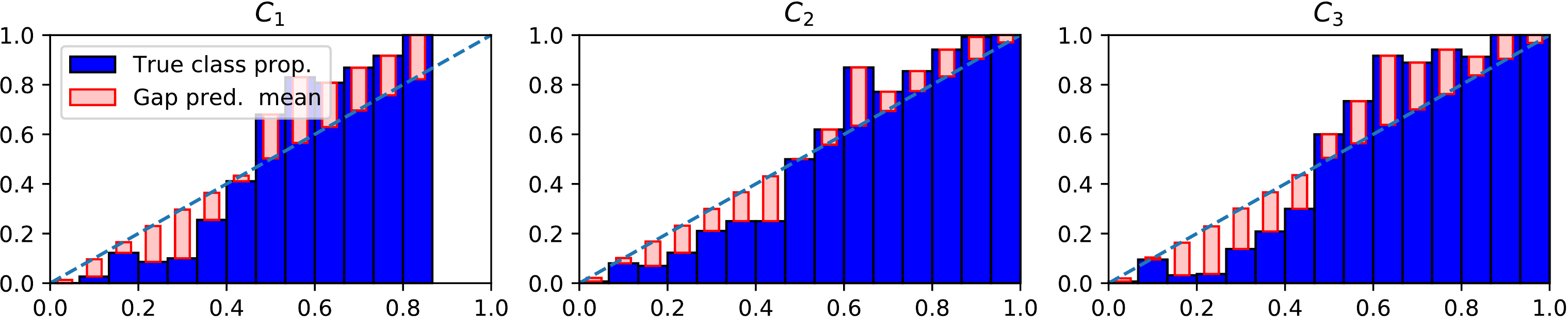}
    \caption{One-vs.-Rest Width Binning calibration per class}
    \label{fig:nb:reldiag:class:mlp:bal:widthbin}
  \end{subfigure}
  
  \begin{subfigure}[b]{.26\linewidth}
  \centering
    \includegraphics[width=\linewidth]{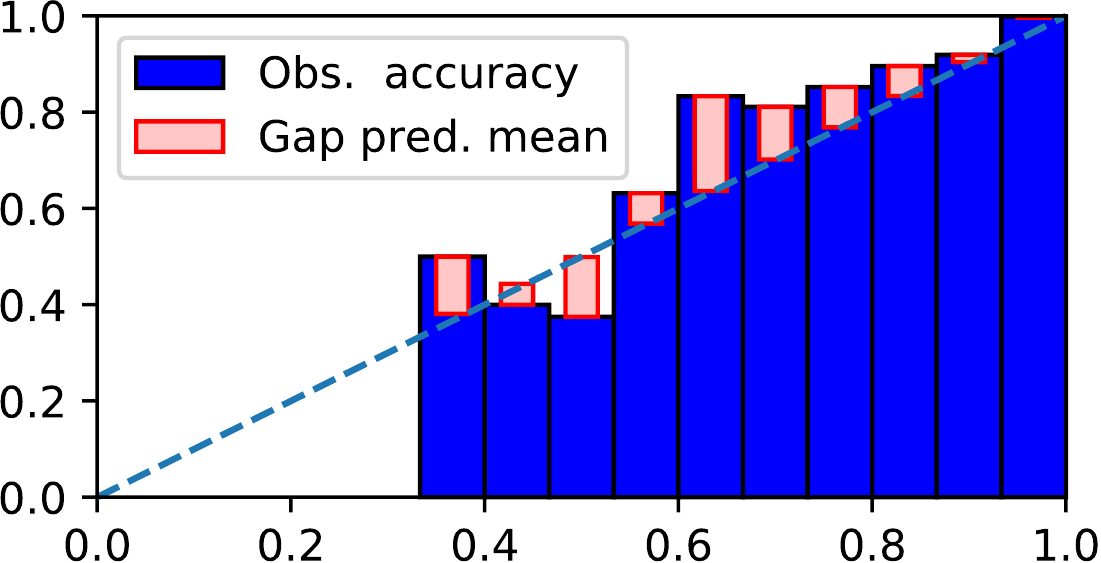}
    \caption{OvR Isotonic calibration}
    \label{fig:nb:reldiag:mlp:bal:iso}
  \end{subfigure}
  \hfill
  \begin{subfigure}[b]{.71\linewidth}
  \centering
    \includegraphics[width=\linewidth]{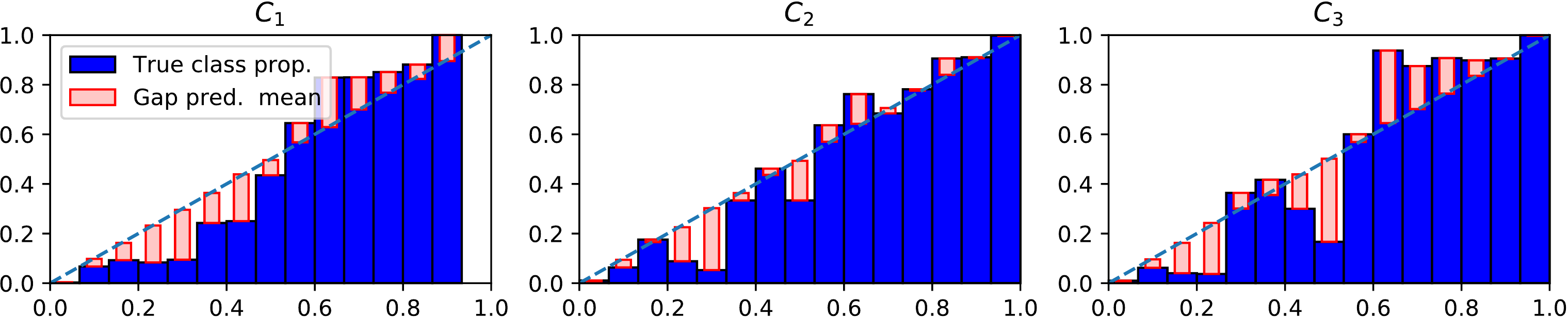}
    \caption{One-vs.-Rest Isotonic calibration per class}
    \label{fig:nb:reldiag:class:mlp:bal:iso}
  \end{subfigure}
  
  \begin{subfigure}[b]{.26\linewidth}
  \centering
    \includegraphics[width=\linewidth]{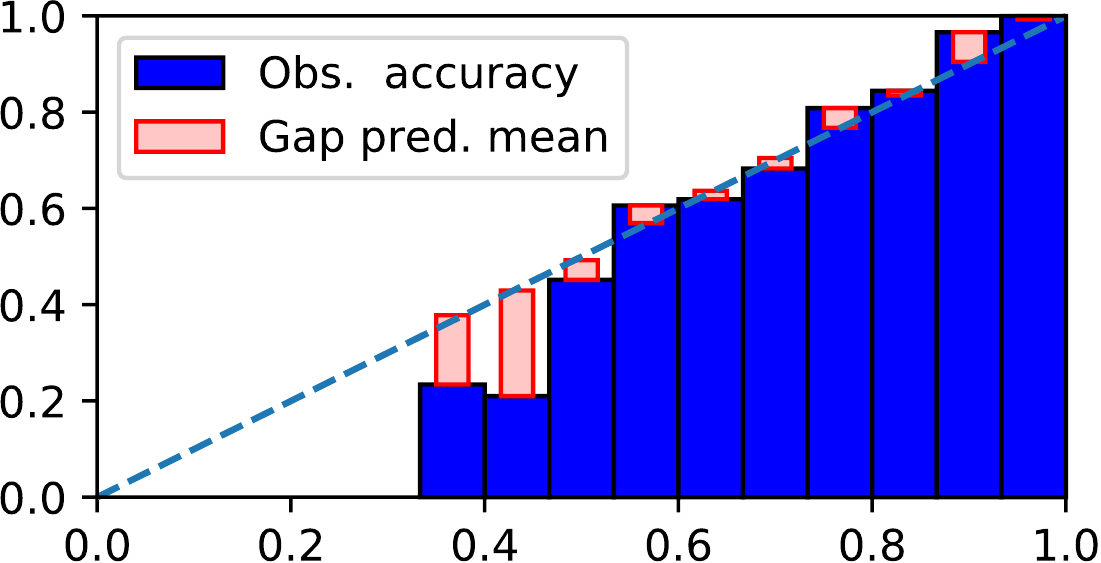}
    \caption{Temperature Scaling}
    \label{fig:nb:reldiag:mlp:bal:temp}
  \end{subfigure}
  \hfill
  \begin{subfigure}[b]{.71\linewidth}
  \centering
    \includegraphics[width=\linewidth]{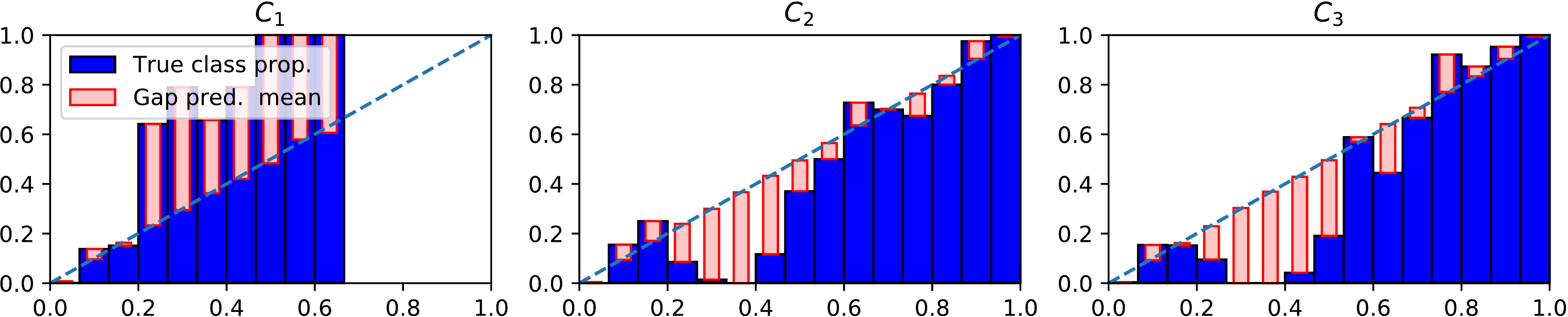}
    \caption{Temperature Scaling}
    \label{fig:nb:reldiag:class:mlp:bal:temp}
  \end{subfigure}
  
  \begin{subfigure}[b]{.26\linewidth}
  \centering
    \includegraphics[width=\linewidth]{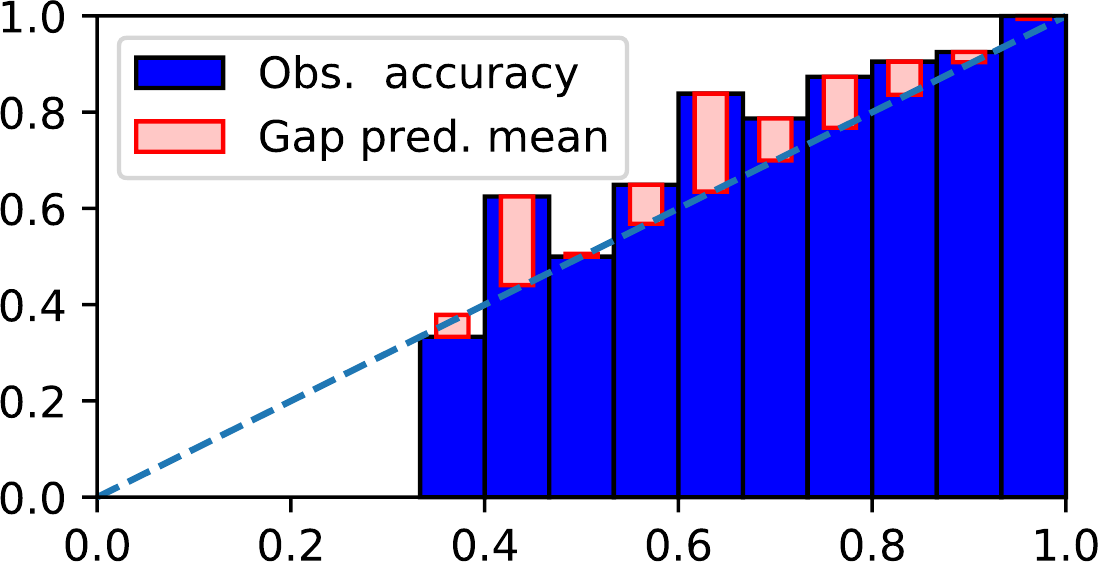}
    \caption{OvR Beta calibration}
    \label{fig:nb:reldiag:mlp:bal:beta}
  \end{subfigure}
  \hfill
  \begin{subfigure}[b]{.71\linewidth}
  \centering
    \includegraphics[width=\linewidth]{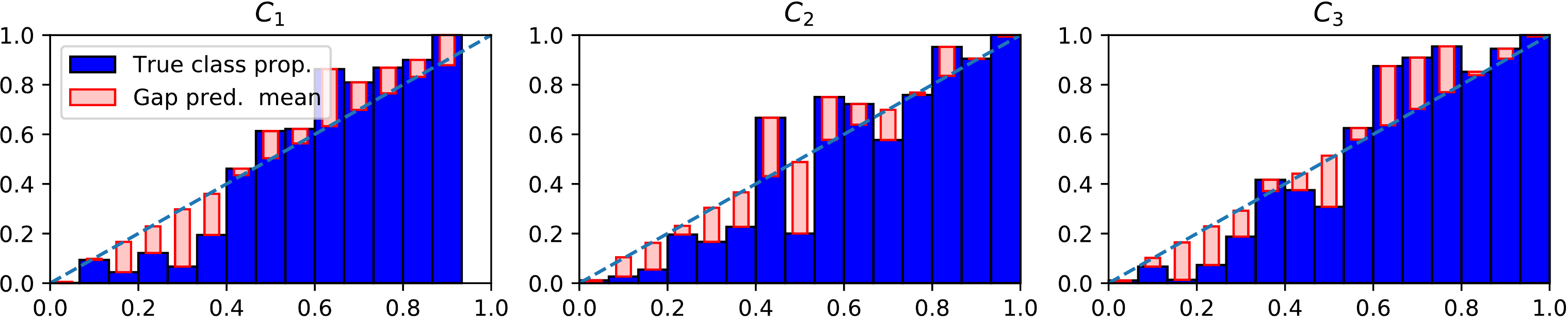}
    \caption{One-vs.-Rest Beta calibration per class}
    \label{fig:nb:reldiag:class:mlp:bal:beta}
  \end{subfigure}
  
  \begin{subfigure}[b]{.26\linewidth}
  \centering
    \includegraphics[width=\linewidth]{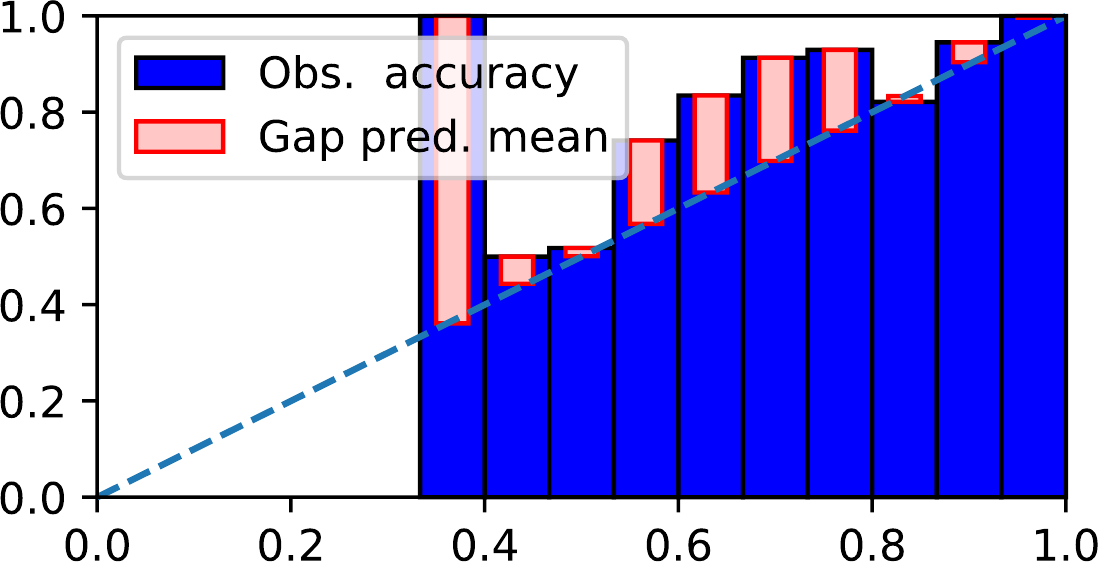}
    \caption{Dirichlet L2}
    \label{fig:nb:reldiag:mlp:bal:dirl2}
  \end{subfigure}
  \hfill
  \begin{subfigure}[b]{.71\linewidth}
  \centering
    \includegraphics[width=\linewidth]{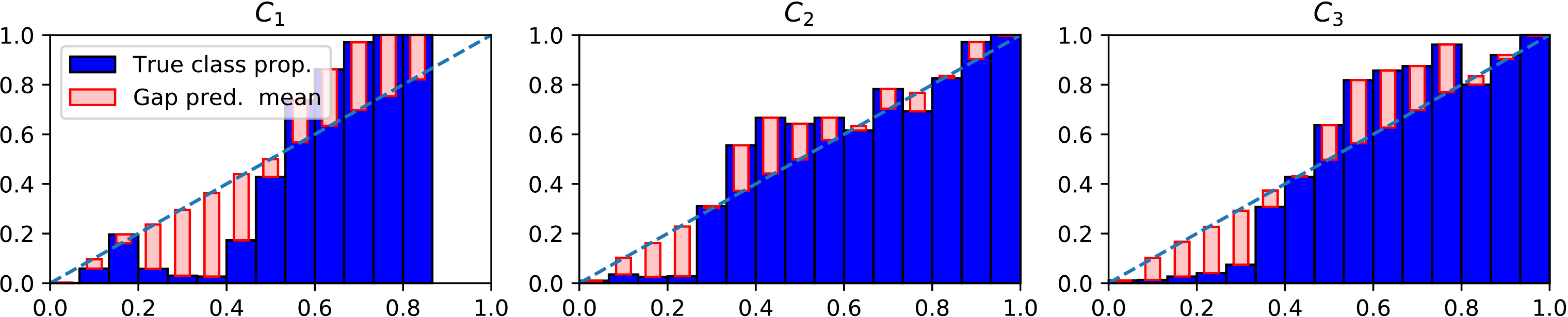}
    \caption{Dirichlet with L2 regularisation per class}
    \label{fig:nb:reldiag:class:mlp:bal:dirl2}
  \end{subfigure}
  \caption{Confidence-reliability diagrams in the first column and classwise-reliability diagrams in the remaining columns, for a real experiment with the multilayer perceptron classifier on the balance-scale dataset and a subset of the calibrators. All the test partitions from the 5 times 5-fold-cross-validation have been aggregated to draw every plot.}
  \label{fig:mlp:bs:reldiag}
\end{figure}

In the second example we show $3$ calibration methods for a 4 class classification problem (car dataset) applied on the scores of an Adaboost SAMME classifier.
Figure \ref{fig:nb:reldiag:adas:car} shows one reliability diagram per class ($C_1$ \emph{acceptable}, $C_2$ \emph{good}, $C_3$ \emph{unacceptable}, and $C_4$ \emph{very good}).

From this Figure we can see that the uncalibrated model is underconfident for classes 1, 2 and 3, showing posterior probabilities never higher than $0.7$, while having true class proportions higher than $0.7$ in the mentioned interval.
We can see that after applying some of the calibration models the posterior probabilities reach higher probability values.
\begin{table}[tph]
\normalsize
\caption{Averaged results for the confidence-ECE and classwise-ECE metrics of $6$ calibrators applied on an Adaboost SAMME trained in the \emph{car} dataset.}
\label{table:adas:car:ece}
\small
\centering
\setlength{\tabcolsep}{.6em}
\begin{tabular}{l|cccccc|c}
\toprule
 & DirL2 & Beta & FreqB & Isot & WidB & TempS & Uncal\\
\midrule
conf-ECE & $\mathbf{0.07_{1}}$ & $0.10_{4}$ & $0.12_{5}$ & $0.07_{2}$ & $0.09_{3}$ & $0.14_{7}$ & $0.14_{6}$\\
cw-ECE & $0.18_{2}$ & $0.23_{3}$ & $0.29_{5}$ & $\mathbf{0.18_{1}}$ & $0.25_{4}$ & $0.32_{7}$ & $0.29_{6}$\\
%p-conf-ece & $0.12_{2}$ & $0.03_{4}$ & $0.00_{5}$ & $\mathbf{0.13_{1}}$ & $0.06_{3}$ & $0.00_{6.5}$ & $0.00_{6.5}$\\
%p-cw-ece & $0.07_{2}$ & $0.00_{4}$ & $0.00_{5}$ & $\mathbf{0.09_{1}}$ & $0.03_{3}$ & $0.00_{6.5}$ & $0.00_{6.5}$\\
\bottomrule
\end{tabular}
\end{table}
As can be seen in Table \ref{table:adas:car:ece}, Dirichlet L2 and One-vs.Rest Isotonic Regression obtain the lowest ECE while Temperature Scaling makes the original calibration worse.
Figure \ref{fig:nb:reldiag:class:adas:car:dirl2} shows how Dirichlet calibration with L2 regularisation achieved the largest spread of probabilities, also reducing the error mean gap with the predictions and the true class proportions. On the other hand, temperature scaling reduced ECE for class 1, but hurt the overall performance for the other classes.

\begin{figure}
  \centering
  \begin{subfigure}[b]{.9\linewidth}
  \centering
    \includegraphics[width=\linewidth]{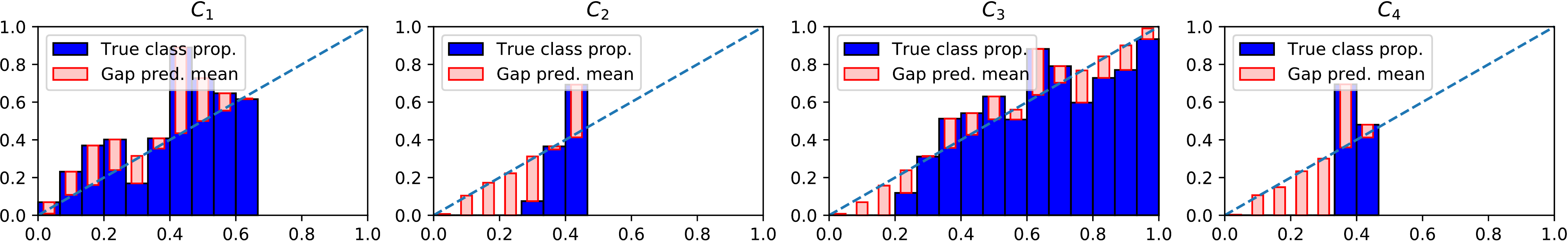}
    \caption{Uncalibrated}
    \label{fig:nb:reldiag:class:adas:car:uncal}
  \end{subfigure}
  
%  \begin{subfigure}[b]{.9\linewidth}
%  \centering
%    \includegraphics[width=\linewidth]{}
%    \caption{One-vs.-Rest Frequency Binning}
%    \label{fig:nb:reldiag:class:adas:car:freqbin}
%  \end{subfigure}
  
%  \begin{subfigure}[b]{.9\linewidth}
%  \centering
%    \includegraphics[width=\linewidth]{}
%    \caption{One-vs.-Rest Width Binning calibration}
%    \label{fig:nb:reldiag:class:adas:car:widthbin}
%  \end{subfigure}
  
  \begin{subfigure}[b]{.9\linewidth}
  \centering
    \includegraphics[width=\linewidth]{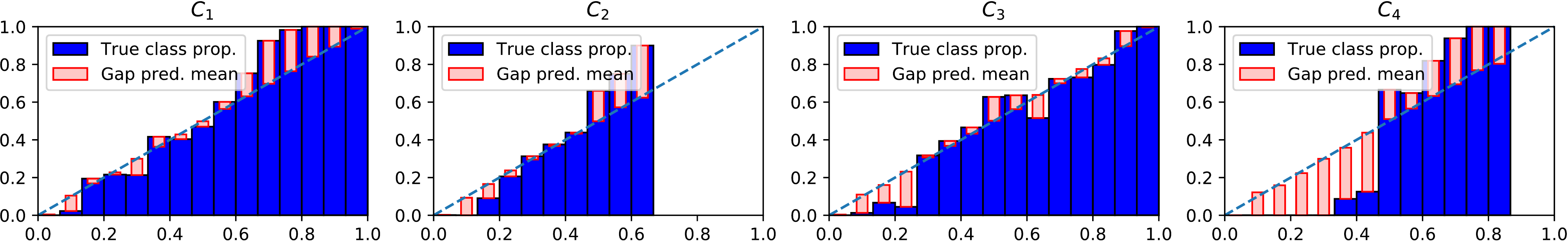}
    \caption{One-vs.-Rest Isotonic calibration}
    \label{fig:nb:reldiag:class:adas:car:iso}
  \end{subfigure}
  
  \begin{subfigure}[b]{.9\linewidth}
  \centering
    \includegraphics[width=\linewidth]{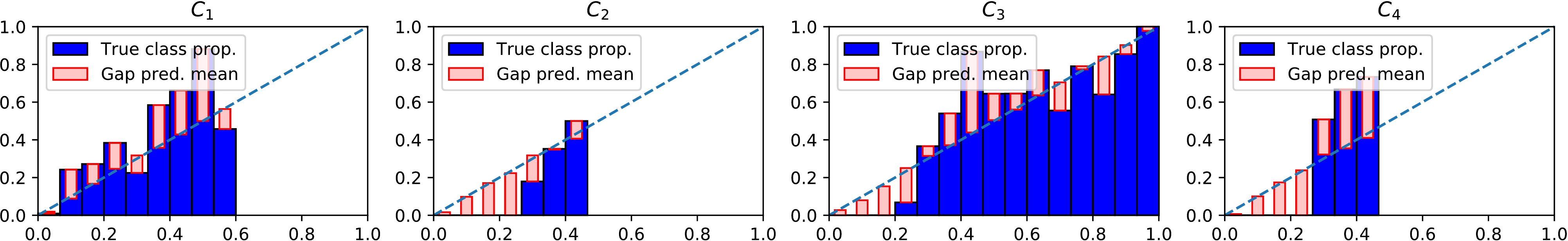}
    \caption{Temperature Scaling}
    \label{fig:nb:reldiag:class:adas:car:temp}
  \end{subfigure}
  
%  \begin{subfigure}[b]{.9\linewidth}
%  \centering
%    \includegraphics[width=\linewidth]{}
%    \caption{One-vs.-Rest Beta calibration}
%    \label{fig:nb:reldiag:class:adas:car:beta}
%  \end{subfigure}
  
  \begin{subfigure}[b]{.9\linewidth}
  \centering
    \includegraphics[width=\linewidth]{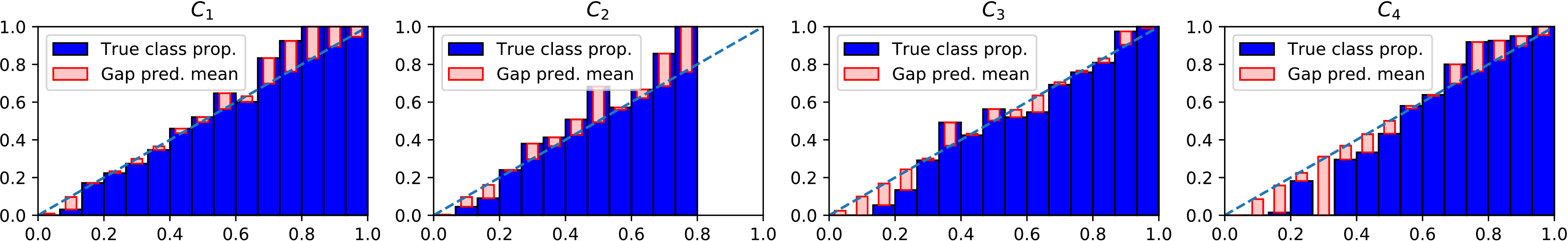}
    \caption{Dirichlet with L2 regularisation}
    \label{fig:nb:reldiag:class:adas:car:dirl2}
  \end{subfigure}
  \caption{Reliability diagrams per class for a real experiment with the classifier Ada boost SAMME on the car dataset and $3$ calibrators. The test partitions from the 5 times 5-fold-cross-validation have been aggregated to draw every plot.}
  \label{fig:nb:reldiag:adas:car}
\end{figure}

A more detailed depiction of the previous reliability diagrams can be seen in Figure \ref{fig:nb:pos:scores:class}.
In this case, the posterior probabilities are not introduced in bins, but a boxplot summarises their full distribution.
The first observation here is, for the \emph{good} and \emph{very good} classes, the uncalibrated model tends to predict probability vectors with small variance, i.e. the outputs do not change much among different instances.
Among the calibration approaches, temperature scaling still maintains this low level of variance, while both isotonic and Dirichlet L2 manage to show a higher variance on the outputs.
While this observation cannot be justified here without quantitative analysis, another observation clearly shows an advantage of using Dirichlet L2.
For the \emph{acceptable} class, only Dirichlet L2 is capable of providing the highest mean probability for the correct class, while the other three methods tend to put higher probability mass on the \emph{unacceptable} class on average.

\begin{figure}
  \begin{subfigure}[b]{.24\linewidth}
  \centering
    \includegraphics[width=\linewidth]{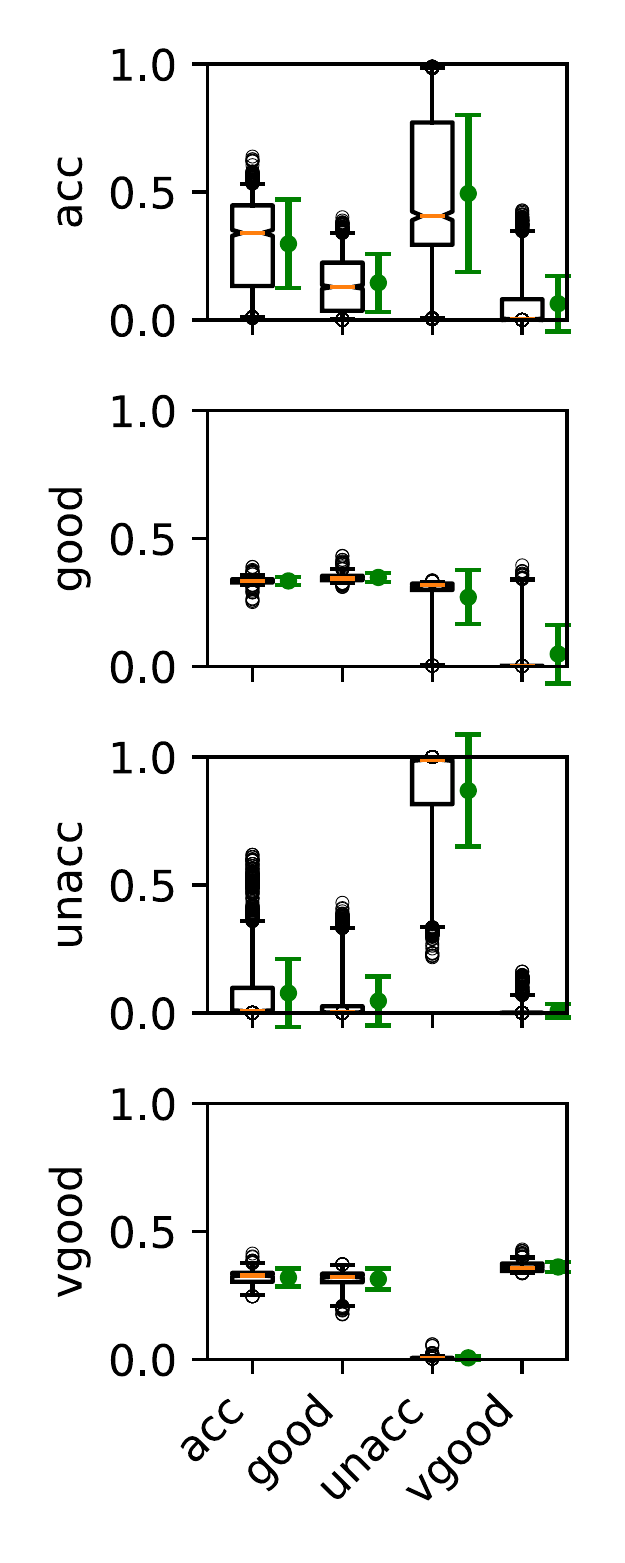}
    \caption{Uncalibrated}
    \label{fig:nb:pos:scores:class:adas:car:uncal}
  \end{subfigure}
  \hfill
  \begin{subfigure}[b]{.24\linewidth}
  \centering
    \includegraphics[width=\linewidth]{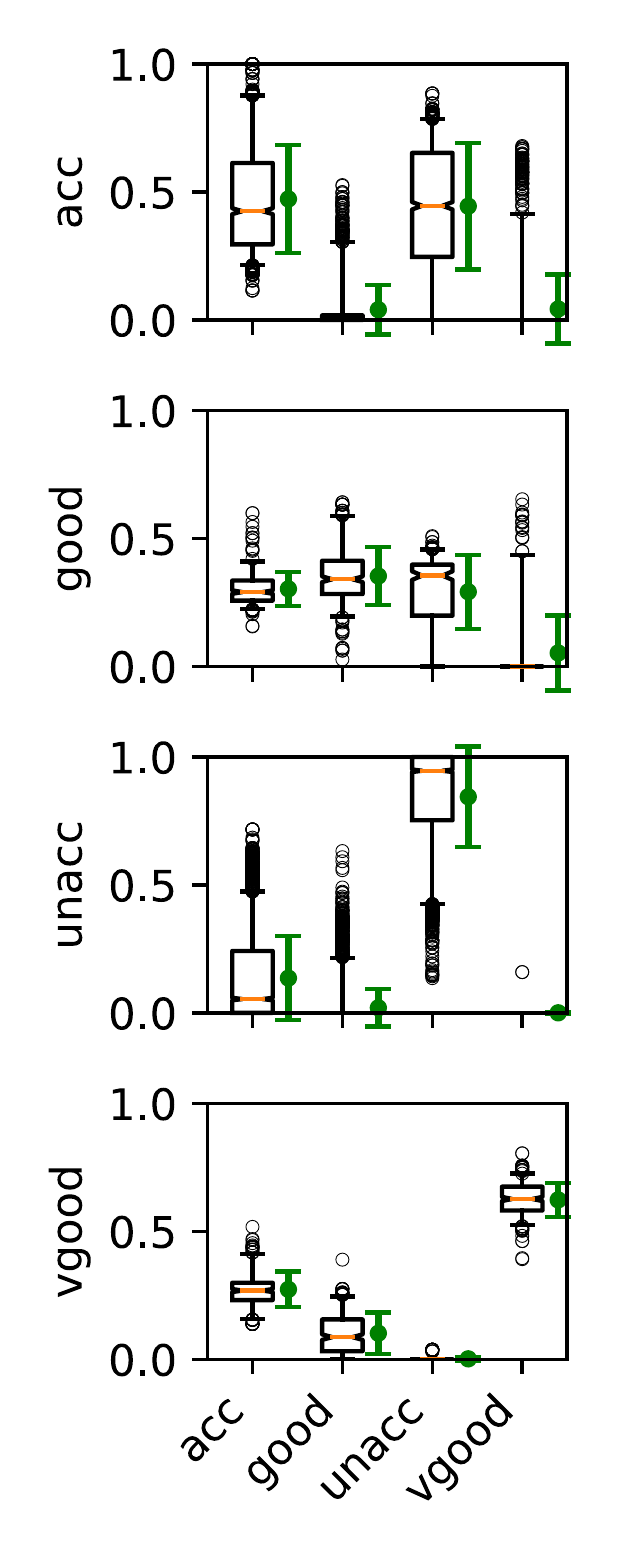}
    \caption{Isotonic}
    \label{fig:nb:pos:scores:class:adas:car:iso}
  \end{subfigure}
  \hfill
  \begin{subfigure}[b]{.24\linewidth}
  \centering
    \includegraphics[width=\linewidth]{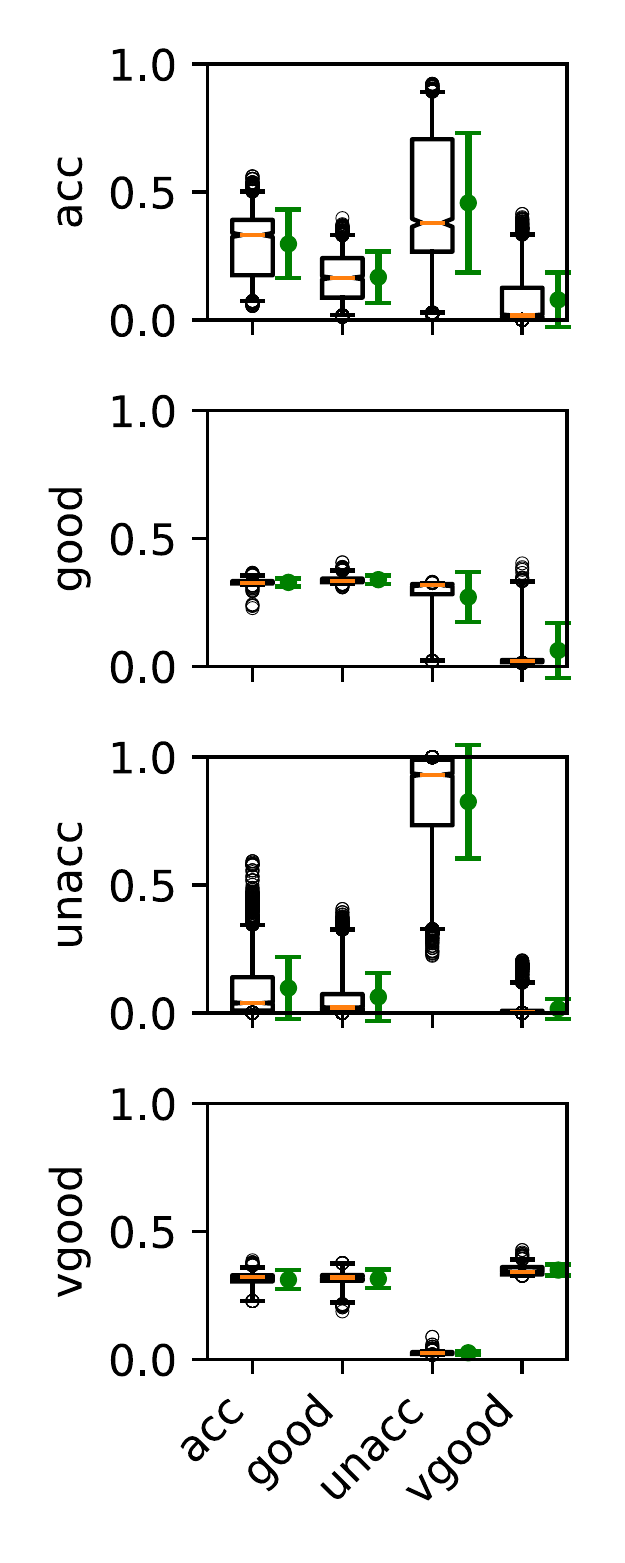}
    \caption{Temperature scaling}
    \label{fig:nb:pos:scores:class:adas:car:temp}
  \end{subfigure}
  \hfill
  \begin{subfigure}[b]{.24\linewidth}
  \centering
    \includegraphics[width=\linewidth]{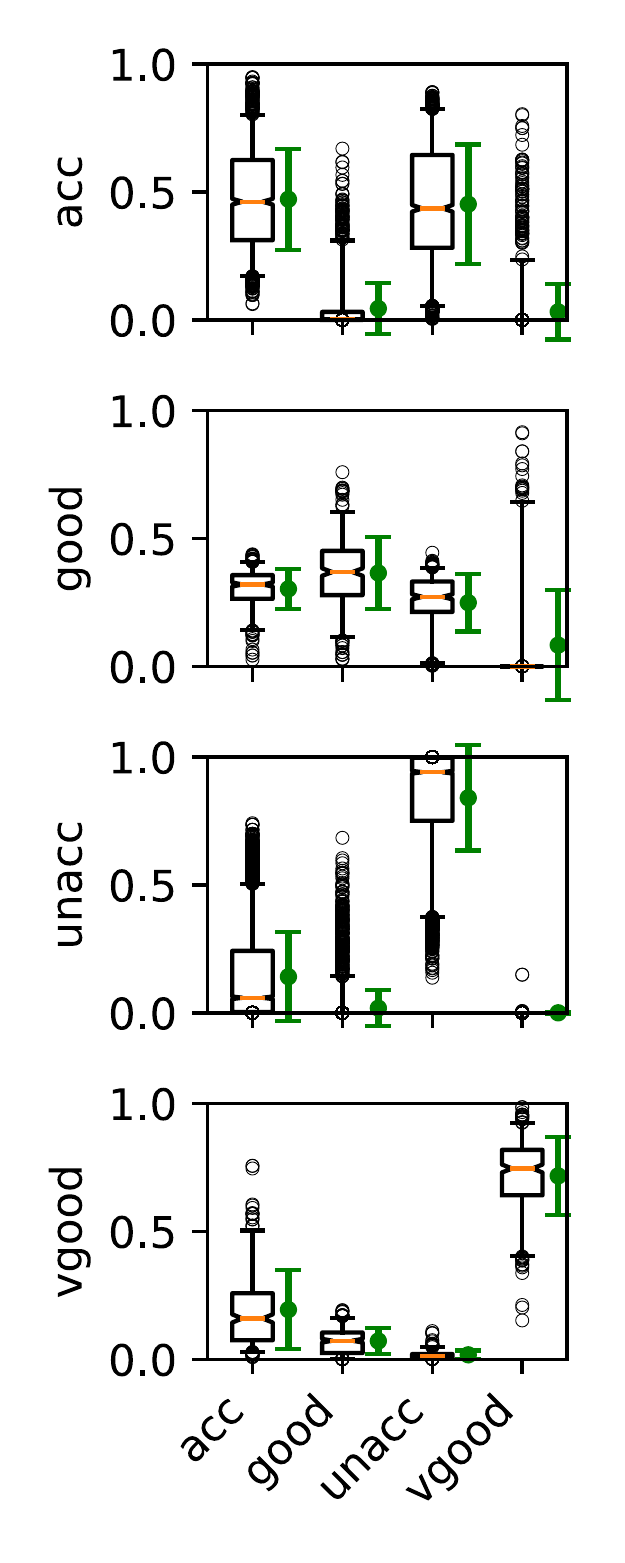}
    \caption{Dirichlet L2}
    \label{fig:nb:pos:scores:class:adas:car:dirl2}
  \end{subfigure}
  \caption{Effect of Dirichlet Calibration on the scores of Ada boost SAMME on the \emph{car} dataset which is composed of $4$ classes (\emph{acceptable}, \emph{good}, \emph{unacceptable}, and \emph{very good}).
  The whiskers of each box indicate the 5th and 95th percentile, the notch around the median indicates the confidence interval.
  The {\color{green!50!black}green} error bar to the right of each box indicates one standard deviation on each side of the mean.
  In each subfigure, the first boxplot corresponds to the posterior probabilities for the samples of class 1, divided in 4 boxes representing the posterior probabilities for each class.
  A good classifier should have the highest posterior probabilities in the box corresponding to the true class.
  In Figure \ref{fig:nb:pos:scores:class:adas:car:uncal} it is possible to see that the first class (\emph{acceptable}) is missclassified as belonging to the third class (\emph{unacceptable}) with high probability values, while Dirichlet Calibration is able to alleviate that problem.
  Also, for the second and fourth true classes (\emph{good}, and \emph{very good}) the original classifier uses a reduced domain of probabilities (indicative of underconfidence), while Dirichlet calibration is able to spread these probabilities with more meaningful values (as indicated by a reduction of the calibration losses; See Figure \ref{fig:nb:reldiag:adas:car}).
  }
  \label{fig:nb:pos:scores:class}
\end{figure}

\section{Experimental setup}
\label{sec:exp}

In this section we provide the detailed description of the experimental setup on a variety of non-neural classifiers and datasets. 
While our implementation of Dirichlet calibration is based on standard Newton-Raphson with multinomial logistic loss and L2 regularisation, as mentioned at the end of Section 3, existing implementations of logistic regression (e.g. scikit-learn) with the log transformed predicted probabilities can also be easily applied.

%\subsection{Implementation details}

%We have already mentioned that our proposed method to fit a Dirichlet Calibration requires less parameters than a Multinomial Logistic Regression.
% Nonetheless, we have performed an empirical comparison of our proposed method, and simply using an already existing implementation of the Multinomial Logistic Regression from the Scikit-learn Python package applied to the log transofrmed predicted probabilities.

%We compared our implementation of the Newton-Raphson optimised Dirichlet Calibration with L2 regularisation (\textbf{Dirichlet\_L2}), the One-vs.-Rest Beta Calibration (\textbf{OvR\_Beta\_L2}) and their Scikit-learn equivalent counterparts Multinomial Logistic Regression with L2 regularisation in the log scores (\textbf{Log\_Reg\_L2}), with the inverse of the regularisation parameter C in the range $[10^2, 10^1, \dots , 10^{-4}, 10^{-5}]$; and One-vs.-Rest Logistic Regression in the log scores (\textbf{OvR\_Log\_Reg\_L2}).

%The results for $5$ classifiers and the $21$ datasets indicated with a Friedman test and posterior Bonferroni-Dunn tests that our implementation did not differ from the Scikit-learn Multinomial Logistic Regression with L2 regularizataion.

\subsection{Datasets and performance estimation}

The full list of datasets, and a brief description of each one including the number of samples, features and classes is presented in Table \ref{tab:data}.

Figure \ref{fig:ds:partition} shows how every dataset was divided in order to get an estimated performance for every combination of dataset, classifier and calibrator.
Each dataset was divided using 5 times 5-fold-cross-validation to create 25 test partitions.
For each of the 25 partitions the corresponding training set was divided further with a 3-fold-cross-validation for wich the bigger portions were used to train the classifiers (and validate the calibratiors if they had hyperparameters), and the small portion was used to train the calibrators.
The 3 calibrators trained in the inner 3-folds were used to predict the corresponding test partition, and their predictions were averaged in order to obtain better estimates of their performance with the 7 different metrics (accuracy, Brier score, log-loss, maximum calibration error, confidence-ECE, classwise-ECE and the p test statistic of the ECE metrics).
Finally, the 25 resulting measures were averaged.

\begin{table}
\begin{minipage}[b]{.54\textwidth}
    \tiny
    \centering
    \begin{tabular}{lrrr}
\toprule
{} &  n\_samples &  n\_features &  n\_classes \\
dataset             &            &             &            \\
\midrule
abalone             &       4177 &           8 &          3 \\
balance-scale       &        625 &           4 &          3 \\
car                 &       1728 &           6 &          4 \\
cleveland           &        297 &          13 &          5 \\
dermatology         &        358 &          34 &          6 \\
glass               &        214 &           9 &          6 \\
iris                &        150 &           4 &          3 \\
landsat-satellite   &       6435 &          36 &          6 \\
libras-movement     &        360 &          90 &         15 \\
mfeat-karhunen      &       2000 &          64 &         10 \\
mfeat-morphological &       2000 &           6 &         10 \\
mfeat-zernike       &       2000 &          47 &         10 \\
optdigits           &       5620 &          64 &         10 \\
page-blocks         &       5473 &          10 &          5 \\
pendigits           &      10992 &          16 &         10 \\
segment             &       2310 &          19 &          7 \\
shuttle             &     101500 &           9 &          7 \\
vehicle             &        846 &          18 &          4 \\
vowel               &        990 &          10 &         11 \\
waveform-5000       &       5000 &          40 &          3 \\
yeast               &       1484 &           8 &         10 \\
\bottomrule
\end{tabular}

    \captionof{table}{Datasets used for the large-scale empirical comparison.}
    \label{tab:data}
\end{minipage}
\hfill
\begin{minipage}[b]{.44\textwidth}
    \centering
    \includegraphics[width=0.9\linewidth]{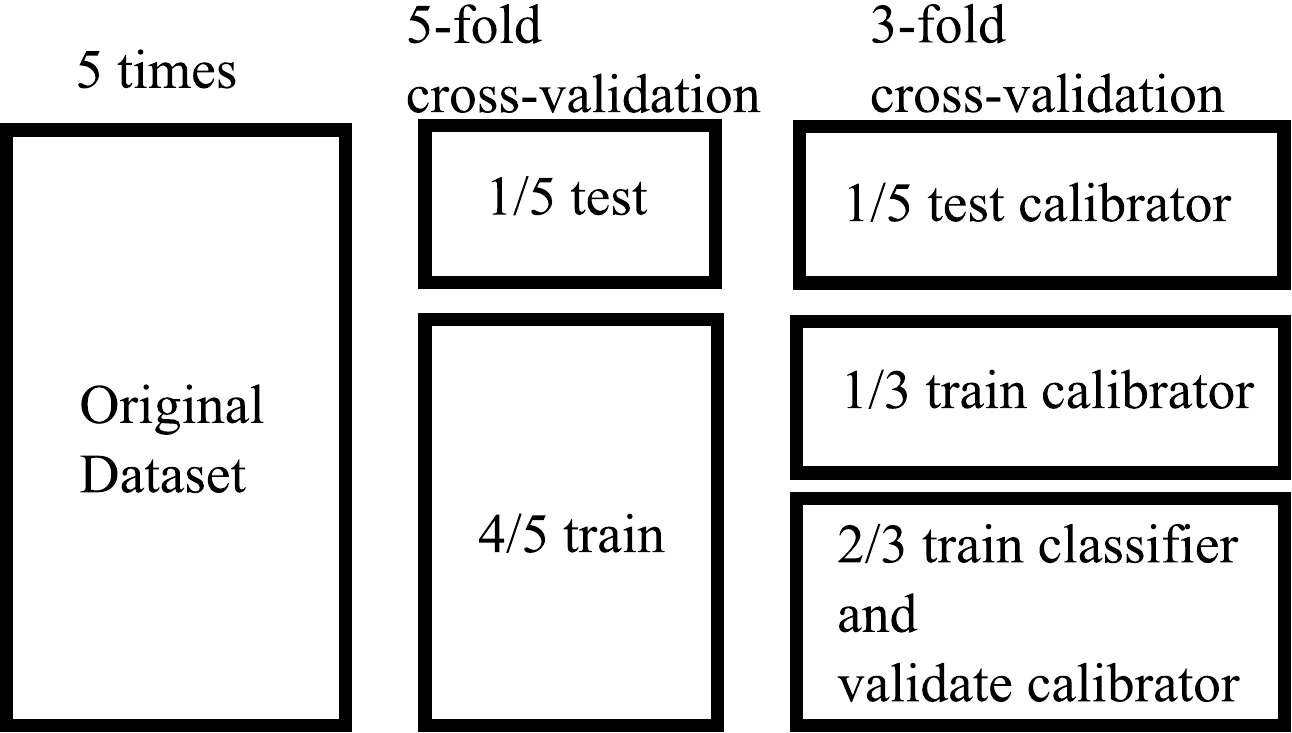}
    \captionof{figure}{Partitions of each dataset in order to estimate out-of-sample measures.}
    \label{fig:ds:partition}
\end{minipage}
\end{table}

\subsection{Full example of statistical analysis}
\label{sec:exp:example}

The following is a full example of how the final rankings and statistical tests are computed.
For this example, we will focus on the metric log-loss, and we will start with the naive Bayes classifier.
Table \ref{table:nbayes:loss} shows the estimated log-loss by averaging the 5-times 5-fold cross-validation log-losses of the inner 3-fold aggregated predictions.
The sub-indices are the ranking of every calibrator for each dataset (ties in the ranking share the averaged rank).
The resulting table of sub-indices is used to compute the Friedman test statistic, resulting in a value of $73.8$ and a p-value of $6.71e^{-14}$ indicating statistical difference between the calibration methods.
The last row contains the average ranks of the full table, which is shown in the corresponding critical difference diagram in Figure \ref{fig:cd:nbayes:loss}.
The critical difference uses the Bonferroni-Dunn one-tailed statistical test to compute the minimum ranking distance that is shown in the Figure, indicating that for this particular classifier and metric the Dirichlet calibrator with L2 regularisation is significantly better than the other methods.

\begin{table}[tph]
\normalsize
\caption{Ranking of calibration methods applied on the classifier naive Bayes with log-loss (Friedman statistic test = 73.8, p-value = 6.71E-14)}
\label{table:nbayes:loss}
\tiny
\centering
\setlength{\tabcolsep}{.7em}
\begin{tabular}{lccccccc}
\toprule
 & DirL2 & Beta & FreqB & Isot & WidB & TempS & Uncal\\
\midrule
abalone & $\mathbf{0.89_{1}}$ & $0.89_{4}$ & $0.89_{2}$ & $0.90_{5}$ & $0.92_{6}$ & $0.89_{3}$ & $1.95_{7}$\\
balance-sc & $\mathbf{0.21_{1}}$ & $0.30_{2}$ & $0.36_{5}$ & $0.32_{3}$ & $0.35_{4}$ & $0.41_{6}$ & $0.47_{7}$\\
car & $\mathbf{0.38_{1}}$ & $0.59_{4}$ & $0.56_{2}$ & $0.57_{3}$ & $0.67_{5}$ & $1.56_{6.5}$ & $1.56_{6.5}$\\
cleveland & $\mathbf{1.02_{1}}$ & $1.30_{4}$ & $1.12_{2}$ & $1.38_{5}$ & $1.14_{3}$ & $2.18_{6}$ & $2.49_{7}$\\
dermatolog & $\mathbf{0.20_{1}}$ & $0.41_{5}$ & $0.23_{2}$ & $0.39_{3}$ & $0.40_{4}$ & $2.51_{7}$ & $2.51_{6}$\\
glass & $\mathbf{1.11_{1}}$ & $1.50_{4}$ & $1.14_{3}$ & $1.64_{5}$ & $1.12_{2}$ & $3.14_{6.5}$ & $3.14_{6.5}$\\
iris & $\mathbf{0.11_{1}}$ & $0.26_{5}$ & $0.33_{6}$ & $0.34_{7}$ & $0.21_{4}$ & $0.13_{3}$ & $0.13_{2}$\\
landsat-sa & $\mathbf{0.36_{1}}$ & $0.56_{2}$ & $0.58_{4}$ & $0.58_{3}$ & $0.74_{5}$ & $3.87_{6.5}$ & $3.87_{6.5}$\\
libras-mov & $\mathbf{0.97_{1}}$ & $1.36_{2}$ & $1.67_{4}$ & $1.91_{5}$ & $1.45_{3}$ & $4.90_{6.5}$ & $4.90_{6.5}$\\
mfeat-karh & $\mathbf{0.20_{1}}$ & $0.22_{2}$ & $0.38_{6}$ & $0.38_{5}$ & $0.29_{4}$ & $0.23_{3}$ & $0.44_{7}$\\
mfeat-morp & $\mathbf{0.72_{1}}$ & $0.91_{5}$ & $0.82_{2}$ & $0.87_{3}$ & $0.88_{4}$ & $1.75_{6.5}$ & $1.75_{6.5}$\\
mfeat-zern & $\mathbf{0.59_{1}}$ & $0.71_{2}$ & $0.82_{4}$ & $0.84_{6}$ & $0.84_{5}$ & $0.77_{3}$ & $1.73_{7}$\\
optdigits & $0.45_{2}$ & $0.57_{4}$ & $0.47_{3}$ & $\mathbf{0.44_{1}}$ & $0.84_{5}$ & $3.14_{6}$ & $3.14_{7}$\\
page-block & $\mathbf{0.17_{1}}$ & $0.21_{4}$ & $0.20_{3}$ & $0.18_{2}$ & $0.21_{5}$ & $0.74_{6.5}$ & $0.74_{6.5}$\\
pendigits & $\mathbf{0.19_{1}}$ & $0.46_{3}$ & $0.48_{4}$ & $0.46_{2}$ & $0.58_{5}$ & $1.30_{6.5}$ & $1.30_{6.5}$\\
segment & $\mathbf{0.28_{1}}$ & $0.62_{5}$ & $0.46_{3}$ & $0.45_{2}$ & $0.56_{4}$ & $1.39_{6.5}$ & $1.39_{6.5}$\\
vehicle & $\mathbf{0.99_{1}}$ & $1.09_{3}$ & $1.05_{2}$ & $1.13_{5}$ & $1.10_{4}$ & $1.16_{6}$ & $2.30_{7}$\\
vowel & $\mathbf{0.54_{1}}$ & $0.81_{2}$ & $1.07_{6}$ & $1.08_{7}$ & $0.90_{5}$ & $0.85_{4}$ & $0.84_{3}$\\
waveform-5 & $\mathbf{0.33_{1}}$ & $0.37_{2}$ & $0.38_{3}$ & $0.38_{4}$ & $0.46_{6}$ & $0.43_{5}$ & $0.80_{7}$\\
yeast & $\mathbf{1.18_{1}}$ & $1.41_{4}$ & $1.31_{2}$ & $1.33_{3}$ & $1.43_{5}$ & $5.10_{6.5}$ & $5.10_{6.5}$\\
\midrule
avg rank & \bf{1.05} & 3.40 & 3.40 & 3.95 & 4.40 & 5.53 & 6.28\\
\bottomrule
\end{tabular}
\end{table}

%\input{tables/results/loss_rankings.tex}
%\input{tables/results/join_nbayes_logloss_all_logloss.tex}

\begin{figure}
  \centering
    \begin{subfigure}{.49\linewidth}
    \centering
    \includegraphics[width=\linewidth]{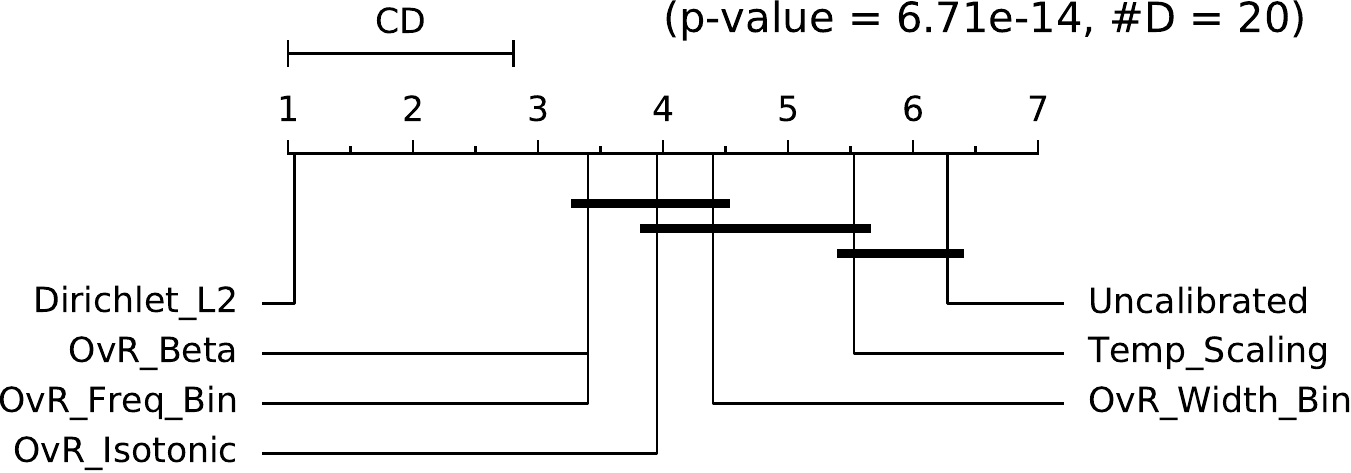}
    \caption{Average over all datasets for Naive Bayes classifier}
    \label{fig:cd:nbayes:loss}
    \end{subfigure}
    \hfill
    \begin{subfigure}{.49\linewidth}
      \centering
      \includegraphics[width=\linewidth]{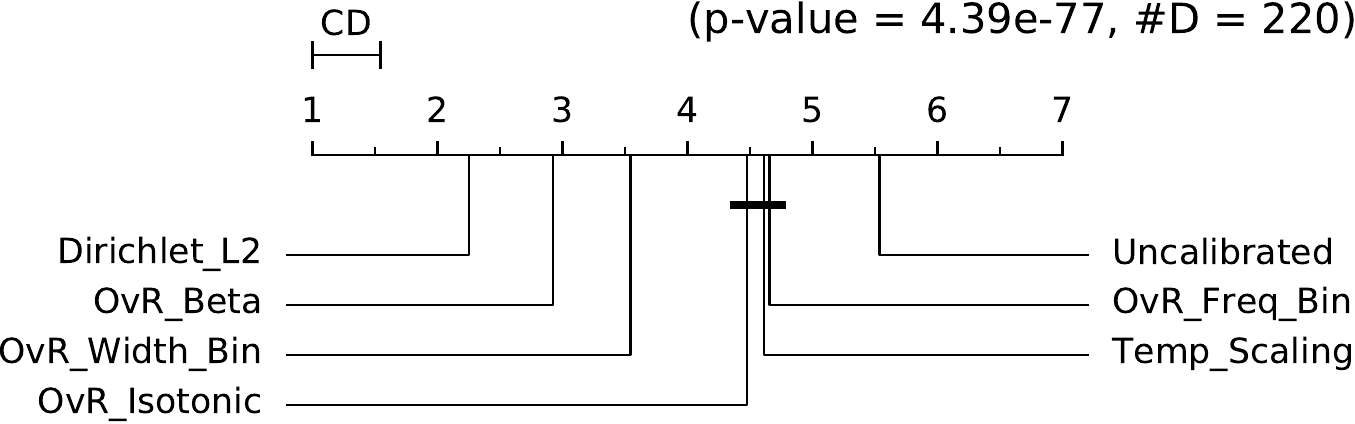}
      \caption{Average over all classifiers}
      \label{fig:multi:cd:logloss:1}
    \end{subfigure}
    \caption{Critical Difference diagrams for the averaged ranking results of the metric Log-loss.}
\end{figure}

The same process is applied to each of the $11$ classifiers for every metric.
Table \ref{table:loss} shows the final average results of all classifiers.
Notice that the row corresponding to naive Bayes has the rounded average rankings from Figure \ref{fig:cd:nbayes:loss}.

\section{Results}
\label{sec:res}

In this Section we present all the final results, including ranking tables for every metric, critical difference diagrams, the best hyperparameters selected for Dirichlet calibration with L2 regularisation, Frequency binning and Width binning; a comparison of how calibrated the $11$ classifiers are, and additional results on deep neural networks.

\subsection{Final ranking tables for all metrics}
\label{sec:res:rank}

We present here all the final ranking tables for all metrics (Tables \ref{table:acc}, \ref{table:loss}, \ref{table:brier}, \ref{table:mce}, \ref{table:conf-ece}, \ref{table:cw-ece}, \ref{table:p-conf-ece}, and \ref{table:p-cw-ece}).
For each ranking, a lower value is indicative of a better metric value (eg. a higher accuracy corresponds to a lower ranking, while a lower log-loss corresponds to a lower ranking as well).
Additional details on how to interpret the tables can be found in Section \ref{sec:exp:example}.

%\input{tables/results/exec_time_rankings.tex}
%\input{tables/results/acc_rankings.tex}
\begin{table}[tph]
\begin{minipage}[t]{.49\textwidth}
\normalsize
\captionof{table}{Rankings for Accuracy}
\tiny
\centering
\setlength{\tabcolsep}{.6em}
\begin{tabular}{lcccccc|c}
\toprule
 & DirL2 & Beta & FreqB & Isot & WidB & TempS & Uncal\\
\midrule
adas & $\mathbf{2.5}$ & $4.1$ & $2.9$ & $4.5$ & $3.1$ & $5.3$ & $5.5$\\
forest & $4.0$ & $\mathbf{3.0}$ & $5.6$ & $3.2$ & $4.4$ & $3.9$ & $3.9$\\
knn & $5.0$ & $3.9$ & $4.8$ & $\mathbf{3.1}$ & $3.1$ & $4.0$ & $4.0$\\
lda & $3.5$ & $5.1$ & $4.9$ & $3.7$ & $5.0$ & $3.0$ & $\mathbf{2.9}$\\
logistic & $\mathbf{2.1}$ & $3.7$ & $5.3$ & $4.0$ & $3.6$ & $4.6$ & $4.7$\\
mlp & $2.9$ & $\mathbf{2.8}$ & $5.9$ & $3.7$ & $4.5$ & $4.0$ & $4.3$\\
nbayes & $\mathbf{1.4}$ & $3.8$ & $3.0$ & $2.9$ & $5.0$ & $6.0$ & $6.0$\\
qda & $\mathbf{2.7}$ & $3.6$ & $3.9$ & $2.9$ & $3.8$ & $5.6$ & $5.6$\\
svc-linear & $\mathbf{1.8}$ & $3.5$ & $5.7$ & $2.8$ & $4.3$ & $5.1$ & $4.8$\\
svc-rbf & $3.3$ & $3.5$ & $3.8$ & $\mathbf{3.2}$ & $3.6$ & $5.0$ & $5.5$\\
tree & $3.7$ & $4.8$ & $4.5$ & $5.0$ & $4.3$ & $2.8$ & $2.8$\\
\midrule
avg rank & \bf{2.99} & 3.78 & 4.58 & 3.55 & 4.07 & 4.48 & 4.54\\
\bottomrule
\end{tabular}
\label{table:acc}
\end{minipage}
\begin{minipage}[t]{.49\textwidth}
\normalsize
\caption{Rankings for log-loss}
\tiny
\centering
\setlength{\tabcolsep}{.6em}
\begin{tabular}{lcccccc|c}
\toprule
 & DirL2 & Beta & FreqB & Isot & WidB & TempS & Uncal\\
\midrule
adas & $\mathbf{1.4}$ & $3.1$ & $3.2$ & $4.3$ & $3.5$ & $5.9$ & $6.6$\\
forest & $4.2$ & $\mathbf{1.9}$ & $4.7$ & $4.1$ & $2.9$ & $5.2$ & $5.2$\\
knn & $3.8$ & $4.8$ & $3.0$ & $\mathbf{1.6}$ & $2.0$ & $6.5$ & $6.5$\\
lda & $\mathbf{1.6}$ & $2.2$ & $5.2$ & $5.2$ & $3.5$ & $4.6$ & $5.7$\\
logistic & $\mathbf{1.3}$ & $2.1$ & $5.8$ & $6.1$ & $3.5$ & $3.6$ & $5.6$\\
mlp & $\mathbf{2.2}$ & $2.3$ & $6.5$ & $6.2$ & $4.7$ & $2.9$ & $3.4$\\
nbayes & $\mathbf{1.1}$ & $3.4$ & $3.4$ & $4.0$ & $4.4$ & $5.5$ & $6.3$\\
qda & $\mathbf{1.7}$ & $2.7$ & $5.6$ & $4.6$ & $3.4$ & $4.2$ & $5.8$\\
svc-linear & $\mathbf{1.3}$ & $2.3$ & $6.1$ & $6.1$ & $4.3$ & $3.0$ & $4.8$\\
svc-rbf & $2.6$ & $\mathbf{2.2}$ & $4.3$ & $4.8$ & $4.5$ & $4.0$ & $5.6$\\
tree & $3.9$ & $5.1$ & $3.4$ & $\mathbf{2.1}$ & $2.4$ & $5.6$ & $5.6$\\
\midrule
avg rank & \bf{2.25} & 2.92 & 4.66 & 4.48 & 3.54 & 4.61 & 5.54\\
\bottomrule
\end{tabular}
\label{table:loss}
\end{minipage}
\end{table}

%\input{tables/results/brier_rankings.tex}
%\input{tables/results/mce_rankings.tex}
\begin{table}[tph]
\begin{minipage}[t]{.49\textwidth}
\normalsize
\captionof{table}{Rankings for Brier score}
\label{table:brier}
\tiny
\centering
\setlength{\tabcolsep}{.6em}
\begin{tabular}{lcccccc|c}
\toprule
 & DirL2 & Beta & FreqB & Isot & WidB & TempS & Uncal\\
\midrule
adas & $\mathbf{1.6}$ & $3.0$ & $3.3$ & $3.6$ & $3.6$ & $6.3$ & $6.5$\\
forest & $4.4$ & $\mathbf{1.8}$ & $5.4$ & $1.9$ & $3.9$ & $5.3$ & $5.3$\\
knn & $3.9$ & $3.5$ & $5.3$ & $\mathbf{1.9}$ & $3.8$ & $4.8$ & $4.8$\\
lda & $\mathbf{1.8}$ & $3.2$ & $5.3$ & $2.2$ & $3.9$ & $6.0$ & $5.8$\\
logistic & $\mathbf{1.6}$ & $2.7$ & $6.1$ & $2.5$ & $4.3$ & $4.3$ & $6.4$\\
mlp & $3.0$ & $\mathbf{2.2}$ & $6.6$ & $2.8$ & $5.2$ & $3.9$ & $4.2$\\
nbayes & $\mathbf{1.2}$ & $3.5$ & $4.2$ & $2.3$ & $4.9$ & $5.7$ & $6.2$\\
qda & $\mathbf{1.9}$ & $2.9$ & $5.8$ & $2.1$ & $4.4$ & $5.1$ & $5.7$\\
svc-linear & $\mathbf{1.5}$ & $2.8$ & $6.5$ & $2.6$ & $4.6$ & $4.1$ & $5.8$\\
svc-rbf & $3.0$ & $\mathbf{2.5}$ & $4.7$ & $2.8$ & $4.7$ & $4.5$ & $5.8$\\
tree & $3.4$ & $4.2$ & $6.5$ & $4.7$ & $5.4$ & $\mathbf{1.9}$ & $2.0$\\
\midrule
avg rank & \bf{2.48} & 2.94 & 5.43 & 2.67 & 4.43 & 4.72 & 5.33\\
\bottomrule
\end{tabular}
\end{minipage}
\begin{minipage}[t]{.49\textwidth}
\normalsize
\captionof{table}{Rankings for MCE}
\label{table:mce}
\tiny
\centering
\setlength{\tabcolsep}{.6em}
\begin{tabular}{lcccccc|c}
\toprule
 & DirL2 & Beta & FreqB & Isot & WidB & TempS & Uncal\\
\midrule
adas & $\mathbf{3.0}$ & $3.4$ & $3.5$ & $3.4$ & $3.6$ & $5.3$ & $5.9$\\
forest & $4.2$ & $\mathbf{3.2}$ & $4.8$ & $3.8$ & $3.4$ & $4.1$ & $4.5$\\
knn & $4.2$ & $4.7$ & $4.2$ & $3.7$ & $\mathbf{3.3}$ & $4.0$ & $4.0$\\
lda & $\mathbf{2.0}$ & $3.2$ & $4.8$ & $4.5$ & $4.0$ & $5.0$ & $4.5$\\
logistic & $3.4$ & $3.8$ & $5.4$ & $4.8$ & $\mathbf{2.5}$ & $3.5$ & $4.7$\\
mlp & $3.2$ & $4.2$ & $4.7$ & $4.6$ & $\mathbf{3.0}$ & $3.7$ & $4.5$\\
nbayes & $\mathbf{2.6}$ & $3.0$ & $3.6$ & $3.0$ & $4.2$ & $5.6$ & $5.9$\\
qda & $3.2$ & $\mathbf{2.2}$ & $5.1$ & $3.7$ & $4.2$ & $4.2$ & $5.4$\\
svc-linear & $\mathbf{2.8}$ & $4.2$ & $5.5$ & $3.7$ & $3.5$ & $4.1$ & $4.2$\\
svc-rbf & $5.0$ & $4.6$ & $3.9$ & $3.5$ & $\mathbf{3.3}$ & $3.5$ & $4.2$\\
tree & $4.3$ & $4.2$ & $4.5$ & $\mathbf{3.4}$ & $3.7$ & $4.0$ & $4.0$\\
\midrule
avg rank & \bf{3.44} & 3.73 & 4.53 & 3.83 & 3.50 & 4.27 & 4.71\\
\bottomrule
\end{tabular}
\end{minipage}
\end{table}

%\input{tables/results/conf-ece_rankings.tex}
%\input{tables/results/cw-ece_rankings.tex}
\begin{table}[tph]
\begin{minipage}[t]{.49\textwidth}
\normalsize
\captionof{table}{Rankings for confidence-ECE}
\label{table:conf-ece}
\tiny
\centering
\setlength{\tabcolsep}{.6em}
\begin{tabular}{lcccccc|c}
\toprule
 & DirL2 & Beta & FreqB & Isot & WidB & TempS & Uncal\\
\midrule
adas & $\mathbf{1.7}$ & $2.7$ & $4.3$ & $2.7$ & $4.2$ & $6.0$ & $6.4$\\
forest & $4.2$ & $2.2$ & $5.7$ & $\mathbf{1.4}$ & $4.4$ & $5.1$ & $5.1$\\
knn & $3.0$ & $\mathbf{3.0}$ & $6.1$ & $3.5$ & $5.8$ & $3.3$ & $3.3$\\
lda & $\mathbf{2.0}$ & $2.9$ & $5.9$ & $2.1$ & $4.0$ & $5.7$ & $5.5$\\
logistic & $2.2$ & $3.0$ & $6.3$ & $\mathbf{1.9}$ & $4.7$ & $3.8$ & $6.1$\\
mlp & $3.5$ & $2.7$ & $6.6$ & $\mathbf{1.4}$ & $5.7$ & $4.0$ & $4.2$\\
nbayes & $\mathbf{2.1}$ & $2.8$ & $5.2$ & $2.4$ & $4.3$ & $5.3$ & $5.9$\\
qda & $3.1$ & $2.3$ & $6.5$ & $\mathbf{1.7}$ & $4.7$ & $4.6$ & $5.1$\\
svc-linear & $2.7$ & $2.8$ & $6.7$ & $\mathbf{2.0}$ & $4.9$ & $3.4$ & $5.5$\\
svc-rbf & $3.7$ & $3.4$ & $6.5$ & $2.9$ & $4.5$ & $\mathbf{2.7}$ & $4.3$\\
tree & $2.6$ & $3.6$ & $6.8$ & $4.8$ & $5.7$ & $\mathbf{2.2}$ & $2.3$\\
\midrule
avg rank & 2.80 & 2.86 & 6.05 & \bf{2.42} & 4.81 & 4.17 & 4.89\\
\bottomrule
\end{tabular}
\end{minipage}
\begin{minipage}[t]{.49\textwidth}
\normalsize
\captionof{table}{Rankings for classwise-ECE}
\label{table:cw-ece}
\tiny
\centering
\setlength{\tabcolsep}{.6em}
\begin{tabular}{lcccccc|c}
\toprule
 & DirL2 & Beta & FreqB & Isot & WidB & TempS & Uncal\\
\midrule
adas & $\mathbf{1.9}$ & $3.2$ & $4.3$ & $4.3$ & $4.1$ & $5.0$ & $5.1$\\
forest & $4.0$ & $2.1$ & $5.8$ & $\mathbf{1.1}$ & $4.0$ & $5.5$ & $5.4$\\
knn & $4.0$ & $3.9$ & $6.0$ & $3.6$ & $5.6$ & $2.5$ & $2.5$\\
lda & $2.4$ & $2.8$ & $5.8$ & $\mathbf{2.0}$ & $4.1$ & $5.3$ & $5.7$\\
logistic & $2.2$ & $2.5$ & $6.2$ & $\mathbf{2.0}$ & $4.4$ & $4.5$ & $6.1$\\
mlp & $3.0$ & $2.3$ & $6.6$ & $\mathbf{1.7}$ & $5.5$ & $4.3$ & $4.5$\\
nbayes & $\mathbf{1.9}$ & $3.5$ & $5.0$ & $2.5$ & $4.0$ & $5.4$ & $5.7$\\
qda & $2.7$ & $2.6$ & $6.4$ & $\mathbf{1.8}$ & $4.6$ & $5.0$ & $4.9$\\
svc-linear & $2.5$ & $2.6$ & $6.7$ & $\mathbf{2.5}$ & $4.6$ & $3.6$ & $5.5$\\
svc-rbf & $\mathbf{2.7}$ & $2.9$ & $6.5$ & $3.1$ & $4.5$ & $3.6$ & $4.7$\\
tree & $3.1$ & $4.1$ & $6.5$ & $4.7$ & $5.5$ & $\mathbf{1.9}$ & $2.0$\\
\midrule
avg rank & 2.76 & 2.95 & 5.97 & \bf{2.67} & 4.65 & 4.25 & 4.75\\
\bottomrule
\end{tabular}
\end{minipage}
\end{table}

%\input{tables/results/p-conf-ece_rankings.tex}
%\input{tables/results/p-cw-ece_rankings.tex}
\begin{table}[tph]
\begin{minipage}[t]{.49\textwidth}
\normalsize
\captionof{table}{Rankings for p-confidence-ECE}
\tiny
\centering
\setlength{\tabcolsep}{.6em}
\begin{tabular}{lcccccc|c}
\toprule
 & DirL2 & Beta & FreqB & Isot & WidB & TempS & Uncal\\
\midrule
adas & $\mathbf{1.8}$ & $2.9$ & $4.3$ & $3.1$ & $4.4$ & $5.6$ & $5.7$\\
forest & $3.7$ & $2.2$ & $6.0$ & $\mathbf{1.9}$ & $4.7$ & $4.7$ & $4.9$\\
knn & $\mathbf{2.6}$ & $2.6$ & $5.7$ & $2.9$ & $5.1$ & $4.5$ & $4.6$\\
lda & $\mathbf{1.9}$ & $3.0$ & $6.1$ & $2.2$ & $3.9$ & $5.3$ & $5.5$\\
logistic & $2.6$ & $2.9$ & $6.3$ & $\mathbf{1.8}$ & $4.7$ & $3.7$ & $6.0$\\
mlp & $3.4$ & $2.8$ & $6.6$ & $\mathbf{2.2}$ & $5.7$ & $3.5$ & $3.9$\\
nbayes & $\mathbf{2.1}$ & $2.7$ & $5.2$ & $2.5$ & $4.5$ & $4.9$ & $6.1$\\
qda & $3.0$ & $2.1$ & $6.5$ & $\mathbf{2.1}$ & $4.6$ & $4.6$ & $5.2$\\
svc-linear & $2.5$ & $3.0$ & $6.6$ & $\mathbf{2.4}$ & $5.0$ & $3.3$ & $5.1$\\
svc-rbf & $3.4$ & $3.4$ & $6.3$ & $3.0$ & $5.0$ & $\mathbf{2.8}$ & $4.2$\\
tree & $\mathbf{2.5}$ & $3.7$ & $6.4$ & $4.3$ & $5.7$ & $2.6$ & $2.7$\\
\midrule
avg rank & 2.69 & 2.87 & 6.00 & \bf{2.58} & 4.85 & 4.11 & 4.90\\
\bottomrule
\end{tabular}
\label{table:p-conf-ece}
\end{minipage}
\begin{minipage}[t]{.49\textwidth}
\normalsize
\captionof{table}{Rankings for p-classwise-ECE}
\tiny
\centering
\setlength{\tabcolsep}{.6em}
\begin{tabular}{lcccccc|c}
\toprule
 & DirL2 & Beta & FreqB & Isot & WidB & TempS & Uncal\\
\midrule
adas & $\mathbf{2.4}$ & $3.2$ & $4.1$ & $4.2$ & $3.9$ & $5.0$ & $5.2$\\
forest & $3.5$ & $\mathbf{2.3}$ & $5.7$ & $3.0$ & $3.6$ & $5.0$ & $5.0$\\
knn & $2.5$ & $4.0$ & $4.5$ & $\mathbf{2.1}$ & $3.2$ & $5.8$ & $6.0$\\
lda & $\mathbf{1.9}$ & $3.1$ & $5.8$ & $3.0$ & $3.5$ & $5.0$ & $5.8$\\
logistic & $\mathbf{2.2}$ & $2.8$ & $6.4$ & $3.0$ & $4.2$ & $3.9$ & $5.5$\\
mlp & $\mathbf{2.2}$ & $2.9$ & $6.7$ & $4.0$ & $5.2$ & $3.0$ & $4.1$\\
nbayes & $\mathbf{1.4}$ & $3.6$ & $4.8$ & $2.6$ & $4.2$ & $5.3$ & $6.1$\\
qda & $\mathbf{2.2}$ & $2.8$ & $6.3$ & $2.5$ & $3.8$ & $4.8$ & $5.6$\\
svc-linear & $\mathbf{2.3}$ & $2.7$ & $6.7$ & $3.8$ & $4.0$ & $3.7$ & $4.8$\\
svc-rbf & $\mathbf{2.9}$ & $3.0$ & $6.3$ & $3.5$ & $4.1$ & $3.9$ & $4.3$\\
tree & $\mathbf{2.4}$ & $4.3$ & $5.9$ & $4.2$ & $5.2$ & $3.0$ & $3.0$\\
\midrule
avg rank & \bf{2.34} & 3.15 & 5.73 & 3.27 & 4.11 & 4.37 & 5.02\\
\bottomrule
\end{tabular}
\label{table:p-cw-ece}
\end{minipage}
\end{table}

%\input{tables/results/full-ece_rankings.tex}
%\input{tables/results/p-full-ece_rankings.tex}

\subsection{Final critical difference diagrams for every metric}

In order to perform a final comparison between calibration methods, we considered every combination of dataset and classifier as a group $n = \#datasets \times \#classifiers$, and ranked the results of the $k$ calibration methods.
With this setting, we have performed the Friedman statistical test followed by the one-tailed Bonferroni-Dunn test to obtain critical differences (CDs) for every metric (See Figure \ref{fig:multi:cd:all}).
The results showed Dirichlet L2 as the best calibration method for the measures accuracy, log-loss and p-cw-ece with statistical significance (See Figures \ref{fig:multi:cd:acc} \ref{fig:multi:cd:logloss}, and \ref{fig:multi:cd:p-cw-ece}), and in the group of the best calibration methods in the rest of the metrics with statistical significance, but no difference within the group.
It is worth mentioning that Figure \ref{fig:multi:cd:logloss} %with the Log-loss 
showed statistical difference between Dirichlet L2, OvR Beta, OvR width binning, and the rest of the calibrators in one group; in the mentioned order.

\begin{figure}
  \centering
    \begin{subfigure}{.49\linewidth}
    \centering
    \includegraphics[width=\linewidth]{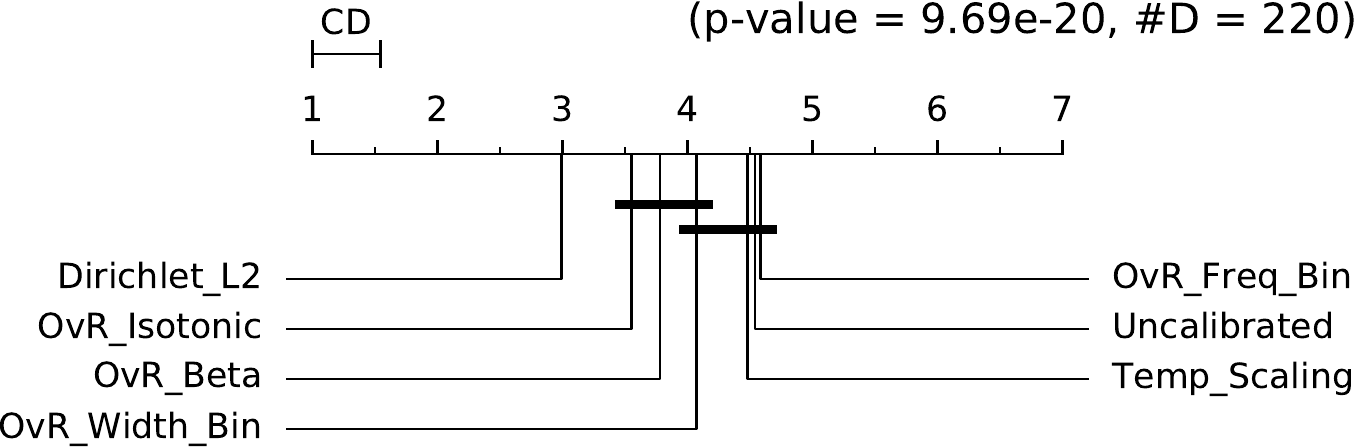}
    \caption{Accuracy}
    \label{fig:multi:cd:acc}
    \end{subfigure}
    \hfill
    \begin{subfigure}{.49\linewidth}
    \centering
    \includegraphics[width=\linewidth]{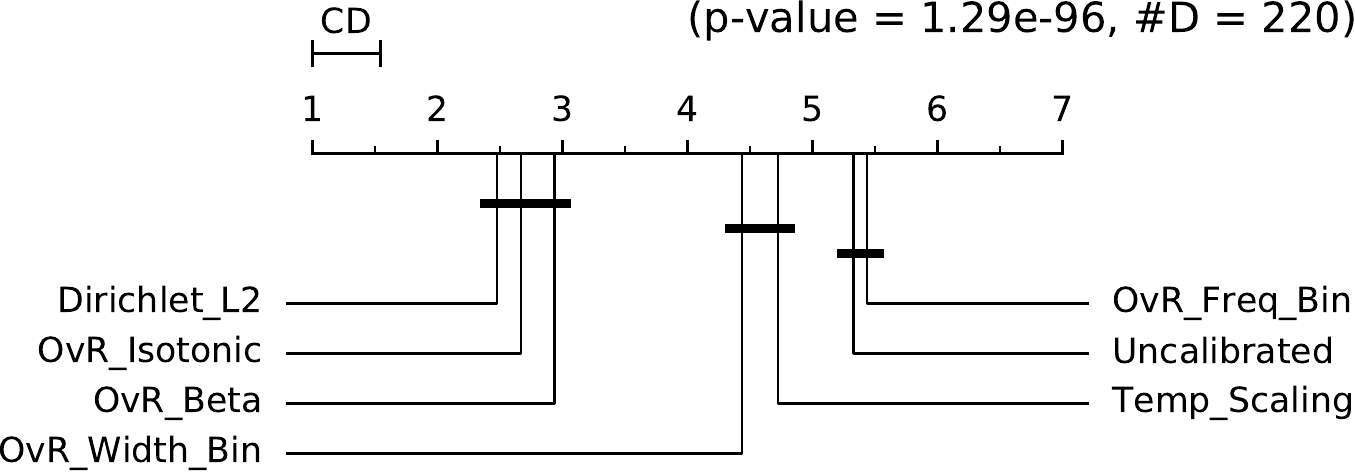}
    \caption{Brier score}
    \label{fig:multi:cd:brier}
    \end{subfigure}
    
    \begin{subfigure}{.49\linewidth}
      \centering
      \includegraphics[width=\linewidth]{figures/results/crit_diff_loss_v2}
      \caption{Log-loss score}
      \label{fig:multi:cd:logloss}
    \end{subfigure}
    \hfill
    \begin{subfigure}{.49\linewidth}
      \centering
      \includegraphics[width=\linewidth]{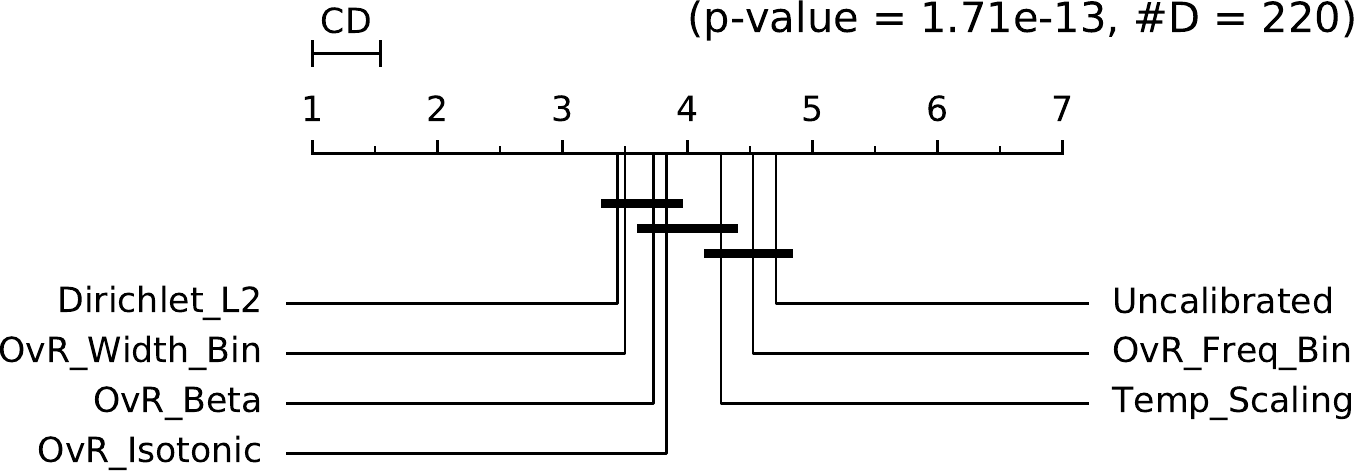}
      \caption{Maximum Calibration Error}
      \label{fig:multi:cd:mce}
    \end{subfigure}
    
    \begin{subfigure}{.49\linewidth}
      \centering
      \includegraphics[width=\linewidth]{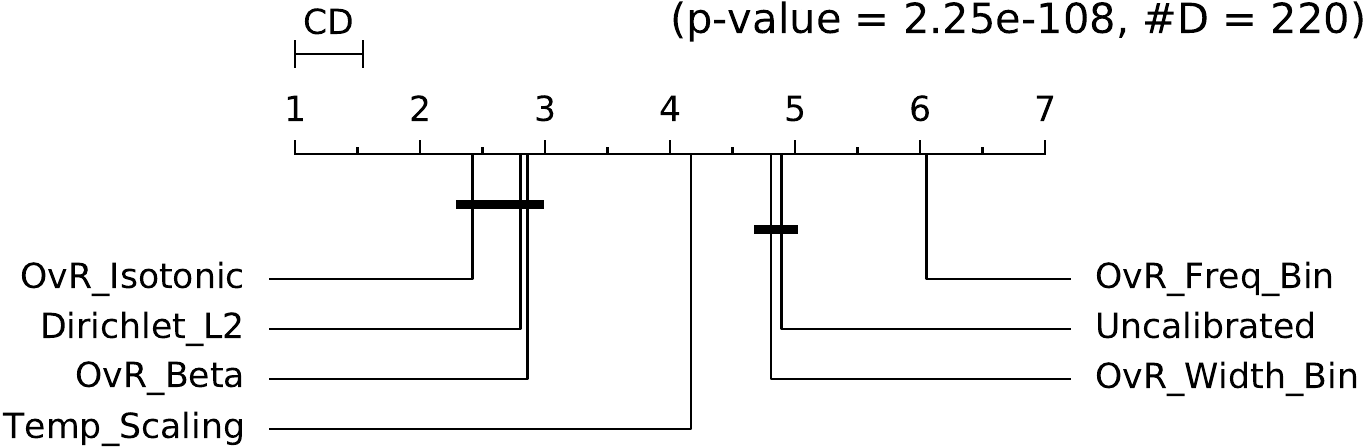}
      \caption{conf-ece}
      \label{fig:multi:cd:guoece}
    \end{subfigure}
    \hfill
    \begin{subfigure}{.49\linewidth}
      \centering
      \includegraphics[width=\linewidth]{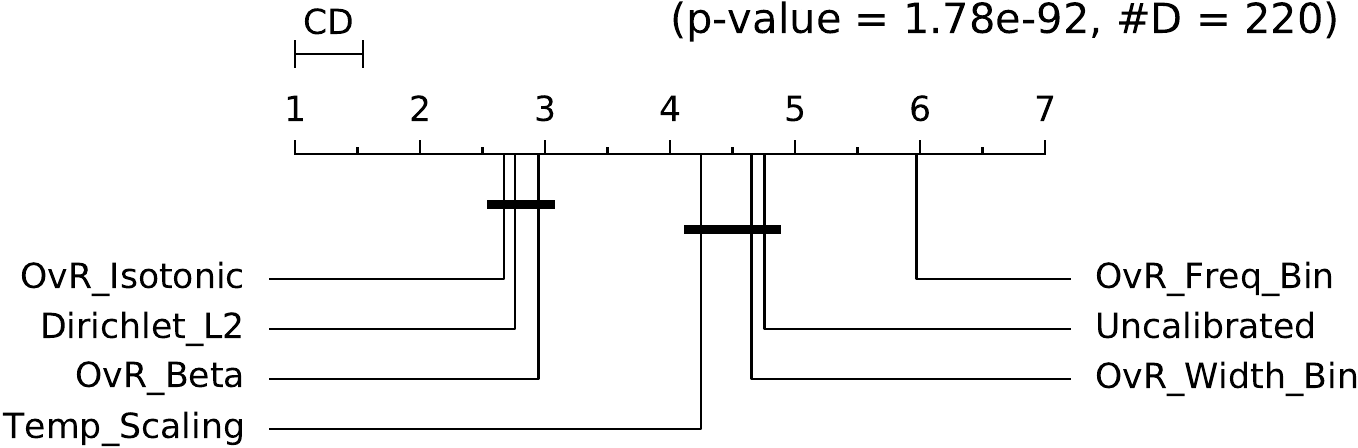}
      \caption{cw-ece}
      \label{fig:multi:cd:claece}
    \end{subfigure}
    
    \begin{subfigure}{.49\linewidth}
      \centering
      \includegraphics[width=\linewidth]{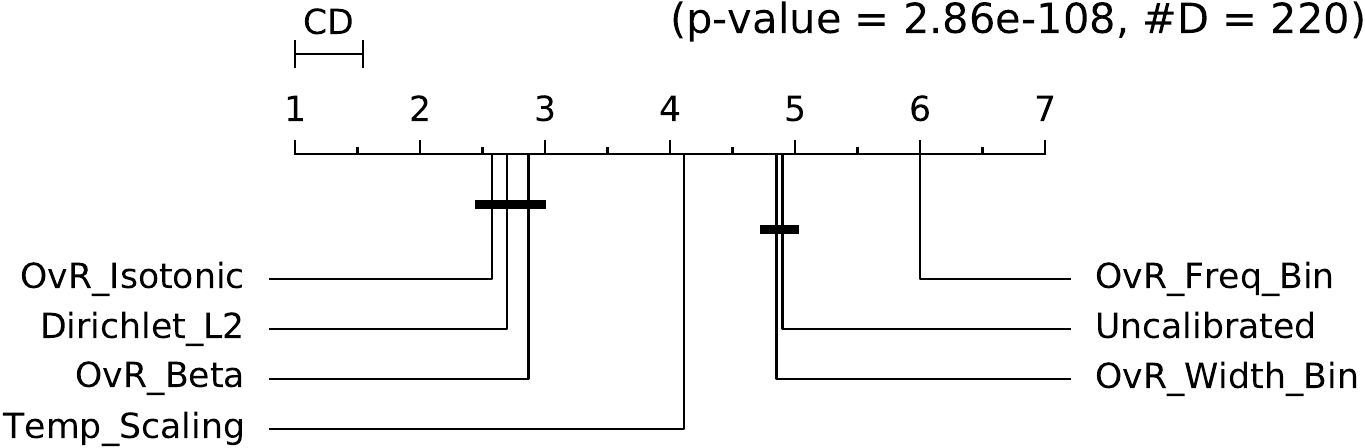}
      \caption{p-conf-ece}
      \label{fig:multi:cd:pguoece}
    \end{subfigure}
    \hfill
    \begin{subfigure}{.49\linewidth}
      \centering
      \includegraphics[width=\linewidth]{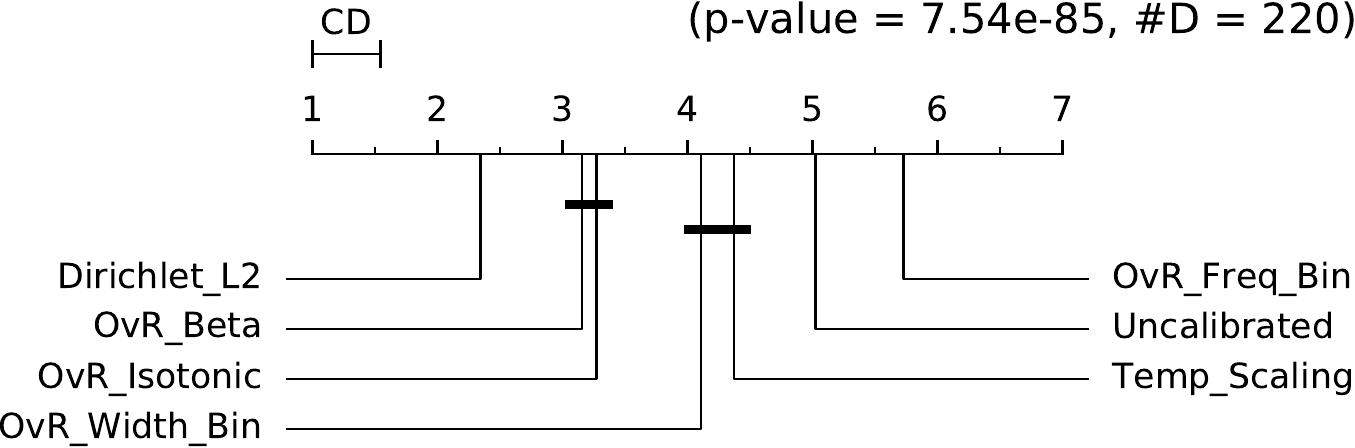}
      \caption{p-cw-ece}
      \label{fig:multi:cd:p-cw-ece}
    \end{subfigure}
  \caption{Critical difference of the average of multiclass classifiers.}
  \label{fig:multi:cd:all}
\end{figure}

\begin{figure}
  \centering
    \begin{subfigure}{.4\linewidth}
    \centering
    \includegraphics[width=\linewidth]{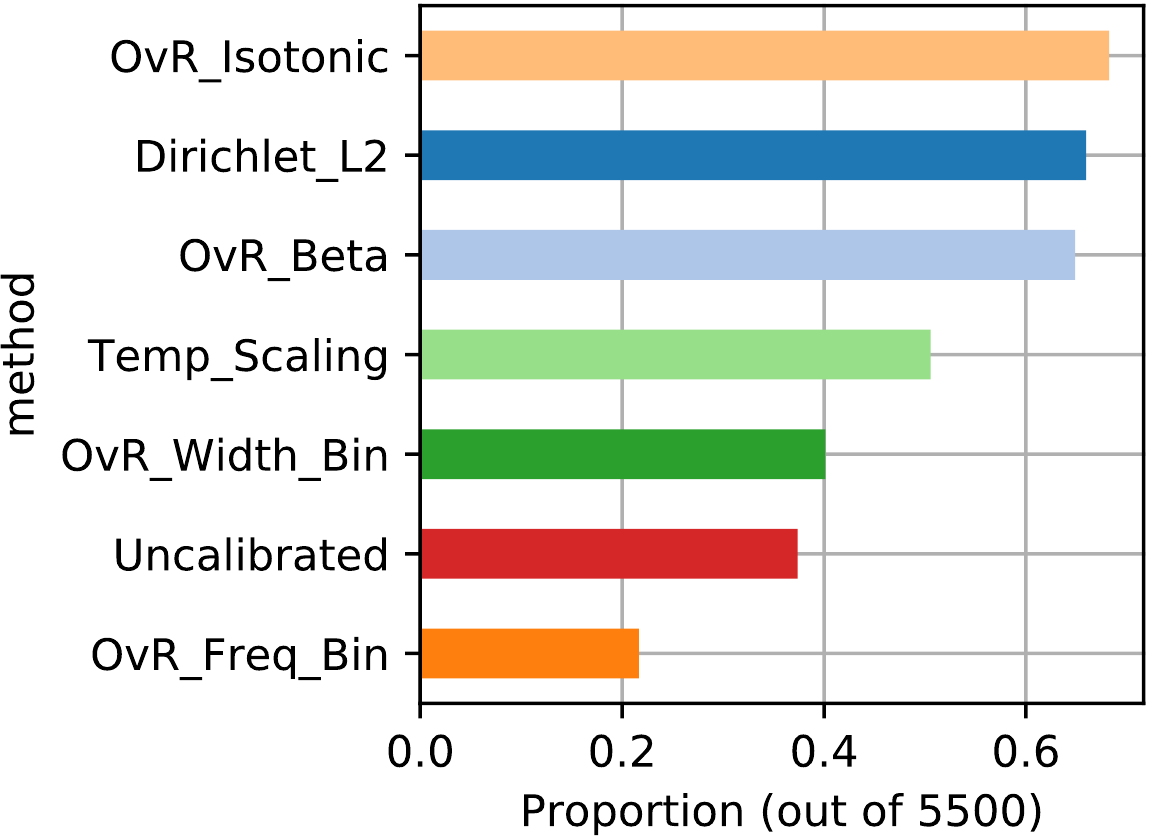}
    \caption{p-conf-ece}
    \label{fig:cal:p:guo:ece}
    \end{subfigure}
    \hfill
    \begin{subfigure}{.4\linewidth}
    \centering
    \includegraphics[width=\linewidth]{figures/results/p_table_calibrators_p-cw-ece.pdf}
    \caption{p-cw-ece}
    \label{fig:cal:p:cla:ece}
    \end{subfigure}
%    \hfill
%    \begin{subfigure}{.30\linewidth}
%    \centering
%    \includegraphics[width=\linewidth]{}
%    \caption{p-full-ece}
%    \label{fig:cal:p:full:ece}
%    \end{subfigure}
    
  \caption{Proportion of times each calibrator passes a calibration p-test with a p-value higher than 0.05.}
  \label{fig:cal:p:ece}
\end{figure}

\subsection{Best calibrator hyperparameters}

\begin{figure}[htp]
  \centering
    \begin{subfigure}{.30\linewidth}
    \centering
    \includegraphics[width=\linewidth]{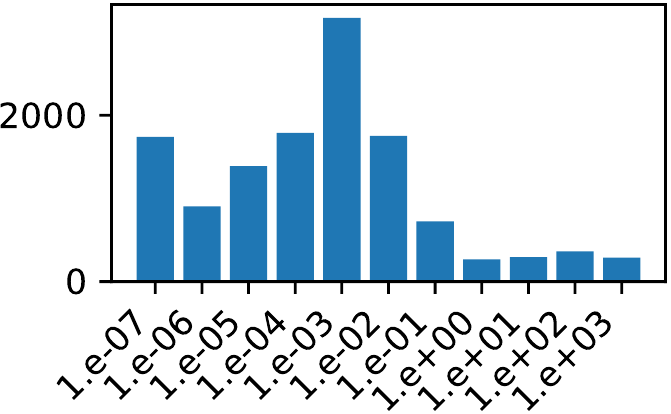}
    \caption{Dirichlet L2 $\lambda$}
    \label{fig:hyper:dir:l2}
    \end{subfigure}
    \hfill
    \begin{subfigure}{.30\linewidth}
    \centering
    \includegraphics[width=\linewidth]{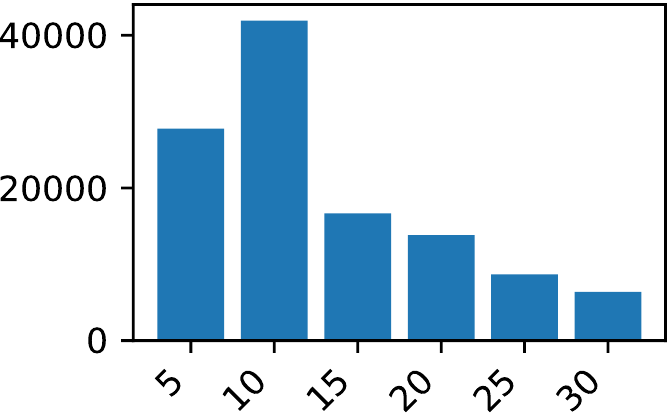}
    \caption{OvR Frequency binning \#bins}
    \label{fig:hyper:freq:bin}
    \end{subfigure}
    \hfill
    \begin{subfigure}{.30\linewidth}
    \centering
    \includegraphics[width=\linewidth]{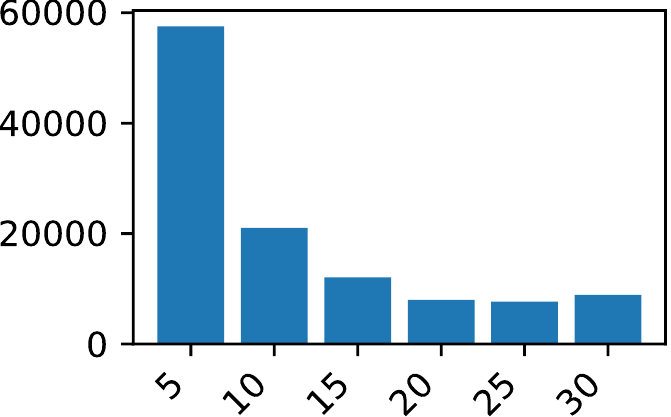}
    \caption{OvR Width binning \#bins}
    \label{fig:hyper:width:bin}
    \end{subfigure}
    
  \caption{Histogram of the selected hyperparameters during the inner 3-fold-cross-validation}
  \label{fig:hyper}
\end{figure}

Figure \ref{fig:hyper} shows the best hyperparameters for every inner 3-fold-cross-validation.
Dirichlet L2 (Figure \ref{fig:hyper:dir:l2}) shows a preference for regularisation hyperparameter $\lambda = 1e^{-3}$ and lower values. Our current minimum regularisation value of $1e^{-7}$ is also being selected multiple times, indicating that lower values may be optimal in several occasions. However, this fact did not seem to hurt the overall good results in our experiments.
One-vs.-Rest frequency binning tends to prefer $10$ bins of equal number of samples, while One-vs.Rest width binning prefers $5$ equal sized bins (See Figures \ref{fig:hyper:freq:bin} and \ref{fig:hyper:width:bin} respectively).

% \subsection{Analysis per classifier}
%% MPN: I remove the full section of individual analysis for classifier + calibrator. The objective was originally to see if particular classifiers beneffited more from particular calibrators, but considering all the metrics requires manual inspection of #metrics x #classifiers = 8 x 11 figures.

% Naive Bayes classifier seems to benefit from the Dirichlet Calibration which is statistically significantly better than all the other methods with Log-loss (See Figure \ref{fig:multi:cd:logloss:nbayes}), and in pair with One-vs.-Rest Isotonic Regression for Brier score while better than the other methods (See Figure \ref{fig:multi:cd:brier:nbayes}).
%       
% For Random forest other calibration methods seem to perform significantly better than Dirichlet Calibration, and while higher in the rank than others, it is not statistically different than not calibrating.
% 
% Ada boost of 200 decision trees shows again to benefit the best from Dirichlet Calibration, however in Brier score it is no different from Isotonic, Beta Calibration or Frequency binning (See Figure \ref{fig:multi:cd:brier:adas}), while in Log-loss it is no different to Frequency binning (See Figure \ref{fig:multi:cd:logloss:adas}).
% 
% The Support Vector classifier with linear kernel shows the best results for ... (See Figure \ref{fig:multi:cd:brier:svc-linear}), and ... for Log-loss (See Figure \ref{fig:multi:cd:logloss:svc-linear}).
% 
% The Support Vector classifier with RBF kernel shows the best results for ... (See Figure \ref{fig:multi:cd:brier:svc-rbf}), and ... for Log-loss (See Figure \ref{fig:multi:cd:logloss:svc-rbf}).
% 
% Multinomial Logistic Regression shows the best results for Dirichlet Calibration, but does not show significant difference against Isotonic Regression and Beta Calibration for Brier score (See Figure \ref{fig:multi:cd:brier:logistic}), and is not different to Beta Calibration for Log-loss (See Figure \ref{fig:multi:cd:logloss:logistic}).
% 
% Finally, the Multilayer Feed-forward neural network shows that it is already pretty much calibrated as the Uncalibrated scores does not show statistical difference with the best calibration methods including Dirichlet Calibration. It also shows that some of the calibration methods worsen the performance of the original model (See Figures \ref{fig:multi:cd:brier:mlp} and \ref{fig:multi:cd:logloss:mlp}).
% 
% MPN: I removed all the CD for log-loss and brier as they need to be explained
%\newcommand{\multiclasssubfigures}[1]{
%  \begin{subfigure}{.45\linewidth}
%  \centering
%  \includegraphics[width=\linewidth]{figures/results/crit_diff_#1_brier}
%  \caption{Brier score #1}
%  \label{fig:multi:cd:brier:#1}
%  \end{subfigure}
%  \hfill
%  \begin{subfigure}{.45\linewidth}
%      \centering
%      \includegraphics[width=\linewidth]{figures/results/crit_diff_#1_loss}
%      \caption{Log-loss score #1}
%      \label{fig:multi:cd:logloss:#1}
%  \end{subfigure}
%  
%   \vfill
%}
%
%\begin{figure}
%  \centering
%    \ForEach{,}{\multiclasssubfigures{\thislevelitem}}{nbayes,forest,adas,svc-linear, svc-rbf}
%  \caption{Critical difference diagrams per classifier.}
%  \label{fig:multi:cd:individual:p1}
%\end{figure}
%
%\begin{figure}
%  \centering
%    \ForEach{,}{\multiclasssubfigures{\thislevelitem}}{logistic,mlp,tree,lda,qda}
%  \caption{Critical difference diagrams per classifier.}
%  \label{fig:multi:cd:individual:p2}
%\end{figure}

%\begin{figure}
%  \centering
%  \begin{subfigure}{.45\linewidth}
%  \centering
%  \includegraphics[width=\linewidth]{}
%  \caption{Naive Bayes}
%  \label{fig:multi:cd:claece:nb}
%  \end{subfigure}
%  \hfill
%  \begin{subfigure}{.45\linewidth}
%      \centering
%      \includegraphics[width=\linewidth]{}
%      \caption{AdaBoost SAMME}
%  \label{fig:multi:cd:claece:adas}
%  \end{subfigure}
%  
%  \begin{subfigure}{.45\linewidth}
%  \centering
%  \includegraphics[width=\linewidth]{}
%  \caption{Decision tree}
%  \label{fig:multi:cd:claece:tree}
%  \end{subfigure}
%  \hfill
%  \begin{subfigure}{.45\linewidth}
%      \centering
%      \includegraphics[width=\linewidth]{}
%      \caption{Random forest}
%  \label{fig:multi:cd:claece:forest}
%  \end{subfigure}
%  
%  \begin{subfigure}{.45\linewidth}
%  \centering
%  \includegraphics[width=\linewidth]{}
%  \caption{SVC with linear kernel}
%  \label{fig:multi:cd:claece:svc-linear}
%  \end{subfigure}
%  \hfill
%  \begin{subfigure}{.45\linewidth}
%      \centering
%      \includegraphics[width=\linewidth]{}
%      \caption{SVC with RBF kernel}
%  \label{fig:multi:cd:claece:svc-rbf}
%  \end{subfigure}
%  
%  \begin{subfigure}{.45\linewidth}
%  \centering
%  \includegraphics[width=\linewidth]{}
%  \caption{Logistic Regression}
%  \label{fig:multi:cd:claece:logistic}
%  \end{subfigure}
%  \hfill
%  \begin{subfigure}{.45\linewidth}
%      \centering
%      \includegraphics[width=\linewidth]{}
%      \caption{Multilayer Perceptron}
%  \label{fig:multi:cd:claece:mlp}
%  \end{subfigure}
%  
%  \begin{subfigure}{.45\linewidth}
%  \centering
%  \includegraphics[width=\linewidth]{}
%  \caption{Linear Discriminant Analysis}
%  \label{fig:multi:cd:claece:lda}
%  \end{subfigure}
%  \hfill
%  \begin{subfigure}{.45\linewidth}
%      \centering
%      \includegraphics[width=\linewidth]{}
%      \caption{Quadratic Discriminant Analysis}
%  \label{fig:multi:cd:claece:qda}
%  \end{subfigure}
%  \caption{Critical difference diagrams for the classwise-ECE per classifier.}
%  \label{fig:multi:cd:individual:claece}
%\end{figure}
  
\subsection{Comparison of classifiers}

%Accuracy:
%(best group) mlp, forest, svc-linear, knn, logistic, qda, lda
%(worst group) nbayes, adas, tree, svc-rbf, lda, qda, logistic
%log-loss:
%(best group) mlp, forest, svc-linear, logistic, qda, svc-rbf
%(worst group) nbayes, adas, knn, tree, lda, svc-rbf
%brier score:
%(best group) mlp, forest, knn, svc-linear, qda, logistic
%(worst group) adas, nbayes, lda, tree, svc-rbf
%MCE:
%(best group) logistic, sv-linear, forest, knn, lda, mlp, svc-rbf, tree, qda
%(worst group) adas, nbayes, qda
%conf-ece:
%(best group) mlp, knn, tree, svc-rbf, svc-linear, qda, forest
%(worst group) adas, nbayes, lda, logistic, forest
%cla-ece:
%(best group) mlp, knn, tree, svc-linear, svc-rbf, qda, forest
%(worst group) nbayes, adas, lda, logistic, forest
%p-conf-ece:
%(best group) mlp, tree, svc-rbf, svc-linear, knn, forest, qda, logistic
%(worst group) nbayes, adas, lda, logistic, qda, forest, knn
%p-cla-ece:
%(best group) svc-linear, mlp, svc-rbf, tree, forest, knn, qda, logistic
%(worst group) nbayes, adas, lda, logistic, qda, knn, forest

% Remove from all claims at this point
%full-ece:
%(best group) knn, mlp, tree, qda, svc-rbf, svc-linear, forest
%(worst group) adas, nbayes, logistic, lda, forest
%p-full-ece:
%(best group) forest, svc-linear, svc-rbf, mlp, logistic, tree
%(worst group) nbayes, adas, lda, qda, knn, tree

In this Section we compare all the classifiers without post-hoc calibration on $17$ of the datasets; from the total of $21$ datasets \emph{shuttle}, \emph{yeast}, \emph{mfeat-karhunen} and \emph{libras-movement} were removed from this analysis as at least one classifier was not able to complete the experiment.

\begin{figure}[htp]
  \centering
    \begin{subfigure}{.49\linewidth}
    \centering
    \includegraphics[width=\linewidth]{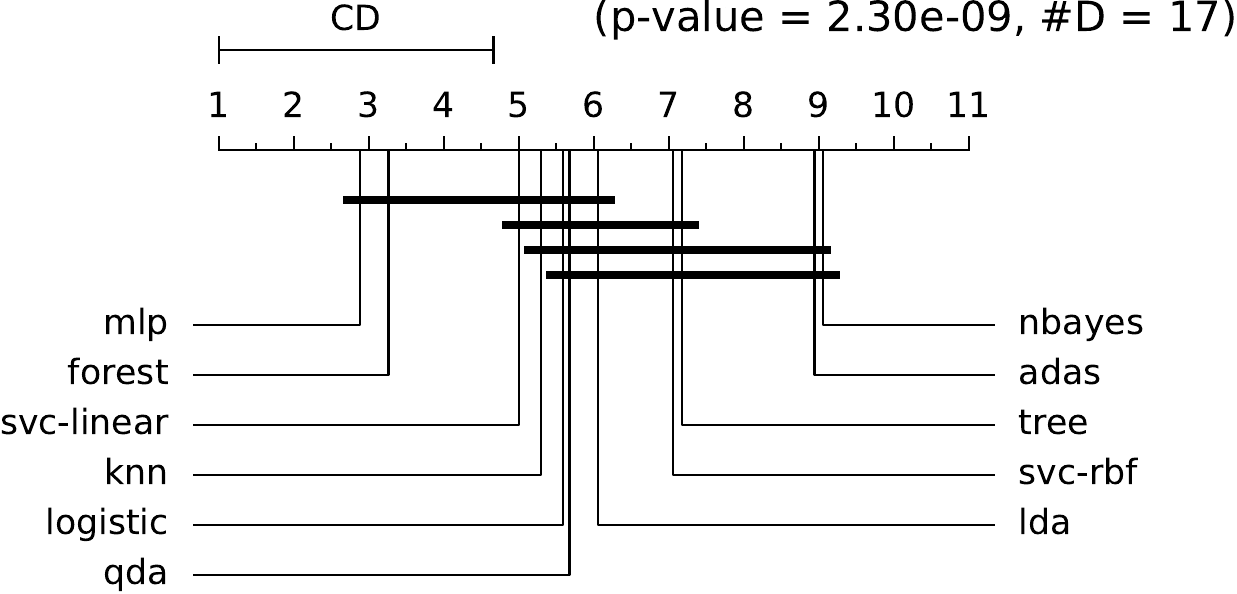}
    \caption{Accuracy}
    \label{fig:uncal:cd:acc}
    \end{subfigure}
    \hfill
    \begin{subfigure}{.49\linewidth}
    \centering
    \includegraphics[width=\linewidth]{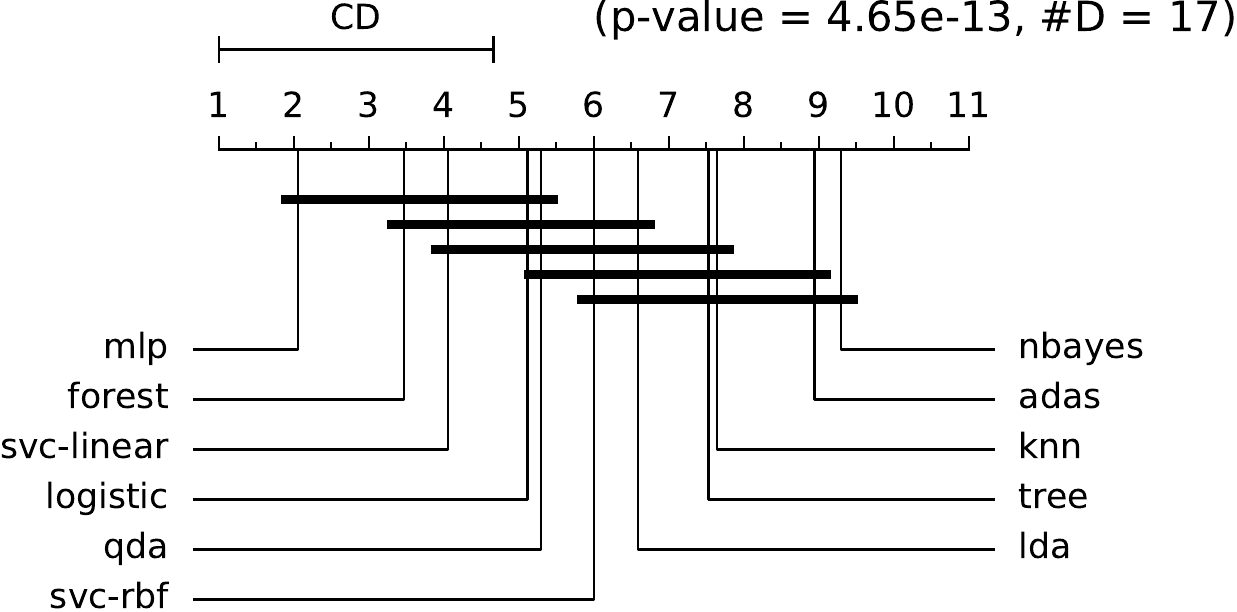}
    \caption{Log-loss}
    \label{fig:uncal:cd:loss}
    \end{subfigure}
    
    \begin{subfigure}{.49\linewidth}
    \centering
    \includegraphics[width=\linewidth]{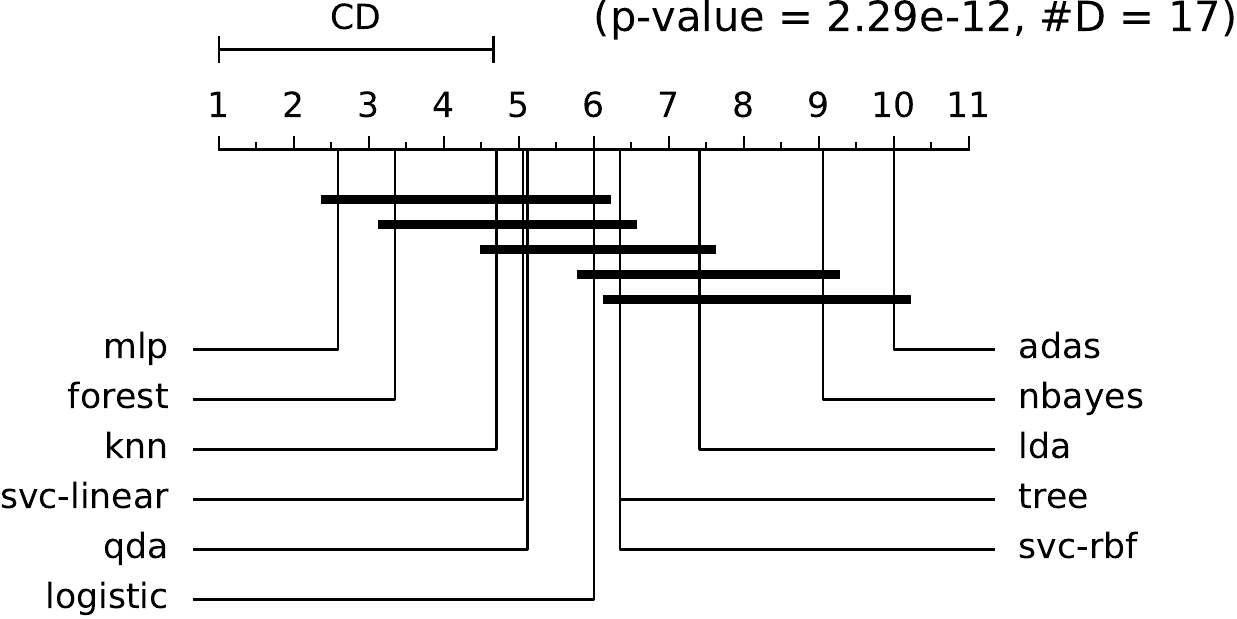}
    \caption{Brier score}
    \label{fig:uncal:cd:brier}
    \end{subfigure}
    \hfill
    \begin{subfigure}{.49\linewidth}
    \centering
    \includegraphics[width=\linewidth]{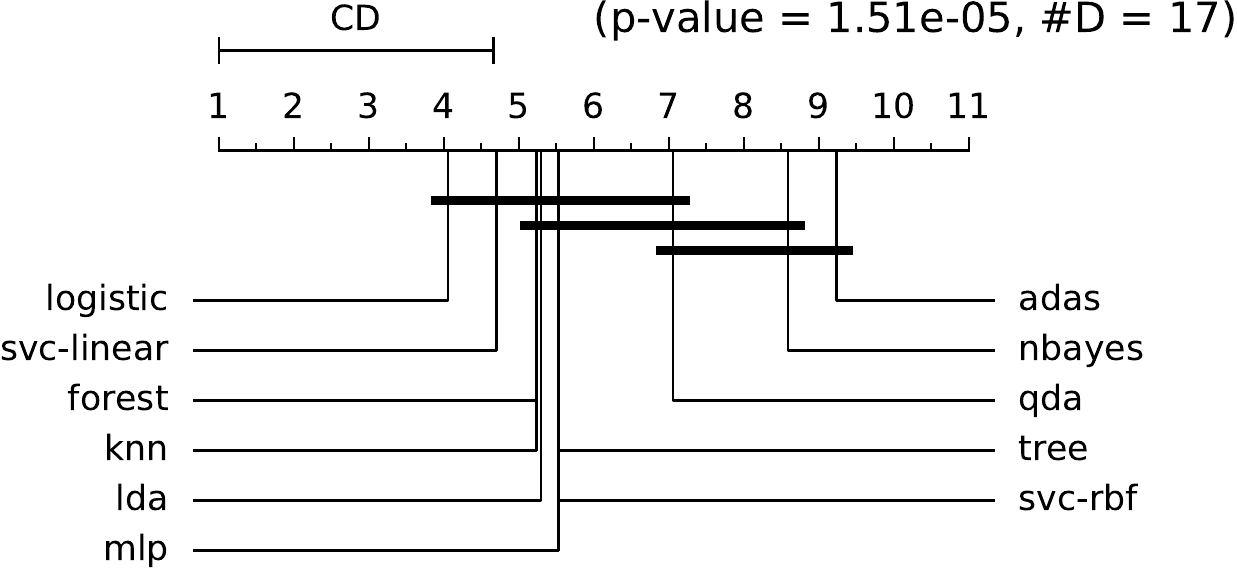}
    \caption{Maximum Calibration Error}
    \label{fig:uncal:cd:mce}
    \end{subfigure}
    
    \begin{subfigure}{.49\linewidth}
    \centering
    \includegraphics[width=\linewidth]{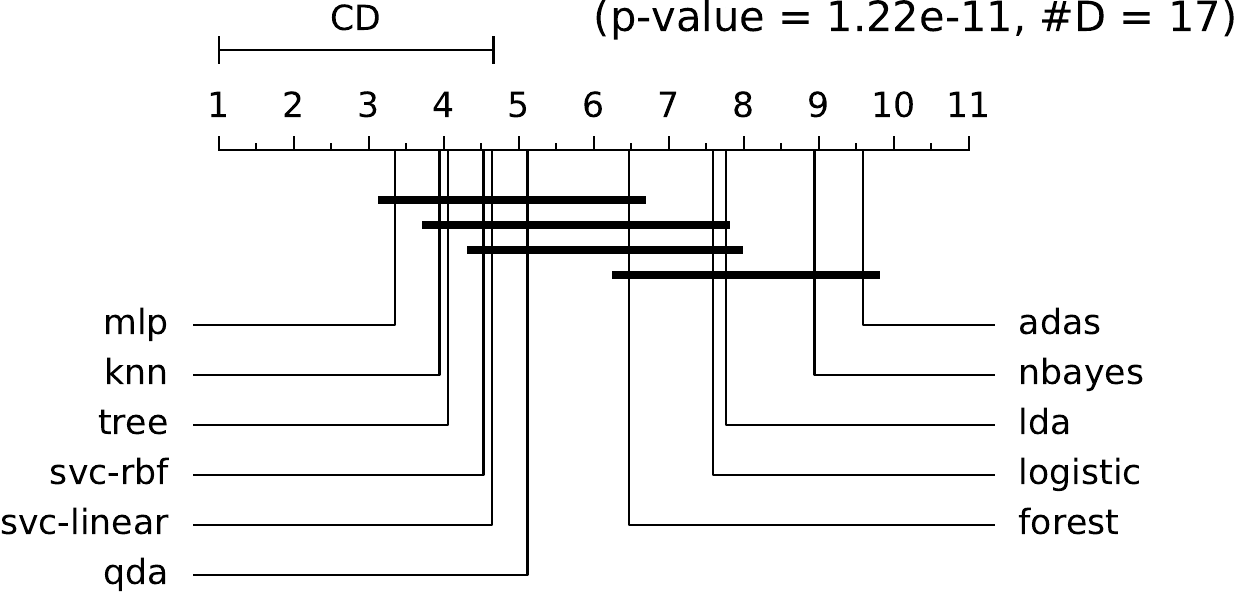}
    \caption{conf-ece}
    \label{fig:uncal:cd:conf-ece}
    \end{subfigure}
    \hfill
    \begin{subfigure}{.49\linewidth}
    \centering
    \includegraphics[width=\linewidth]{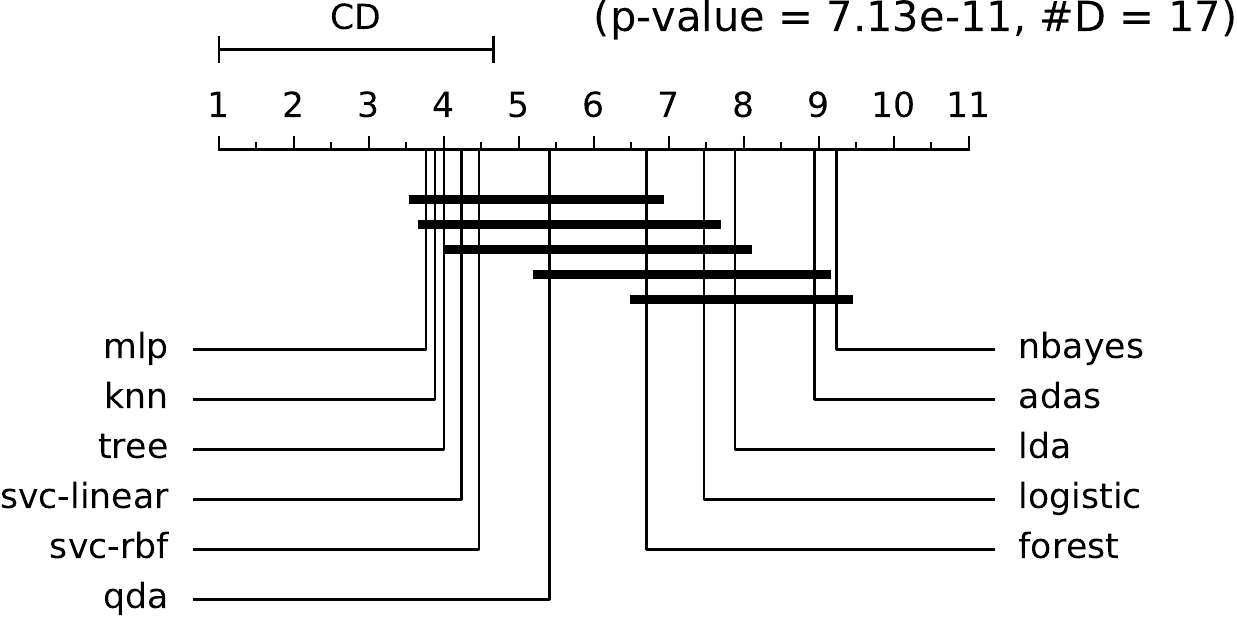}
    \caption{cw-ece}
    \label{fig:uncal:cd:cw-ece}
    \end{subfigure}
    
    \begin{subfigure}{.49\linewidth}
    \centering
    \includegraphics[width=\linewidth]{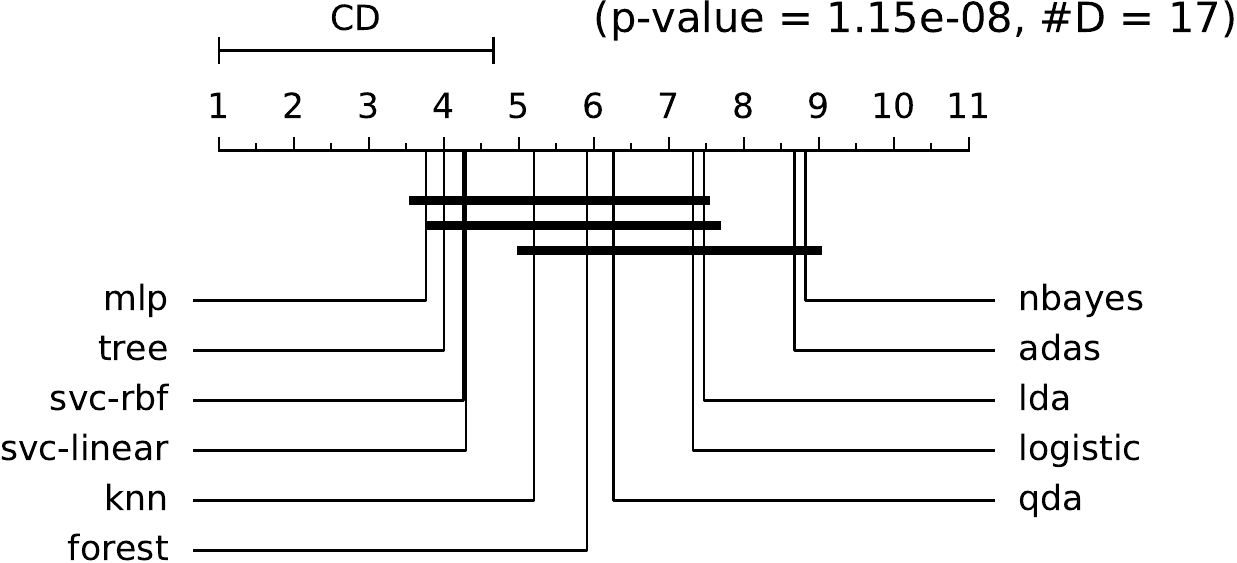}
    \caption{p-conf-ece}
    \label{fig:uncal:cd:p-conf-ece}
    \end{subfigure}
    \hfill
    \begin{subfigure}{.49\linewidth}
    \centering
    \includegraphics[width=\linewidth]{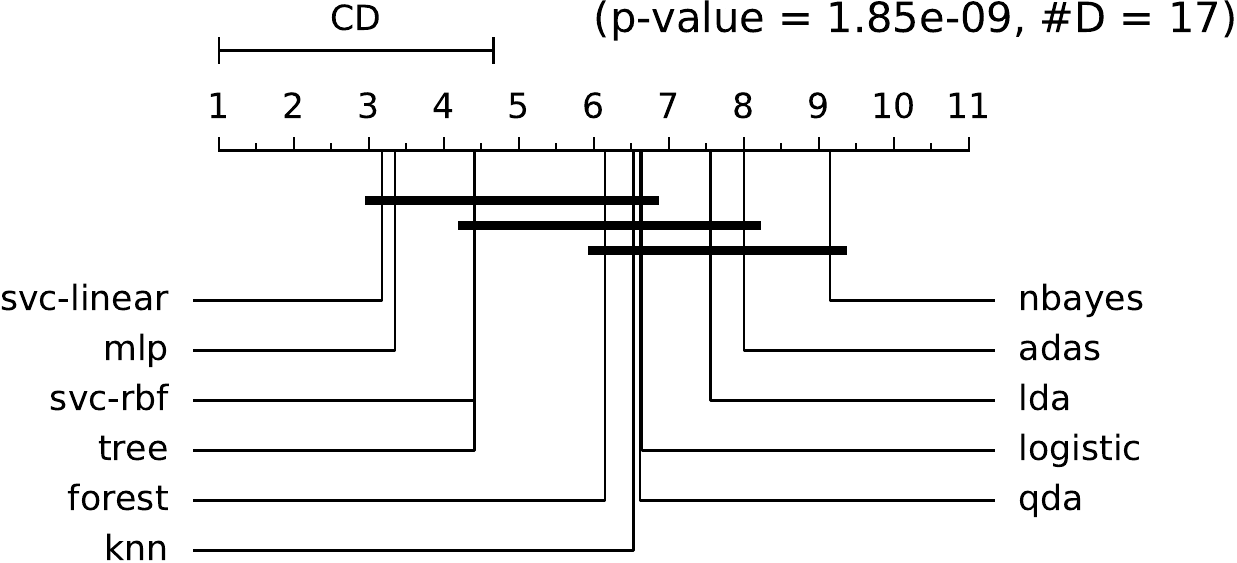}
    \caption{p-cw-ece}
    \label{fig:uncal:cd:p-cw-ece}
    \end{subfigure}
  \caption{Critical difference of uncalibrated classifiers.}
  \label{fig:uncal:cd:all}
\end{figure}

Figure \ref{fig:uncal:cd:all} shows the Critical Difference diagram for all the $8$ metrics.
In particular, the MLP and the SVC with linear kernel are always in the group with the higher rankings and never in the lowest.
Similarly, random forest is consistently in the best group, but in the worst group as well in $4$ of the measures.
SVC with radial basis kernel is in the best group $6$ times, but $3$ times in the worst. 
On the other hand, naive Bayes and Adaboost SAMME are consistently in the worst group and never in the best one.
The rest of the classifiers did not show a clear ranking position.

\begin{figure}
  \centering
    \begin{subfigure}{.4\linewidth}
    \centering
    \includegraphics[width=\linewidth]{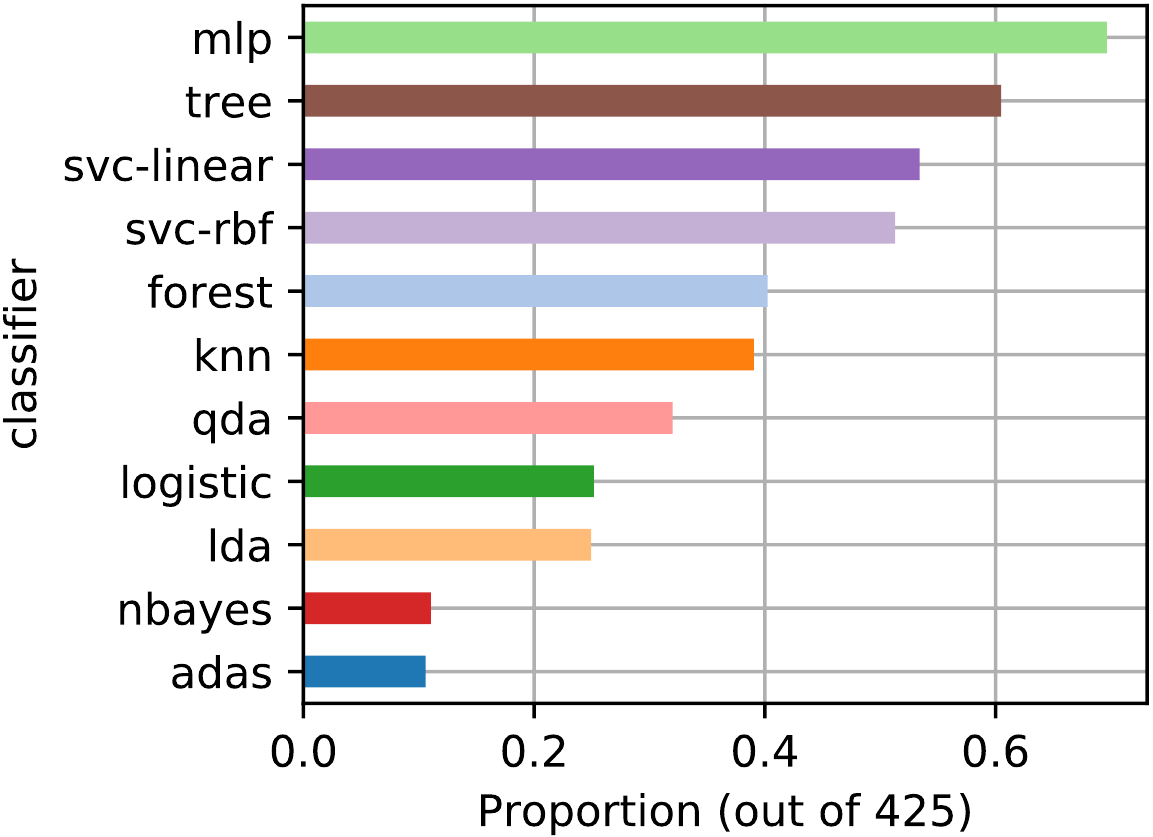}
    \caption{p-conf-ece}
    \label{fig:uncal:p:conf:ece}
    \end{subfigure}
    \hfill
    \begin{subfigure}{.4\linewidth}
    \centering
    \includegraphics[width=\linewidth]{figures/results/p_table_classifiers_p-cw-ece.pdf}
    \caption{p-cw-ece}
    \label{fig:uncal:p:cw:ece}
    \end{subfigure}
%    \hfill
%    \begin{subfigure}{.32\linewidth}
%    \centering
%    \includegraphics[width=\linewidth]{}
%    \caption{p-full-ece}
%    \label{fig:uncal:p:full:ece}
%    \end{subfigure}
    
  \caption{Proportion of times each classifier is already calibrated with different p-tests.}
  \label{fig:uncal:p:ece}
\end{figure}

Figures \ref{fig:uncal:p:cw:ece} and \ref{fig:uncal:p:conf:ece} show the proportion of times each classifier passed the p-conf-ECE and p-cw-ECE statistical test for all datasets and cross-validation folds.

\subsection{Deep neural networks}

In this section, we provide further discussion about results from the deep networks experiments. These are given in the form of critical difference diagrams (Figure~\ref{fig:dnn:cd:all}) and tables (Tables~\ref{table:res:dnn:loss}-\ref{table:res:dnn:pece_cw}) both including the following measures: error rate, log-loss, Brier score, maximum calibration error (MCE), confidence-ECE (conf-ECE), classwise-ECE (cw-ECE), as well as significance measures p-conf-ECE and p-cw-ECE.

In addition, Table~\ref{table:res:dnn:ms_vs_vecs} compares MS-ODIR and vector scaling on log-loss. On the table, we also added MS-ODIR-zero which was obtained from the respective MS-ODIR model by replacing the off-diagonal entries with zeroes. Each experiment is replicated three times with different splits on datasets. This is done to compare the stability of the methods. In each replication, the best scoring model is written in bold.

Finally, Figure~\ref{fig:res:rd_ece:class4} shows that temperature scaling systematically under-estimates class 4 probabilities on the model {\tt c10\_resnet\_wide32} on CIFAR-10.

%\subsubsection{Critical Difference Diagrams}

\begin{figure}
  \centering
    \begin{subfigure}{.49\linewidth}
    \centering
    \includegraphics[width=\linewidth]{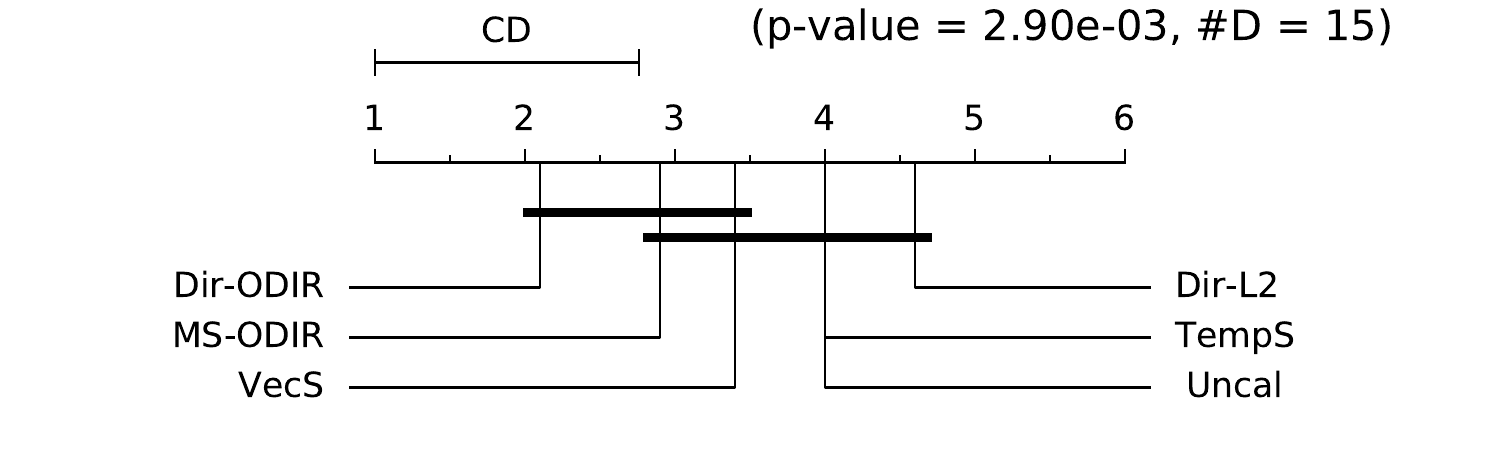}
    \caption{Error Rate}
    \label{fig:dnn:cd:error_rate}
    \end{subfigure}
    \hfill
    \begin{subfigure}{.49\linewidth}
    \centering
    \includegraphics[width=\linewidth]{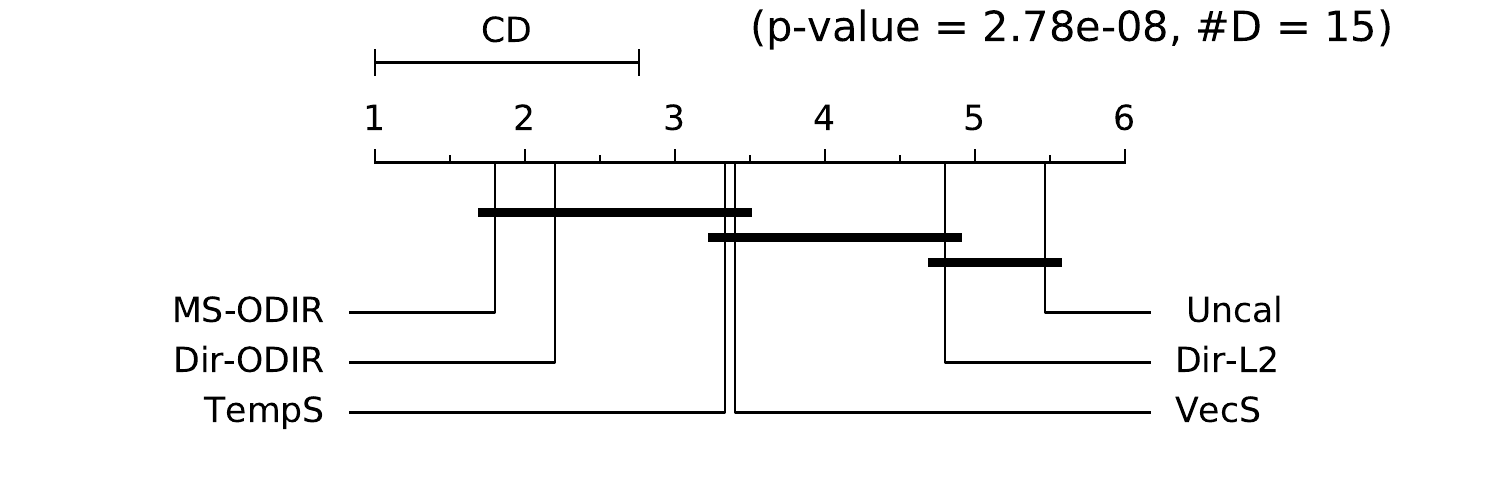}
    \caption{Brier score}
    \label{fig:dnn:cd:brier}
    \end{subfigure}
    
    \begin{subfigure}{.49\linewidth}
      \centering
      \includegraphics[width=\linewidth]{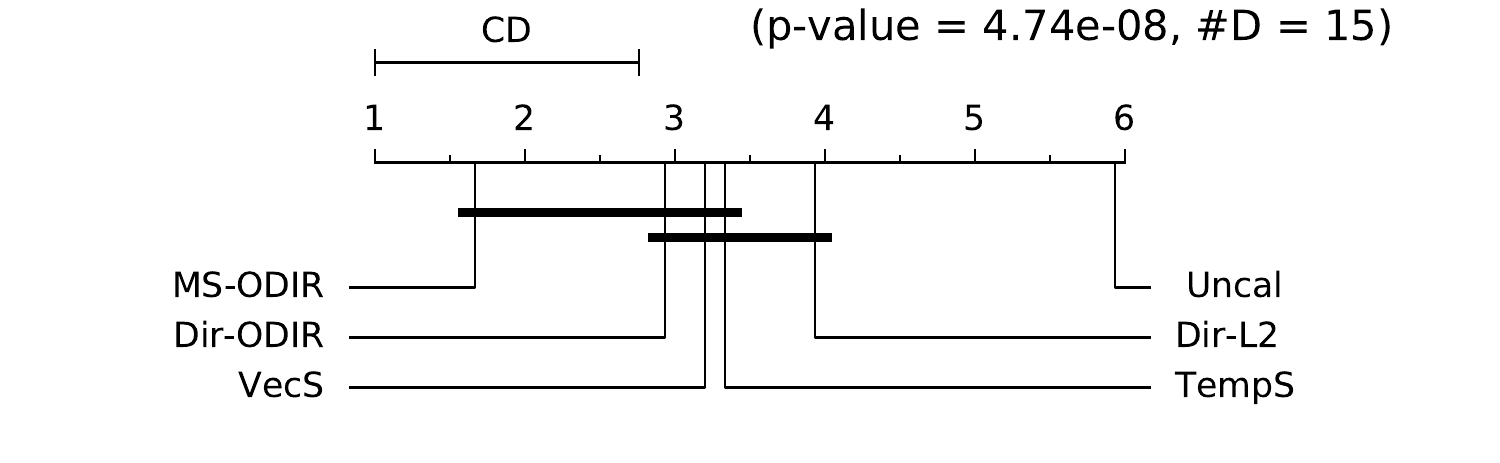}
      \caption{Log-loss score}
      \label{fig:dnn:cd:logloss}
    \end{subfigure}
    \hfill
    \begin{subfigure}{.49\linewidth}
      \centering
      \includegraphics[width=\linewidth]{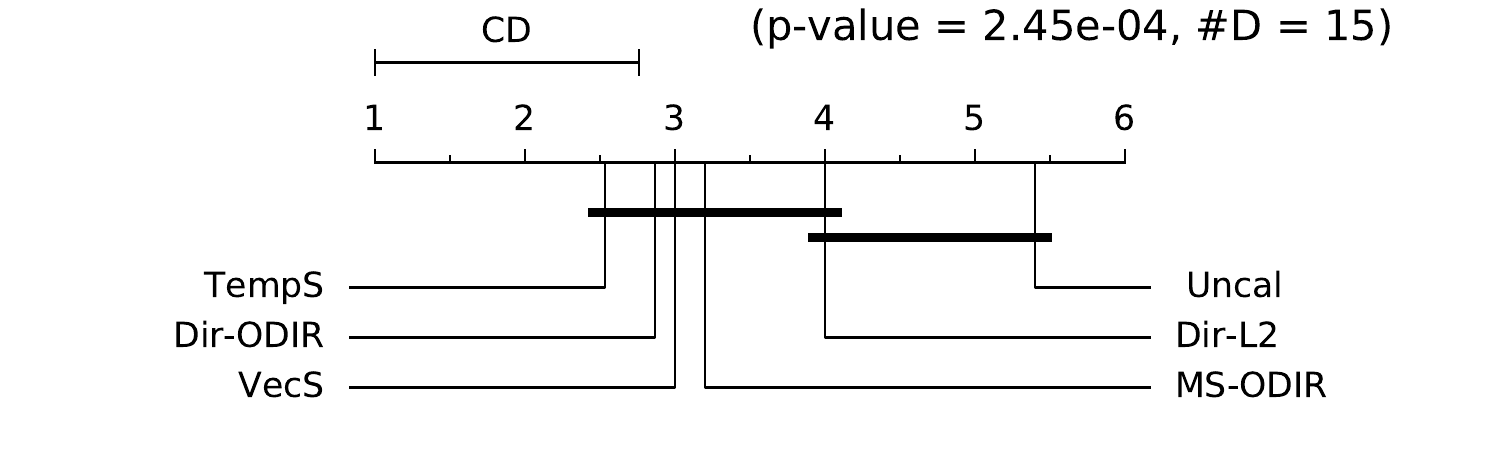}
      \caption{Maximum calibration error}
      \label{fig:dnn:cd:mce}
    \end{subfigure}
    
    \begin{subfigure}{.49\linewidth}
      \centering
\includegraphics[width=\linewidth]{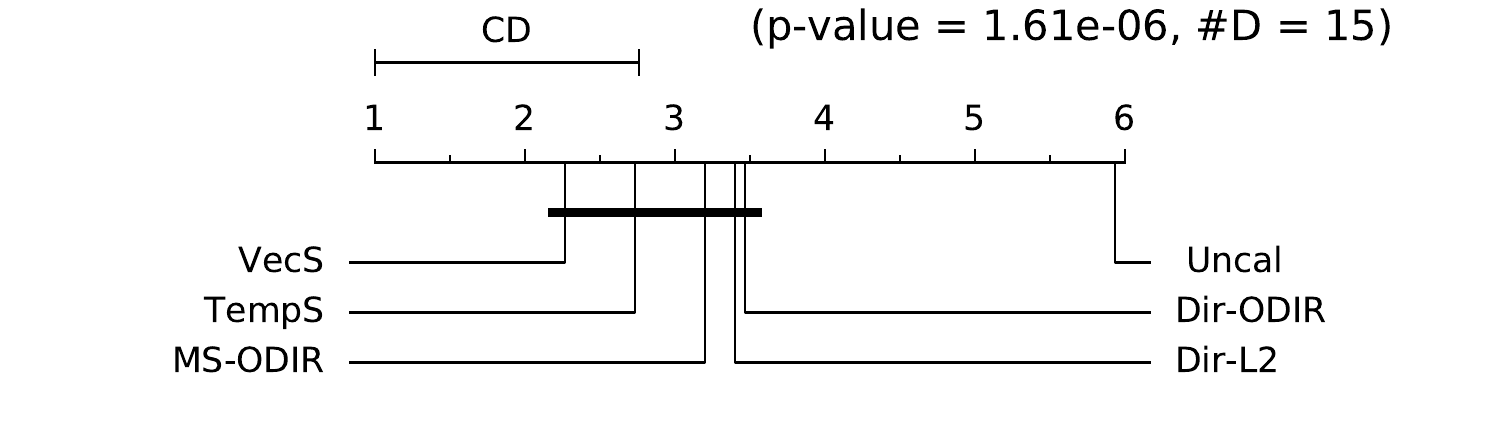}
      \caption{conf-ece}
      \label{fig:dnn:cd:guoece}
    \end{subfigure}
    \hfill
    \begin{subfigure}{.49\linewidth}
      \centering
      \includegraphics[width=\linewidth]{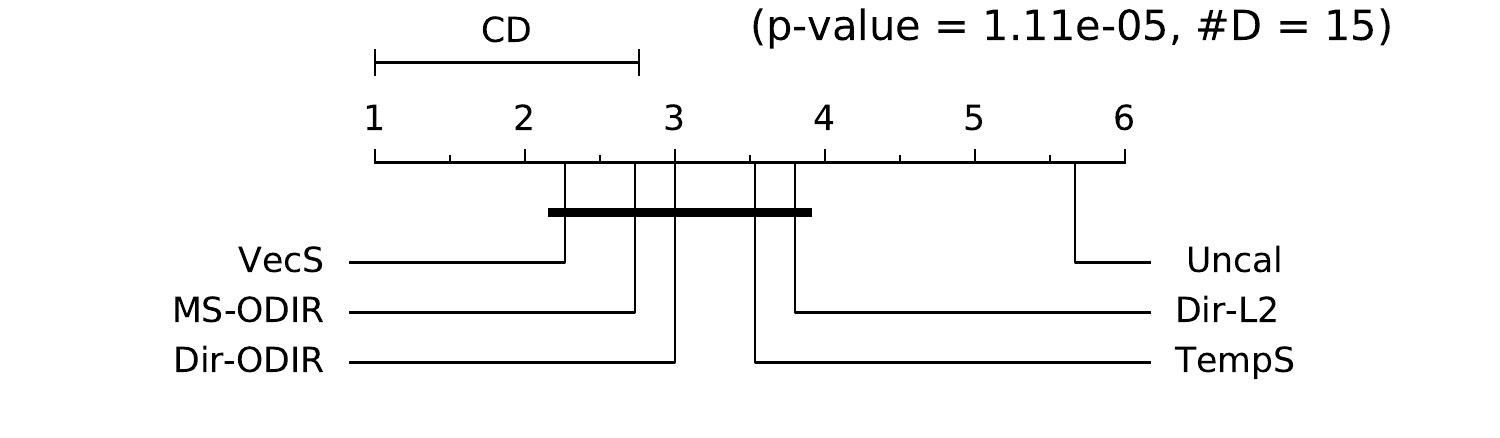}
      \caption{cw-ece}
      \label{fig:dnn:cd:claece}
    \end{subfigure}
    
    \begin{subfigure}{.49\linewidth}
      \centering
      \includegraphics[width=\linewidth]{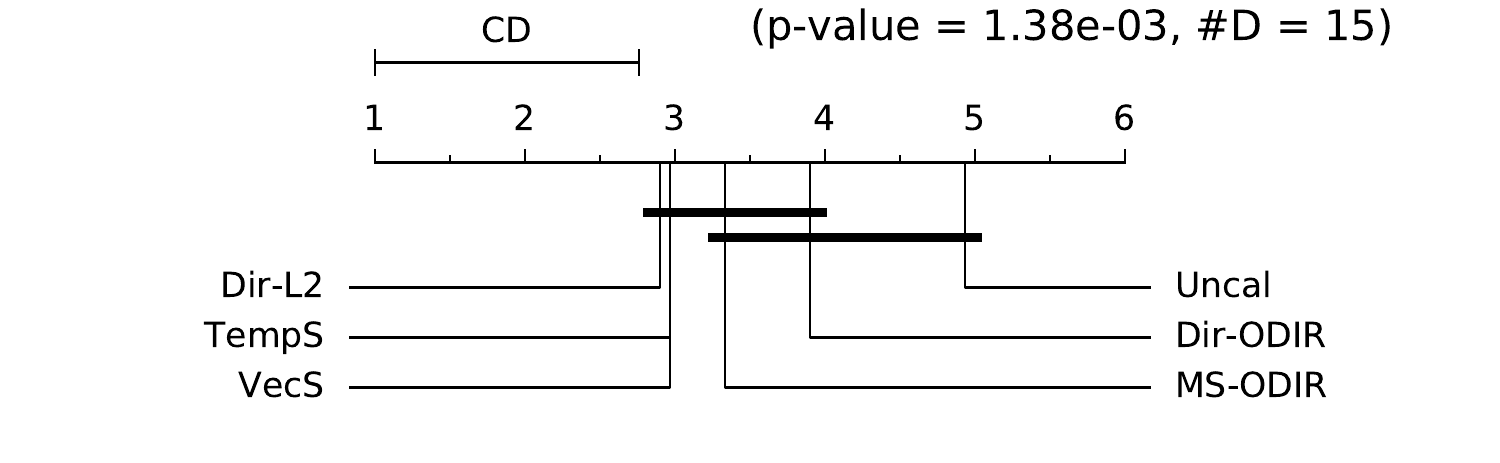}
      \caption{p-conf-ece}
      \label{fig:dnn:cd:pguoece}
    \end{subfigure}
    \hfill
    \begin{subfigure}{.49\linewidth}
      \centering
      \includegraphics[width=\linewidth]{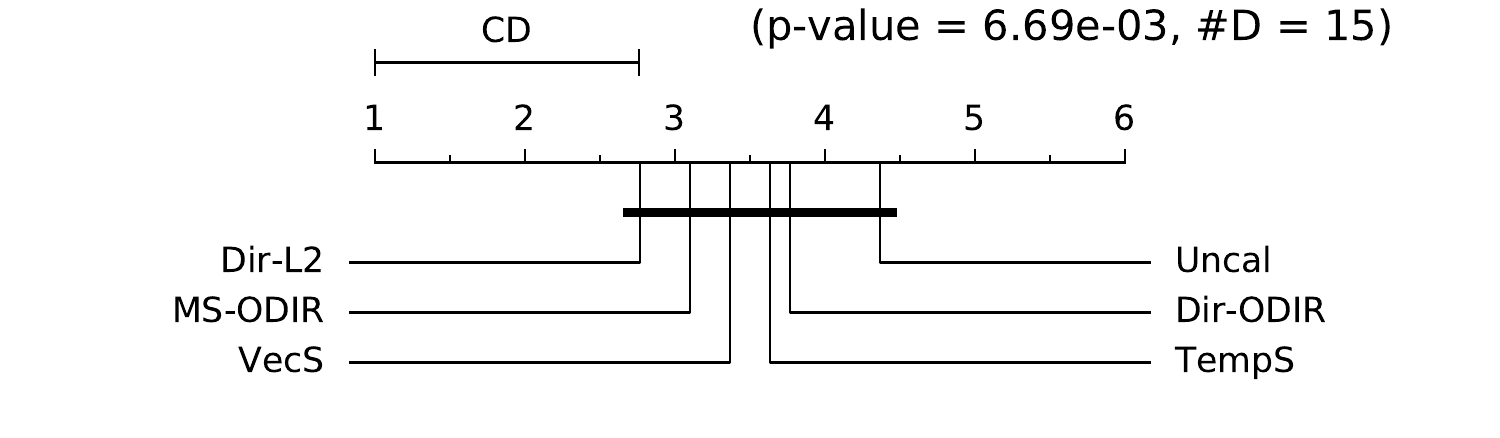}
      \caption{p-cw-ece}
      \label{fig:dnn:cd:p-cw-ece}
    \end{subfigure}
  \caption{Critical difference of the deep neural networks.}
  \label{fig:dnn:cd:all}
\end{figure}

%\subsubsection{Reliability Diagrams}

\begin{figure}
    \centering
    \includegraphics[width=\linewidth]{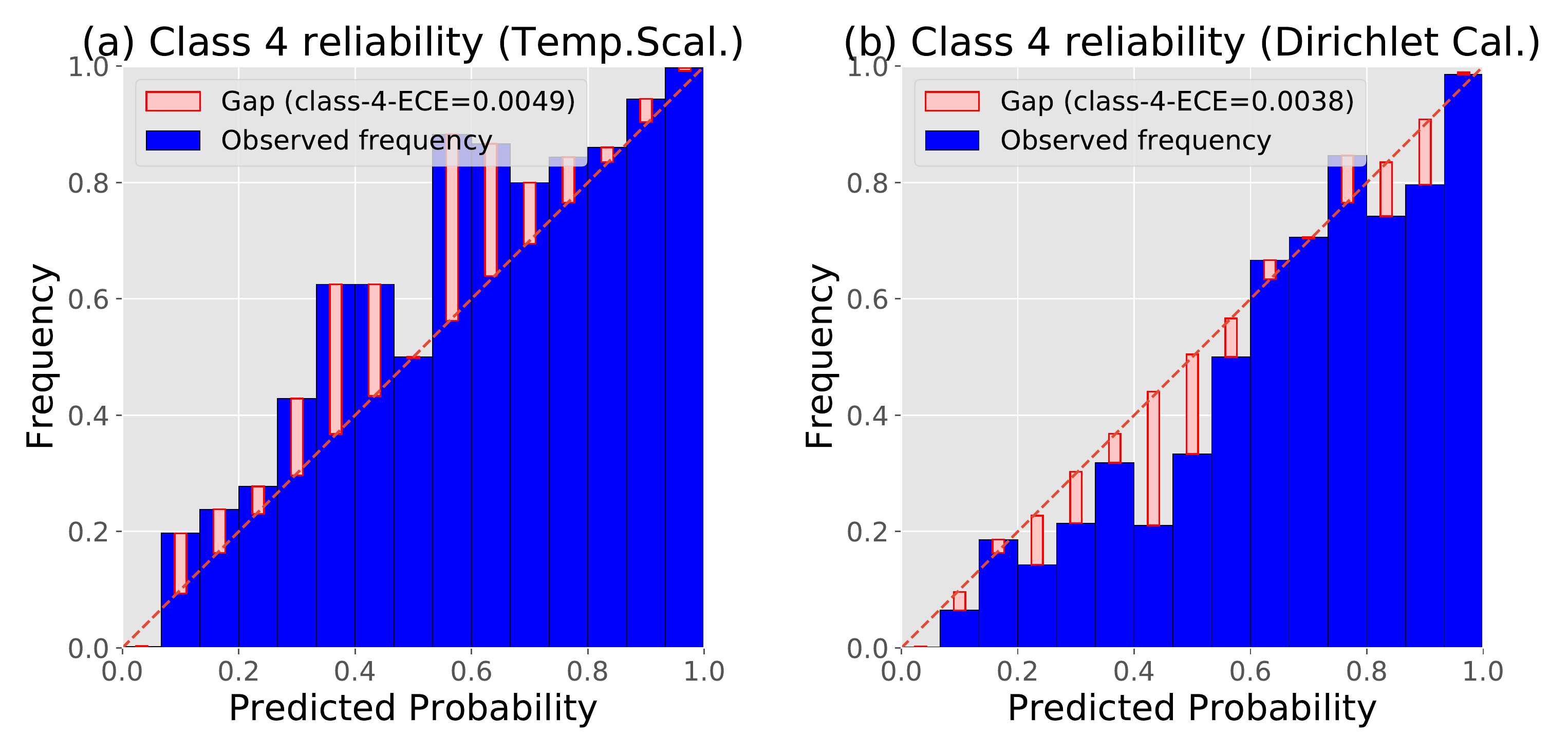}
    \caption{Reliability diagrams of c10\_resnet\_wide32 on CIFAR-10: (a) classwise-reliability for class 4 after temperature scaling; (b) classwise-reliability for class 4 after Dirichlet calibration.}
    \label{fig:res:rd_ece:class4}
\end{figure}

%\subsubsection{Tables}

\begin{table}
\centering
\captionof{table}{Scores and ranking of calibration methods for \textbf{log-loss}.}
\tiny
\begin{tabular}{l|c|ccc|cc}
\toprule
 &        & \multicolumn{3}{c}{general-purpose calibrators} & \multicolumn{2}{c}{calibrators using logits}\\ 
 &  Uncal &  TempS &  Dir-L2 &  Dir-ODIR &  VecS & MS-ODIR \\
\midrule
c10\_convnet         & $0.39098_{6}$ & $\mathbf{0.19497_{1}}$ & $0.19692_{4}$ & $0.19536_{2}$ & $0.19743_{5}$ & $0.19634_{3}$ \\
c10\_densenet40      & $0.42821_{6}$ & $0.22509_{5}$ & $\mathbf{0.22048_{1}}$ & $0.22371_{4}$ & $0.22270_{3}$ & $0.22240_{2}$ \\
c10\_lenet5          & $0.82326_{6}$ & $0.80031_{5}$ & $0.74418_{2}$ & $0.74441_{3}$ & $0.74704_{4}$ & $\mathbf{0.74262_{1}}$ \\
c10\_resnet110       & $0.35827_{6}$ & $0.20926_{5}$ & $\mathbf{0.20303_{1}}$ & $0.20511_{3}$ & $0.20595_{4}$ & $0.20375_{2}$ \\
c10\_resnet110\_SD    & $0.30325_{6}$ & $0.17760_{5}$ & $0.17694_{4}$ & $0.17608_{3}$ & $0.17549_{2}$ & $\mathbf{0.17537_{1}}$ \\
c10\_resnet\_wide32   & $0.38170_{6}$ & $0.19148_{5}$ & $0.18464_{4}$ & $0.18203_{2}$ & $0.18276_{3}$ & $\mathbf{0.18165_{1}}$ \\
\hline
c100\_convnet        & $1.64120_{6}$ & $\mathbf{0.94162_{1}}$ & $1.18945_{5}$ & $0.96121_{2}$ & $0.96369_{4}$ & $0.96141_{3}$ \\
c100\_densenet40     & $2.01740_{6}$ & $1.05713_{2}$ & $1.25293_{5}$ & $1.05909_{4}$ & $1.05831_{3}$ & $\mathbf{1.05084_{1}}$ \\
c100\_lenet5         & $2.78365_{6}$ & $2.64979_{5}$ & $2.59482_{4}$ & $2.48951_{2}$ & $2.51590_{3}$ & $\mathbf{2.48670_{1}}$ \\
c100\_resnet110      & $1.69371_{6}$ & $1.09169_{3}$ & $1.21239_{5}$ & $1.09607_{4}$ & $1.08916_{2}$ & $\mathbf{1.07370_{1}}$ \\
c100\_resnet110\_SD   & $1.35250_{6}$ & $0.94214_{3}$ & $1.19837_{5}$ & $0.94477_{4}$ & $\mathbf{0.92341_{1}}$ & $0.92731_{2}$ \\
c100\_resnet\_wide32  & $1.80215_{6}$ & $0.94453_{3}$ & $1.08711_{5}$ & $0.95288_{4}$ & $0.93650_{2}$ & $\mathbf{0.93273_{1}}$ \\
\hline
SVHN\_convnet        & $0.20460_{6}$ & $0.15142_{5}$ & $0.14246_{3}$ & $0.13791_{2}$ & $0.14388_{4}$ & $\mathbf{0.13760_{1}}$ \\
SVHN\_resnet152\_SD   & $0.08542_{6}$ & $\mathbf{0.07861_{1}}$ & $0.08463_{5}$ & $0.08038_{2}$ & $0.08124_{4}$ & $0.08100_{3}$ \\
\midrule 
 avg rank & 6.0 & 3.5 & 3.79 & 2.93 & 3.14 & 1.64\\
\bottomrule
\end{tabular}
\normalsize
\label{table:res:dnn:loss}
\hfill
\end{table}
\begin{table}
\centering
\captionof{table}{Scores and ranking of calibration methods for \textbf{Brier score}.}
\tiny
\begin{tabular}{l|c|ccc|cc}
\toprule
 &        & \multicolumn{3}{c}{general-purpose calibrators} & \multicolumn{2}{c}{calibrators using logits}\\ 
 &  Uncal &  TempS &  Dir-L2 &  Dir-ODIR &  VecS & MS-ODIR \\
\midrule
c10\_convnet         & $0.01090_{6}$ & $\mathbf{0.00952_{1}}$ & $0.00969_{5}$ & $0.00955_{3}$ & $0.00958_{4}$ & $0.00953_{2}$ \\
c10\_densenet40      & $0.01274_{6}$ & $0.01100_{4}$ & $0.01102_{5}$ & $0.01097_{2}$ & $0.01097_{3}$ & $\mathbf{0.01097_{1}}$ \\
c10\_lenet5          & $0.03788_{6}$ & $0.03748_{5}$ & $0.03510_{2}$ & $0.03511_{3}$ & $0.03523_{4}$ & $\mathbf{0.03502_{1}}$ \\
c10\_resnet110       & $0.01102_{6}$ & $0.00979_{4}$ & $0.00979_{5}$ & $0.00977_{2}$ & $0.00978_{3}$ & $\mathbf{0.00976_{1}}$ \\
c10\_resnet110\_SD    & $0.00981_{6}$ & $0.00874_{4}$ & $0.00877_{5}$ & $0.00867_{3}$ & $0.00867_{2}$ & $\mathbf{0.00866_{1}}$ \\
c10\_resnet\_wide32   & $0.01047_{6}$ & $0.00924_{5}$ & $0.00909_{4}$ & $\mathbf{0.00888_{1}}$ & $0.00891_{3}$ & $0.00889_{2}$ \\
\hline
c100\_convnet        & $0.00425_{5}$ & $\mathbf{0.00358_{1}}$ & $0.00441_{6}$ & $0.00358_{2}$ & $0.00362_{4}$ & $0.00361_{3}$ \\
c100\_densenet40     & $0.00491_{6}$ & $0.00401_{3}$ & $0.00468_{5}$ & $0.00400_{2}$ & $0.00403_{4}$ & $\mathbf{0.00400_{1}}$ \\
c100\_lenet5         & $0.00813_{6}$ & $0.00792_{5}$ & $0.00786_{4}$ & $0.00760_{2}$ & $0.00767_{3}$ & $\mathbf{0.00760_{1}}$ \\
c100\_resnet110      & $0.00453_{6}$ & $0.00392_{3}$ & $0.00438_{5}$ & $0.00391_{2}$ & $0.00393_{4}$ & $\mathbf{0.00391_{1}}$ \\
c100\_resnet110\_SD   & $0.00418_{5}$ & $0.00367_{4}$ & $0.00456_{6}$ & $0.00364_{3}$ & $\mathbf{0.00360_{1}}$ & $0.00361_{2}$ \\
c100\_resnet\_wide32  & $0.00432_{6}$ & $0.00355_{4}$ & $0.00401_{5}$ & $0.00354_{3}$ & $0.00352_{2}$ & $\mathbf{0.00351_{1}}$ \\
\hline
SVHN\_convnet        & $0.00776_{6}$ & $0.00598_{5}$ & $0.00555_{3}$ & $\mathbf{0.00530_{1}}$ & $0.00561_{4}$ & $0.00532_{2}$ \\
SVHN\_resnet152\_SD   & $0.00297_{3}$ & $\mathbf{0.00291_{1}}$ & $0.00305_{6}$ & $0.00293_{2}$ & $0.00299_{5}$ & $0.00298_{4}$ \\
\midrule 
 avg rank & 5.64 & 3.5 & 4.71 & 2.21 & 3.29 & 1.64\\
\bottomrule
\end{tabular}
\normalsize
\label{table:res:dnn:brier}
\hfill
\end{table}
\begin{table}
\centering
\captionof{table}{Scores and ranking of calibration methods for \textbf{confidence-ECE}.}
\tiny
\begin{tabular}{l|c|ccc|cc}
\toprule
 &        & \multicolumn{3}{c}{general-purpose calibrators} & \multicolumn{2}{c}{calibrators using logits}\\ 
 &  Uncal &  TempS &  Dir-L2 &  Dir-ODIR &  VecS & MS-ODIR \\
\midrule
c10\_convnet         & $0.04760_{6}$ & $0.01065_{5}$ & $0.00769_{2}$ & $0.00960_{4}$ & $\mathbf{0.00740_{1}}$ & $0.00782_{3}$ \\
c10\_densenet40      & $0.05500_{6}$ & $0.00946_{2}$ & $\mathbf{0.00568_{1}}$ & $0.01097_{5}$ & $0.01018_{4}$ & $0.00988_{3}$ \\
c10\_lenet5          & $0.05180_{6}$ & $0.01665_{5}$ & $0.01383_{3}$ & $0.01367_{2}$ & $\mathbf{0.01310_{1}}$ & $0.01468_{4}$ \\
c10\_resnet110       & $0.04750_{6}$ & $0.01132_{5}$ & $\mathbf{0.00680_{1}}$ & $0.01086_{3}$ & $0.01130_{4}$ & $0.01059_{2}$ \\
c10\_resnet110\_SD    & $0.04113_{6}$ & $\mathbf{0.00555_{1}}$ & $0.00646_{4}$ & $0.00815_{5}$ & $0.00579_{3}$ & $0.00566_{2}$ \\
c10\_resnet\_wide32   & $0.04505_{6}$ & $0.00784_{4}$ & $\mathbf{0.00524_{1}}$ & $0.00837_{5}$ & $0.00769_{3}$ & $0.00727_{2}$ \\
\hline
c100\_convnet        & $0.17614_{6}$ & $\mathbf{0.01367_{1}}$ & $0.14347_{5}$ & $0.02069_{3}$ & $0.01965_{2}$ & $0.02660_{4}$ \\
c100\_densenet40     & $0.21156_{6}$ & $\mathbf{0.00902_{1}}$ & $0.12380_{5}$ & $0.01138_{2}$ & $0.01224_{3}$ & $0.02197_{4}$ \\
c100\_lenet5         & $0.12125_{6}$ & $0.01499_{4}$ & $0.01369_{2}$ & $0.02003_{5}$ & $\mathbf{0.01294_{1}}$ & $0.01407_{3}$ \\
c100\_resnet110      & $0.18480_{6}$ & $\mathbf{0.02380_{1}}$ & $0.14535_{5}$ & $0.02822_{4}$ & $0.02693_{2}$ & $0.02735_{3}$ \\
c100\_resnet110\_SD   & $0.15861_{5}$ & $\mathbf{0.01214_{1}}$ & $0.15920_{6}$ & $0.02283_{4}$ & $0.01296_{2}$ & $0.02246_{3}$ \\
c100\_resnet\_wide32  & $0.18784_{6}$ & $\mathbf{0.01472_{1}}$ & $0.13509_{5}$ & $0.01891_{3}$ & $0.01718_{2}$ & $0.02581_{4}$ \\
\hline
SVHN\_convnet        & $0.07755_{6}$ & $0.01179_{4}$ & $0.01910_{5}$ & $0.00997_{2}$ & $\mathbf{0.00934_{1}}$ & $0.01037_{3}$ \\
SVHN\_resnet152\_SD   & $0.00862_{6}$ & $0.00607_{4}$ & $0.00691_{5}$ & $\mathbf{0.00582_{1}}$ & $0.00595_{2}$ & $0.00604_{3}$ \\
\midrule 
 avg rank & 5.93 & 2.79 & 3.57 & 3.43 & 2.21 & 3.07\\
\bottomrule
\end{tabular}
\normalsize
\label{table:res:dnn:ece}
\hfill
\end{table}
\begin{table}
\centering
\captionof{table}{Scores and ranking of calibration methods for \textbf{classwise-ECE}.}
\tiny
\begin{tabular}{l|c|ccc|cc}
\toprule
 &        & \multicolumn{3}{c}{general-purpose calibrators} & \multicolumn{2}{c}{calibrators using logits}\\ 
 &  Uncal &  TempS &  Dir-L2 &  Dir-ODIR &  VecS & MS-ODIR \\
\midrule
c10\_convnet         & $0.10375_{6}$ & $0.04423_{4}$ & $0.04262_{2}$ & $0.04507_{5}$ & $\mathbf{0.04259_{1}}$ & $0.04352_{3}$ \\
c10\_densenet40      & $0.11430_{6}$ & $0.03977_{5}$ & $\mathbf{0.03412_{1}}$ & $0.03687_{4}$ & $0.03609_{2}$ & $0.03678_{3}$ \\
c10\_lenet5          & $0.19849_{6}$ & $0.17141_{5}$ & $\mathbf{0.05185_{1}}$ & $0.05891_{4}$ & $0.05705_{2}$ & $0.05862_{3}$ \\
c10\_resnet110       & $0.09846_{6}$ & $0.04344_{5}$ & $\mathbf{0.03206_{1}}$ & $0.03950_{4}$ & $0.03653_{3}$ & $0.03615_{2}$ \\
c10\_resnet110\_SD    & $0.08647_{6}$ & $0.03071_{4}$ & $0.03148_{5}$ & $0.02937_{3}$ & $0.02713_{2}$ & $\mathbf{0.02681_{1}}$ \\
c10\_resnet\_wide32   & $0.09530_{6}$ & $0.04775_{5}$ & $0.03153_{3}$ & $0.02947_{2}$ & $0.03164_{4}$ & $\mathbf{0.02921_{1}}$ \\
\hline
c100\_convnet        & $0.42414_{6}$ & $\mathbf{0.22683_{1}}$ & $0.40185_{5}$ & $0.24041_{3}$ & $0.24063_{4}$ & $0.23958_{2}$ \\
c100\_densenet40     & $0.47026_{6}$ & $0.18664_{2}$ & $0.32985_{5}$ & $\mathbf{0.18630_{1}}$ & $0.18879_{3}$ & $0.19112_{4}$ \\
c100\_lenet5         & $0.47264_{6}$ & $0.38481_{5}$ & $0.21865_{4}$ & $0.21348_{2}$ & $\mathbf{0.20293_{1}}$ & $0.21379_{3}$ \\
c100\_resnet110      & $0.41644_{6}$ & $0.20095_{3}$ & $0.35885_{5}$ & $\mathbf{0.18639_{1}}$ & $0.19442_{2}$ & $0.20270_{4}$ \\
c100\_resnet110\_SD   & $0.37518_{6}$ & $0.20310_{4}$ & $0.37346_{5}$ & $0.18895_{3}$ & $\mathbf{0.17015_{1}}$ & $0.18552_{2}$ \\
c100\_resnet\_wide32  & $0.42027_{6}$ & $0.18573_{4}$ & $0.33258_{5}$ & $0.17951_{2}$ & $\mathbf{0.17082_{1}}$ & $0.17966_{3}$ \\
\hline
SVHN\_convnet        & $0.15935_{6}$ & $0.03830_{4}$ & $0.04276_{5}$ & $0.02638_{2}$ & $\mathbf{0.02480_{1}}$ & $0.02750_{3}$ \\
SVHN\_resnet152\_SD   & $0.01940_{2}$ & $\mathbf{0.01849_{1}}$ & $0.02184_{6}$ & $0.01988_{3}$ & $0.02120_{5}$ & $0.02088_{4}$ \\
\midrule 
 avg rank & 5.71 & 3.71 & 3.79 & 2.79 & 2.29 & 2.71\\
\bottomrule
\end{tabular}
\normalsize
\label{table:res:dnn:ece_cw}
\hfill
\end{table}
\begin{table}
\centering
\captionof{table}{Scores and ranking of calibration methods for \textbf{MCE}.}
\tiny
\begin{tabular}{l|c|ccc|cc}
\toprule
 &        & \multicolumn{3}{c}{general-purpose calibrators} & \multicolumn{2}{c}{calibrators using logits}\\ 
 &  Uncal &  TempS &  Dir-L2 &  Dir-ODIR &  VecS & MS-ODIR \\
\midrule
c10\_convnet         & $0.59173_{6}$ & $0.23150_{4}$ & $0.12432_{2}$ & $0.24830_{5}$ & $0.12831_{3}$ & $\mathbf{0.07621_{1}}$ \\
c10\_densenet40      & $0.33396_{6}$ & $0.09929_{2}$ & $0.11679_{4}$ & $\mathbf{0.07858_{1}}$ & $0.12046_{5}$ & $0.11297_{3}$ \\
c10\_lenet5          & $0.11281_{6}$ & $0.09158_{3}$ & $\mathbf{0.05112_{1}}$ & $0.09009_{2}$ & $0.09996_{4}$ & $0.10061_{5}$ \\
c10\_resnet110       & $0.29580_{6}$ & $0.23639_{4}$ & $0.24405_{5}$ & $\mathbf{0.08331_{1}}$ & $0.13130_{2}$ & $0.22678_{3}$ \\
c10\_resnet110\_SD    & $0.32484_{6}$ & $\mathbf{0.07823_{1}}$ & $0.23064_{5}$ & $0.13309_{3}$ & $0.14276_{4}$ & $0.08422_{2}$ \\
c10\_resnet\_wide32   & $0.37215_{4}$ & $\mathbf{0.07060_{1}}$ & $0.49283_{6}$ & $0.41567_{5}$ & $0.26539_{3}$ & $0.26372_{2}$ \\
\hline
c100\_convnet        & $0.36391_{6}$ & $0.13689_{4}$ & $0.23333_{5}$ & $0.07235_{2}$ & $\mathbf{0.07043_{1}}$ & $0.08171_{3}$ \\
c100\_densenet40     & $0.45400_{6}$ & $\mathbf{0.02213_{1}}$ & $0.19748_{5}$ & $0.04074_{2}$ & $0.04293_{3}$ & $0.05004_{4}$ \\
c100\_lenet5         & $0.20097_{6}$ & $0.05836_{3}$ & $\mathbf{0.05678_{1}}$ & $0.06774_{4}$ & $0.05749_{2}$ & $0.08939_{5}$ \\
c100\_resnet110      & $0.39882_{6}$ & $0.07099_{2}$ & $0.20732_{5}$ & $0.08026_{4}$ & $0.07354_{3}$ & $\mathbf{0.06678_{1}}$ \\
c100\_resnet110\_SD   & $0.48291_{6}$ & $0.04099_{2}$ & $0.24578_{5}$ & $0.05979_{3}$ & $\mathbf{0.04038_{1}}$ & $0.06612_{4}$ \\
c100\_resnet\_wide32  & $0.45639_{6}$ & $\mathbf{0.03606_{1}}$ & $0.19370_{5}$ & $0.05521_{2}$ & $0.06605_{4}$ & $0.06468_{3}$ \\
\hline
SVHN\_convnet        & $0.30011_{5}$ & $0.40691_{6}$ & $\mathbf{0.16154_{1}}$ & $0.18458_{3}$ & $0.16312_{2}$ & $0.18588_{4}$ \\
SVHN\_resnet152\_SD   & $0.25032_{5}$ & $\mathbf{0.18244_{1}}$ & $0.23895_{4}$ & $0.19649_{2}$ & $0.23092_{3}$ & $0.80082_{6}$ \\
\midrule 
 avg rank & 5.71 & 2.5 & 3.86 & 2.79 & 2.86 & 3.29\\
\bottomrule
\end{tabular}
\normalsize
\label{table:res:dnn:mce}
\hfill
\end{table}
\begin{table}
\centering
\captionof{table}{Scores and ranking of calibration methods for \textbf{error rate (\%)}.}
\tiny
\begin{tabular}{l|c|ccc|cc}
\toprule
 &        & \multicolumn{3}{c}{general-purpose calibrators} & \multicolumn{2}{c}{calibrators using logits}\\ 
 &  Uncal &  TempS &  Dir-L2 &  Dir-ODIR &  VecS & MS-ODIR \\
\midrule
c10\_convnet         & $6.18000_{2}$ & $6.18000_{2}$ & $6.38000_{6}$ & $\mathbf{6.12000_{1}}$ & $6.36000_{5}$ & $6.32000_{4}$ \\
c10\_densenet40      & $7.58000_{5}$ & $7.58000_{5}$ & $\mathbf{7.49000_{1}}$ & $7.53000_{4}$ & $7.52000_{3}$ & $7.50000_{2}$ \\
c10\_lenet5          & $27.26000_{5}$ & $27.26000_{5}$ & $\mathbf{25.25000_{1}}$ & $25.44000_{2}$ & $25.49000_{3}$ & $25.50000_{4}$ \\
c10\_resnet110       & $6.44000_{1}$ & $6.44000_{1}$ & $6.54000_{6}$ & $6.49000_{4}$ & $6.47000_{3}$ & $6.49000_{4}$ \\
c10\_resnet110\_SD    & $5.96000_{5}$ & $5.96000_{5}$ & $5.90000_{4}$ & $\mathbf{5.77000_{1}}$ & $5.83000_{3}$ & $5.81000_{2}$ \\
c10\_resnet\_wide32   & $6.07000_{5}$ & $6.07000_{5}$ & $5.94000_{4}$ & $5.76000_{2}$ & $\mathbf{5.74000_{1}}$ & $5.81000_{3}$ \\
\hline
c100\_convnet        & $26.12000_{1}$ & $26.12000_{1}$ & $30.96000_{6}$ & $26.22000_{3}$ & $26.56000_{4}$ & $26.60000_{5}$ \\
c100\_densenet40     & $30.00000_{3}$ & $30.00000_{3}$ & $33.48000_{6}$ & $29.87000_{2}$ & $30.16000_{5}$ & $\mathbf{29.61000_{1}}$ \\
c100\_lenet5         & $66.41000_{5}$ & $66.41000_{5}$ & $65.97000_{4}$ & $62.53000_{2}$ & $63.59000_{3}$ & $\mathbf{62.44000_{1}}$ \\
c100\_resnet110      & $28.52000_{4}$ & $28.52000_{4}$ & $30.04000_{6}$ & $\mathbf{28.36000_{1}}$ & $28.40000_{2}$ & $28.45000_{3}$ \\
c100\_resnet110\_SD   & $27.17000_{4}$ & $27.17000_{4}$ & $31.43000_{6}$ & $26.96000_{3}$ & $26.50000_{2}$ & $\mathbf{26.42000_{1}}$ \\
c100\_resnet\_wide32  & $26.18000_{4}$ & $26.18000_{4}$ & $27.69000_{6}$ & $26.07000_{2}$ & $26.08000_{3}$ & $\mathbf{26.06000_{1}}$ \\
\hline
SVHN\_convnet        & $3.82750_{5}$ & $3.82750_{5}$ & $3.42811_{3}$ & $\mathbf{3.34728_{1}}$ & $3.51845_{4}$ & $3.37105_{2}$ \\
SVHN\_resnet152\_SD   & $1.84773_{2}$ & $1.84773_{2}$ & $1.90535_{6}$ & $\mathbf{1.80547_{1}}$ & $1.87462_{4}$ & $1.87462_{4}$ \\
\midrule 
 avg rank & 4.14 & 4.14 & 4.64 & 2.11 & 3.25 & 2.71\\
\bottomrule
\end{tabular}
\normalsize
\label{table:res:dnn:error}
\hfill
\end{table}
\begin{table}
\centering
\captionof{table}{Scores and ranking of calibration methods for \textbf{p-confidence-ECE}.}
\tiny
\begin{tabular}{l|c|ccc|cc}
\toprule
 &        & \multicolumn{3}{c}{general-purpose calibrators} & \multicolumn{2}{c}{calibrators using logits}\\ 
 &  Uncal &  TempS &  Dir-L2 &  Dir-ODIR &  VecS & MS-ODIR \\
\midrule
c10\_convnet         & $0.0_{6}$ & $0.032_{4}$ & $0.363_{2}$ & $0.019_{5}$ & $\mathbf{0.461_{1}}$ & $0.052_{3}$ \\
c10\_densenet40      & $0.0_{4}$ & $0.002_{2}$ & $\mathbf{0.525_{1}}$ & $0.000_{4}$ & $0.000_{4}$ & $0.000_{4}$ \\
c10\_lenet5          & $0.0_{6}$ & $0.008_{5}$ & $0.027_{4}$ & $0.084_{3}$ & $\mathbf{0.155_{1}}$ & $0.144_{2}$ \\
c10\_resnet110       & $0.0_{4}$ & $0.000_{4}$ & $\mathbf{0.246_{1}}$ & $0.000_{4}$ & $0.000_{4}$ & $0.000_{4}$ \\
c10\_resnet110\_SD    & $0.0_{6}$ & $0.105_{4}$ & $\mathbf{0.179_{1}}$ & $0.003_{5}$ & $0.114_{3}$ & $0.124_{2}$ \\
c10\_resnet\_wide32   & $0.0_{6}$ & $0.017_{3}$ & $\mathbf{0.281_{1}}$ & $0.005_{4}$ & $0.005_{4}$ & $0.076_{2}$ \\
\hline
c100\_convnet        & $0.0_{5}$ & $\mathbf{0.174_{1}}$ & $0.000_{5}$ & $0.049_{2}$ & $0.021_{3}$ & $0.000_{5}$ \\
c100\_densenet40     & $0.0_{5}$ & $\mathbf{0.817_{1}}$ & $0.000_{5}$ & $0.617_{2}$ & $0.238_{3}$ & $0.000_{5}$ \\
c100\_lenet5         & $0.0_{6}$ & $0.153_{4}$ & $0.217_{3}$ & $0.001_{5}$ & $0.395_{2}$ & $\mathbf{0.422_{1}}$ \\
c100\_resnet110      & $0.0_{3}$ & $0.000_{3}$ & $0.000_{3}$ & $0.000_{3}$ & $0.000_{3}$ & $0.000_{3}$ \\
c100\_resnet110\_SD   & $0.0_{4}$ & $0.009_{2}$ & $0.000_{4}$ & $0.000_{4}$ & $\mathbf{0.060_{1}}$ & $0.000_{4}$ \\
c100\_resnet\_wide32  & $0.0_{4}$ & $\mathbf{0.022_{1}}$ & $0.000_{4}$ & $0.000_{4}$ & $0.001_{2}$ & $0.000_{4}$ \\
\hline
mnist\_mlp           & $0.0_{6}$ & $0.616_{3}$ & $\mathbf{0.948_{1}}$ & $0.486_{4}$ & $0.455_{5}$ & $0.677_{2}$ \\
SVHN\_convnet        & $0.0_{3}$ & $0.000_{3}$ & $0.000_{3}$ & $0.000_{3}$ & $0.000_{3}$ & $0.000_{3}$ \\
SVHN\_resnet152\_SD   & $0.0_{3}$ & $0.000_{3}$ & $0.000_{3}$ & $0.000_{3}$ & $0.000_{3}$ & $0.000_{3}$ \\
\midrule 
 avg rank & 4.93 & 2.97 & 2.9 & 3.9 & 2.97 & 3.33\\
\bottomrule
\end{tabular}
\normalsize
\label{table:res:dnn:pece}
\hfill
\end{table}
\begin{table}
\centering
\captionof{table}{Scores and ranking of calibration methods for \textbf{p-classwise-ECE}.}
\tiny
\begin{tabular}{l|c|ccc|cc}
\toprule
 &        & \multicolumn{3}{c}{general-purpose calibrators} & \multicolumn{2}{c}{calibrators using logits}\\ 
 &  Uncal &  TempS &  Dir-L2 &  Dir-ODIR &  VecS & MS-ODIR \\
\midrule
c10\_convnet         & $0.0_{6}$ & $0.0104_{4}$ & $\mathbf{0.1276_{1}}$ & $0.0038_{5}$ & $0.0340_{2}$ & $0.0114_{3}$ \\
c10\_densenet40      & $0.0_{4}$ & $0.0000_{4}$ & $\mathbf{0.0093_{1}}$ & $0.0000_{4}$ & $0.0000_{4}$ & $0.0000_{4}$ \\
c10\_lenet5          & $0.0_{5}$ & $0.0000_{5}$ & $\mathbf{0.6014_{1}}$ & $0.0390_{4}$ & $0.1230_{2}$ & $0.0501_{3}$ \\
c10\_resnet110       & $0.0_{4}$ & $0.0000_{4}$ & $\mathbf{0.0088_{1}}$ & $0.0000_{4}$ & $0.0000_{4}$ & $0.0000_{4}$ \\
c10\_resnet110\_SD    & $0.0_{6}$ & $0.0058_{5}$ & $0.0105_{3}$ & $0.0077_{4}$ & $0.1816_{2}$ & $\mathbf{0.2196_{1}}$ \\
c10\_resnet\_wide32   & $0.0_{5}$ & $0.0000_{5}$ & $0.0096_{3}$ & $0.0158_{2}$ & $0.0006_{4}$ & $\mathbf{0.0249_{1}}$ \\
\hline
c100\_convnet        & $0.0_{4}$ & $\mathbf{0.0770_{1}}$ & $0.0000_{4}$ & $0.0000_{4}$ & $0.0000_{4}$ & $0.0000_{4}$ \\
c100\_densenet40     & $0.0_{3}$ & $0.0000_{3}$ & $0.0000_{3}$ & $0.0000_{3}$ & $0.0000_{3}$ & $0.0000_{3}$ \\
c100\_lenet5         & $0.0_{3}$ & $0.0000_{3}$ & $0.0000_{3}$ & $0.0000_{3}$ & $0.0000_{3}$ & $0.0000_{3}$ \\
c100\_resnet110      & $0.0_{3}$ & $0.0000_{3}$ & $0.0000_{3}$ & $0.0000_{3}$ & $0.0000_{3}$ & $0.0000_{3}$ \\
c100\_resnet110\_SD   & $0.0_{3}$ & $0.0000_{3}$ & $0.0000_{3}$ & $0.0000_{3}$ & $0.0000_{3}$ & $0.0000_{3}$ \\
c100\_resnet\_wide32  & $0.0_{3}$ & $0.0000_{3}$ & $0.0000_{3}$ & $0.0000_{3}$ & $0.0000_{3}$ & $0.0000_{3}$ \\
\hline
mnist\_mlp           & $0.0_{6}$ & $\mathbf{0.5669_{1}}$ & $0.0842_{3}$ & $0.0022_{5}$ & $0.0280_{4}$ & $0.1178_{2}$ \\
SVHN\_convnet        & $0.0_{3}$ & $0.0000_{3}$ & $0.0000_{3}$ & $0.0000_{3}$ & $0.0000_{3}$ & $0.0000_{3}$ \\
SVHN\_resnet152\_SD   & $0.0_{3}$ & $0.0000_{3}$ & $0.0000_{3}$ & $0.0000_{3}$ & $0.0000_{3}$ & $0.0000_{3}$ \\
\midrule 
 avg rank & 4.37 & 3.63 & 2.77 & 3.77 & 3.37 & 3.1\\
\bottomrule
\end{tabular}
\normalsize
\label{table:res:dnn:pece_cw}
\hfill
\end{table}

\begin{table}
\centering
\captionof{table}{Comparison of MS-ODIR and vector scaling for \textbf{log-loss}.}
\tiny
\begin{tabular}{l|ccc|ccc|ccc}
\toprule
 & \multicolumn{3}{c}{Replication 1} & \multicolumn{3}{c}{Replication 2} & \multicolumn{3}{c}{Replication 3}\\ 
 &  VecS &  MS-ODIR & MS-ODIR-zero &  VecS &  MS-ODIR & MS-ODIR-zero &  VecS &  MS-ODIR & MS-ODIR-zero\\
\midrule
c10\_convnet         &  $0.19774$ & $\mathbf{0.19632}$ & $0.19632$ & $-$ & $-$ & $-$ & $-$ & $-$ & $-$ \\
c10\_densenet40      & $\mathbf{0.22240}$ & $0.22240$ & $0.22240$ & $0.21316$ & $\mathbf{0.21186}$ & $0.21366$ & $0.21350$ & $\mathbf{0.21325}$ & $0.21327$ \\
c10\_lenet5          & $0.74688$ & $\mathbf{0.74262}$ & $0.74830$ & $0.69392$ & $\mathbf{0.69287}$ & $0.69335$ & $\mathbf{0.67955}$ & $0.67974$ & $0.68127$ \\
c10\_resnet110       & $0.20624$ & $\mathbf{0.20375}$ & $0.20537$ & $0.20064$ & $\mathbf{0.19803}$ & $0.20040$ & $0.19655$ & $\mathbf{0.19536}$ & $0.19739$ \\
c10\_resnet110\_SD    & $0.17545$ & $\mathbf{0.17537}$ & $0.17539$ & $0.18123$ & $\mathbf{0.18094}$ & $0.18097$ & $\mathbf{0.17799}$ & $0.17829$ & $0.17829$ \\
c10\_resnet\_wide32   & $0.18274$ & $\mathbf{0.18165}$ & $0.18302$ & $0.18522$ & $\mathbf{0.18364}$ & $0.18546$ & $0.17431$ & $\mathbf{0.17274}$ & $0.17448$ \\
\hline
c100\_convnet        & $0.96311$ & $\mathbf{0.96141}$ & $0.96149$ & $-$ & $-$ & $-$ & $-$ & $-$ & $-$ \\
c100\_densenet40     & $1.05714$ & $\mathbf{1.05084}$ & $1.06804$ & $1.06366$ & $\mathbf{1.05456}$ & $1.07107$ & $1.07704$ & $\mathbf{1.06918}$ & $1.08559$ \\
c100\_lenet5         & $2.51695$ & $\mathbf{2.48670}$ & $2.57932$ & $2.21546$ & $\mathbf{2.20054}$ & $2.22360$ & $2.28054$ & $\mathbf{2.27887}$ & $2.29485$ \\
c100\_resnet110      & $1.08824$ & $\mathbf{1.07370}$ & $1.10137$ & $1.09066$ & $\mathbf{1.08267}$ & $1.11116$ & $1.11977$ & $\mathbf{1.10672}$ & $1.13900$ \\
c100\_resnet110\_SD   & $\mathbf{0.92275}$ & $0.92731$ & $0.92730$ & $0.87758$ & $\mathbf{0.87698}$ & $0.87701$ & $\mathbf{0.88523}$ & $0.88731$ & $0.88727$ \\
c100\_resnet\_wide32  & $0.93724$ & $\mathbf{0.93273}$ & $0.94060$ & $0.93291$ & $\mathbf{0.92531}$ & $0.94854$ & $0.93183$ & $\mathbf{0.92439}$ & $0.94568$ \\
\hline
SVHN\_convnet        & $0.14392$ & $\mathbf{0.13760}$ & $0.14507$ & $-$ & $-$ & $-$ & $-$ & $-$ & $-$ \\
SVHN\_resnet152\_SD   & $0.08131$ & $\mathbf{0.08100}$ & $0.08100$ & $0.12728$ & $\mathbf{0.12723}$ & $0.12639$ & $0.12559$ & $\mathbf{0.12453}$ & $0.12381$ \\
\bottomrule
\end{tabular}
\label{table:res:dnn:ms_vs_vecs}
\hfill
\end{table}